\documentclass[11pt]{article}
\usepackage[utf8]{inputenc}
\usepackage{amsmath}
\usepackage{amsfonts}
\usepackage{amssymb}
\usepackage{amsthm}
\usepackage{subcaption}
\usepackage{graphicx}

\usepackage{mathtools}
\usepackage{enumerate}
\usepackage{enumitem}
\usepackage[margin=1in]{geometry}
\usepackage{url}
\usepackage{algorithm}
\usepackage{algorithmic}
\usepackage{commath}
\usepackage[sort]{natbib}
\usepackage{hyperref}
\usepackage{booktabs}
\usepackage{comment}
\usepackage{xcolor}

\usepackage{tikz-network}
\usepackage{tikz}
\usepackage{tikz-3dplot}

\numberwithin{equation}{section}

\usepackage{multirow}

\usepackage{mathfont}

\newtheorem{theorem}{Theorem}[section]

\newtheorem{lemma}[theorem]{Lemma}

\newtheorem{assumption}[theorem]{Assumption}
\newtheorem{definition}[theorem]{Definition}

\newcommand{\Dhat}{\hat{D}}

\newcommand{\reps}{r_{\varepsilon}}
\newcommand{\repst}{r_{\varepsilon_t}}
\newcommand{\phieps}{\phi_{\varepsilon}}
\newcommand{\phiepst}{\phi_{\varepsilon_t}}

\newcommand{\epstol}{\varepsilon_{\mathrm{tol}}}

\newcommand{\inprd}[2]{\langle #1, #2 \rangle}

\title{Transferable Optimization Network for Cross-Domain Image Reconstruction}
\author{Yunmei Chen\thanks{Department of Mathematics, University of Florida, Gainesville, FL 32611, USA. Email: \url{yun@ufl.edu}.} \and Chi Ding\thanks{Department of Mathematics, University of Florida, Gainesville, FL 32611, USA. Email: \url{chiding.uf@gmail.com}.} \and Xiaojing Ye\thanks{Department of Mathematics and Statistics, Georgia State University, Atlanta, GA 30303, USA. Email: \url{xye@gsu.edu}.}}
\date{}

\begin{document}

\maketitle

\begin{abstract}
We develop a novel transfer learning framework to tackle the challenge of limited training data in image reconstruction problems. The proposed framework consists of two training steps, both of which are formed as bi-level optimizations. In the first step, we train a powerful universal feature-extractor that is capable of learning important knowledge from large, heterogeneous data sets in various domains. In the second step, we train a task-specific domain-adapter for a new target domain or task with only a limited amount of data available for training.  Then the composition of the adapter and the universal feature-extractor effectively explores feature which serve as an important component of image regularization for the new domains, and this leads to high-quality reconstruction despite the data limitation issue. We apply this framework to reconstruct under-sampled MR images with limited data by using a collection of diverse data samples from different domains, such as images of other anatomies, measurements of various sampling ratios, and even different image modalities, including natural images. Experimental results demonstrate a promising transfer learning capability of the proposed method.

\bigskip

\noindent
\textbf{Key words.} Transfer learning, Bi-level optimization, Universal feature-extractor, Task-specific adapters

\bigskip

\noindent
\textbf{MSC codes.} 68Q25, 68U10, 90C26

\end{abstract}

\section{Introduction}
\label{sec:intro}

Deep learning (DL) \cite{goodfellow2016deep} has demonstrated impressive performance in a variety of real-world applications during the past decade. 
It is considered as a promising approach to general artificial intelligence (AI).
However, most existing DL methods are extremely data demanding \cite{simonyan2014very, ahishakiye2021survey, fabian2022humus-net, schlemper2018deep} and require training and testing data to follow the same probability distribution to attain the desired quality of the solution. 
This issue has notoriously hindered the application of deep learning to a broad range of real-world problems where it is infeasible, difficult, or expensive to obtain a sufficient amount of training data. 
Moreover, real-world data sets are often collected from different sources and exhibit substantial heterogeneity with inconsistent probability distributions in practice \cite{bao2023recent}. 
Therefore, a core topic in DL research is to learn knowledge from source domains with sufficient amounts of data, and leverage such knowledge to new target domains with limited data. 
Transfer learning (TL) is one of the most promising approaches to address this problem \cite{pan2010survey,neyshabur2020what,tan2018survey}.

A variety of TL approaches have been proposed in the computer science community in the past decades \cite{pan2010survey,tan2018survey}. 
However, the performance of existing approaches often degrades when the data distributions in the source and target domains differ significantly, or the data in target domain are very limited, especially for large-scale problems. 
Therefore, recent advances of TL emphasize more on efficiency, scalability, and cross-domain adaptability. 
%
%
Hence, the success of a TL method relies on its capability of handling related issues, including limited data, interpretability, privacy concerns, and computational efficiency for real-world applications.

This work aims at developing a novel TL approach to address the challenge mentioned above. Our approach intends to combine the merits of variational modeling, bi-level optimization, and the deep unrolling network technique. 
First, variational model is a classic mathematical formulation to solve inverse problems, such as image reconstruction, by combining both data consistency and prior knowledge information as regularization to eliminate the ill-posedness issue of inverse problems and improve solution quality. 
Second, bi-level optimization has received significant interests in learning regularization for inverse problems in the past few years. 
Lastly, unrolling networks have proven to be more effective than manually designed networks as they integrate consistency with data into their network architectures. 
While unrolling networks have state-of-the-art performance, their adoption in solving TL problems is still not explored in the literature. 
A main feature of our proposed approach is to establish a unified framework of employing unrolling networks with bi-level optimization and variational model to solve TL problems.

Our approach constructs two types of nonlinear functions through specially designed bi-level optimization schemes: the first one is a powerful feature extraction mapping that learns the ability to extract the most generalizable knowledge from diverse data sets with large amounts of samples, and the second one is a set of small-scale adaptation mappings that, when composed with the feature extraction mapping, can form effective mappings that adapt the knowledge to new tasks where only limited data are available. 
We call these two maps the \emph{feature-extractor} and \emph{adapters}, respectively. 
The main contributions of this work are summarized below.
\begin{enumerate}
    \item Our approach advocates for training a powerful feature-extractor that can be adopted to many different domains with limited data. The feature-extractor is trained via a properly designed bi-level optimization scheme to handle diverse and heterogeneous data sets.
    
    \item To tackle the issue of limited data in new domains or tasks, we train small-scale and tasks-specific adapters. The composition of such an adapter and the trained feature-extractor is capable of achieving high solution quality. The training of these adapters is also performed by solving specified bi-level optimization schemes.

    \item We provide several techniques to improve the empirical performance of the proposed TL method. These include a data augmentation method for improved information exploration in limited-data scenarios and initialization methods to mitigate the non-convexity issue in network training.
    
    \item We conduct a variety of numerical experiments on different TL problems. In this work, we focus on TL in the application of magnetic resonance (MR) image reconstruction, where most of the aforementioned challenges in TL will be considered and tested. 

\end{enumerate}

\paragraph{Paper organization}
In the remainder of this paper, we first provide a brief review of TL methods for image reconstruction developed in the past few decades in Section \ref{sec:related_work}. 
Our proposed TL method that uses variational formulation integrated in bi-level optimization schemes is described in details in Section \ref{sec:method}.
A numerical algorithm with rigorous convergence analysis, including computation complexity, for solving the lower-level optimization problems is provided in Section \ref{sec:elda}.
Numerical experiments and results to evaluate the proposed method are presented in Section \ref{sec:experiment}.
Section \ref{sec:conclusion} concludes the paper.

\paragraph{Notations}
We use $\langle \cdot, \cdot \rangle$ and $\|\cdot \|$ to denote the standard inner product and 2-norm, respectively, in real and complex Euclidean spaces. For any matrix $\zbf \in \mathbb{C}^{d_1 \times d_2}$, its (2,1)-norm is denoted by $\|\zbf\|_{2,1}:=\sum_{k=1}^{d_1} \|\zbf_k\|$, where $\zbf_k \in \mathbb{C}^{d_2}$ is the $i$th row of $\zbf$. We let $[N]:=\{1,\dots,N\}$ for any $N \in \mathbb{N}$. We let $\lfloor a \rfloor$ denote the floor of $a \in \Rbb$, i.e., the maximum integer that is smaller than or equal to $a$. If $S$ is a subset in $\mathbb{C}^{n}$, then $c+S := \{c + s\ |\ s \in S\}$ for any $c \in \mathbb{C}^{n}$. For a function $\gbf$ parameterized as a deep neural network, by solving $\gbf$ from a minimization problem we mean to find the optimal network parameters of $\gbf$, and therefore will not introduce an additional notation to denote its parameters.

\section{Related work}
\label{sec:related_work}

TL is closely related to but different from multitask learning (MTL) and meta-learning (ML) \cite{vettoruzzo2024advances}. 
The goal of TL is to leverage representations learned from one or more source tasks to solve a target task.
 On the contrary, MTL aims at improving performance on a set of tasks (or domains) by learning them simultaneously but does not consider transferring knowledge to new tasks, while the goal of ML is to learn the ability to transfer knowledge from past tasks to solve new tasks \cite{vettoruzzo2024advances}. 
A common critical challenge to these three fields is data scarcity particularly in target domains or tasks, and mismatched training and test data distributions.
Since our approach falls into the class of TL, we will focus on the literature of TL in the remainder of this section.

{TL has effectively addressed the aforementioned challenges by its ability of leveraging knowledge gained from one task or domain and applying to a different but related one, even if only limited labeled data are available or two domains differ in feature space or data distribution \cite{pan2010survey} in variety of applications; a number of surveys have been conducted on TL to outline and analyze various TL techniques from different aspects, e.g. \cite{tan2018survey, Niu2020_Decade, Zhuang2021_Comprehensive, masoume2025review}}.

Since the emergence of DL, recent advances of TL usually perform network training in source domains where large data sets are available, and knowledge captured by the trained network is transferred to different target domains \cite{donahue2014decaf, yosinski2014transferable}. The authors of \cite{yosinski2014transferable} point out the relationship between network structure and transferability, and demonstrate that some network modules may influence the transferability but not in-domain accuracy. Moreover, they provide descriptions of transferrable features and network types that are more suitable for transfer.

The majority of deep TL are implemented through the means of fine-tuning \cite{vettoruzzo2024advances}. For example, certain network architectures (e.g., VGG \cite{simonyan2014very} and ResNet \cite{he2016deep}) are shown to be effective when pre-trained on large data sets (such as ImageNet \cite{deng2009imagenet}) and fine-tuned for new tasks. 
In \cite{oquab2014learning}, it is shown that the front layers of a convolutional neural network (CNN) pre-trained on ImageNet can be used as intermediate representations, and these representations can be transferred to other visual recognition tasks.
In \cite{chang2017unsupervised}, a multi-scale convolutional sparse coding method is proposed to learn the convolution filter bank and the representation coefficients jointly on the source domains and fine-tune the based knowledge to new domains. 
The work \cite{ABUDALFA2025transfer} uses pre-trained computer-vision (CV) models (ResNet, VGG, and AlexNet)  for a classification of fall and non-fall events. Then the CV model is refined by using simulated data representing various fall scenarios to perform fall detection from UWB radar data.
In \cite{park2025efficient}, pre-trained unrolling networks are fine-tuned by supervised learning, schooling, and unsupervised learning; and empirical results are shown to demonstrate the performance of these three different methods.

{ Recently, parameter-efficient fine-tuning (PEFT) methods, such as Low-Rank Adaptation (LoRA) \cite{zeng2024expressive}, and adapters have been applied to make transfer learning more cost efficient}.
In \cite{rebuffi2017learning}, the authors introduce residual adapter modules into a ResNet, allowing the model to share most parameters across domains while training only a small number of domain-specific components.This approach avoids the forgetting problem and obtains comparable to full fine-tuning.  {In \cite{shen2021partial, lee2022surgical, chen2024conv}, it is shown that transferring partial knowledge by selectively freezing or fine-tuning a subset of layers matches or outperforms commonly used fine-tuning approaches.} 
%
%
%
In \cite{shen2024med-tuning} a med-tuning framework realizes PEFT for medical volumetric segmentation, in which an efficient plug-and-play module named Med-Adapter is applied for task-specific feature extraction. 
{The work \cite{bafghi2025fine}
proposes an extension of PEFT method, using an indicator function to selectively activate LoRA blocks. The approach minimizes knowledge loss, retains its generalization strengths under domain shifts, and significantly reduces computational costs compared to traditional fine-tuning. }
Different from typical fine-tuning, the authors of \cite{dar2020transfer-learning} propose a network-based deep TL approach. A deep reconstruction network is trained using a loss function consisting of least squares and absolute errors in data fitting and the Tikhonov regularization of network parameters, and the trained network is used to obtain crude reference for regularization with data fitting to reconstruct the image associated to new measurement data.

{Besides parameter-efficient adaptation, recent TL methods focus more on multi-modal pre-training and cross-domain knowledge transfer for specialized tasks. Cross-domain TL methods apply to scenarios where the domains have different feature spaces, and this requires bridging different feature spaces or data distributions \cite{bao2023recent}. 
One of the approaches to handle heterogeneity of feature spaces and more related to the present work is incorporating domain-invariant features and domain-specific features from each domain. 
Extracting meaningful and universal feature from diverse data or task often leads to better feature representations and the quality of TL. Moreover, knowledge distillation techniques are used more often to help this process and reduce computational cost, especially for recent large-scale models, such as Vision-Language Models (VLMs) \cite{mansourian2025a}.
In \cite{girdhar2023one} the ``IMAGEBIND'' approach binds of  different modalities (images, text, audio, depth, thermal, and IMU data) in a joint embedding space by using only images to bind them together enabling  transfer to unseen modality combinations.}
The work \cite{liu2021universal} trains a universal network to effectively reconstruct high-quality MRI images from under-sampled data across various anatomies (e.g., brain, knee, abdomen).
To compensate for statistical shift and capture the knowledge specific to each data set, an anatomy-specific instance normalization (ASPIN) module is implemented to accompany the universal model for the reconstruction of multiple anatomies. {The base network captures shared knowledge, andASPIN modules are added for anatomy-specific adjustments. The universal models are trained by distilling knowledge \cite{hinton2015distilling} from pre-trained, anatomy-specific models. } For new anatomy data set the ASPIN parameters are trained with the base universal model parameters fixed. 
The work \cite{bian2021optimization-based} learns a universal sparse representation with task-specific weighting from diverse data sets as the regularization in a variational model. The deep neural network unfolds a learned descent-type algorithm to reconstruct multiple MR images of the same objects with the same contrast but different sampling ratios and masks.
{In a recent work \cite{gao2025knowledge}, a novel knowledge transfer methodology termed as Learning from Interactions (LFI) that
explicitly models visual understanding as an interactive process. The key idea is that capturing the dynamic interaction patterns encoded in pre-trained Vision-Language Models (VLMs) for efficient knowledge transfer to visual foundation models (VFMs). The interaction between VLMs and VFMs means understanding the world through analogous cognitive processes, despite of the final form of representation. By refining these representation-agnostic interactions, LFI facilitates cross-modal and cross-task knowledge transfer.}

\paragraph{Our contributions}
This work presents a novel TL approach that is vastly different from all existing ones in the following aspects. 
Existing TL approaches rely on various heuristics for network architecture design to transfer knowledge learned from source domains with large data sets to target domains where data is usually limited. 
In contrast, we propose the first TL approach that integrates classic variational modeling and nonsmooth nonconvex bi-level optimization in a deep learning framework for improved interpretability and rigorous convergence. 
Our approach adopts the idea of feature-extractor and adapter network modules in a learnable regularization for TL in image reconstruction, and demonstrates outstanding performance in knowledge transferability and parameter efficiency when compared to the state-of-the-art methods.

\section{Method}
\label{sec:method}

Our proposed TL method consists of two major training steps. In Step 1, we train the feature-extractor from large and heterogeneous data set. These data sets often contain multiple and diverse subsets of many samples, but they are collected from different task domains. 
For instance, each subset may contain a sufficient amount of MR images of a specific anatomical region, such as brain or knee. Along with the feature-extractor, one adapter for each of these subsets is also trained to ensure the feature-extractor indeed learns the universal information from the entire data set. This step is formulated as a bi-level optimization problem with the feature-extractor and all adapters as unknowns to be solved. 
The solution to such a lower-level optimization is a reconstructed image, whose deviation from its corresponding ground truth image is the loss function of the upper-level optimization.

In Step 2, we employ the feature-extractor trained in Step 1 and learn several small-scale adapters, each on a small data set collected for a new task. These data sets may be, for example, cardiac or prostate MR images. Step 2 can also be formulated as a bi-level optimization problem, but only these new adapters are trained.

In what follows, we assume $D := \{D_i: i \in [I]\}$ contains $I$ data subsets, and each $D_i$ has sufficient training samples for a specific domain or task. We train the feature-extractor, denoted by $\gbf$, using $D$ in Step 1. The adapter for each $D_i$ is denoted by $\hbf_i$, which is trained together with $\gbf$. 
We have another $J$ data sets $\Dhat := \{\Dhat_j: j \in [J]\}$, each $\Dhat_j$ only has a limited amount of samples, and they are used for training their corresponding adapters, denoted by $\hat{\hbf}_j$, in Step 2. Suppose $|D_i|=M$ and $|\Dhat_j|=N$, and there is often $N \ll M$.
For instance, we consider for simplicity $I=J=2$. Let $D=\{D_1,D_2\}$, where $D_1$ and $D_2$ are two data sets consisting of $M$ brain and knee MR images, respectively. 
Then we have $D_1=\{(\ybf_m^{(1)},\,\hat{\xbf}_{m}^{(1)}): m \in [M]\}$, where $\ybf_{m}^{(1)}$ is an under-sampled brain MR image measurement data in the Fourier space, also known as the $k$-space, and $\hat{\xbf}_{m}^{(1)}$ is the corresponding ground truth image. 
Here $\hat{\xbf}_{m}^{1}$ is a 2D gray-scale MR brain image of size (resolution) $\sqrt{n} \times \sqrt{n}$, which can be rearranged as a column vector in $\mathbb{C}^{n}$ by stacking its columns. 
The physical relation between $\ybf_{m}^{(1)}$ and $\hat{\xbf}_{m}^{(1)}$ is formulated as $\ybf_{m}^{(1)} = \Pbf \Fbf \hat{\xbf}_{m}^{(1)} + \epsilon$, where $\Fbf$ stands for the discrete Fourier transform matrix, $\Pbf$ is a binary matrix describing the sampling mask, and $\epsilon$ is some unknown Gaussian noise with zero mean. Then the data fidelity is given by
\begin{equation}
\label{eq:data-fidelity}
f(\hat{\xbf}_{m}^{(1)}; \ybf_{m}^{(1)}) = \frac{1}{2} \| \Pbf \Fbf \hat{\xbf}_{m}^{(1)} - \ybf_{m}^{(1)} \|^2,
\end{equation}
which is identical to the negative log-likelihood of $\ybf_{m}^{(1)}$ given $\hat{\xbf}_{m}^{(1)}$ up to a constant.
We assumed $\epsilon$ to be Gaussian noise which yields $f$ in \eqref{eq:data-fidelity}, but it can be easily adjusted according to other types of noise.
In this work, we just assume $f$ to be a differentiable and possibly non-convex function with an $L_f$-Lipschitz continuous gradient $\nabla f$ for some $L_f >0$. We remark that $\nabla f$ can be just locally Lipschitz and it does not affect the convergence properties of the algorithm we will develop later.

\paragraph{Step 1: Train the feature-extractor $\gbf$}
We parameterize $\gbf$ as a deep CNN whose structure in our implementation will be given in Section \ref{sec:experiment}. Along with $\gbf$, we will also obtain an adapter $\hbf_i$ for each data set $D_i$. These adapters are parameterized as small-size CNNs whose architectures will also be given later. 
We obtain $\gbf$ by solving the following bi-level optimization problem:
\begin{subequations}
\label{eq:solve-g}
\begin{align}
    \min_{\gbf, \{\hbf_i\}} \quad & \frac{1}{IM} \sum_{i=1}^{I} \sum_{m = 1}^{M} \Big\{ \|\xbf_m^{(i)}-\hat{\xbf}_m^{(i)}\|^2  - \alpha\, S(\xbf_m^{(i)},\hat{\xbf}_m^{(i)}) \Big\} \label{eq:solve-g-upper}
   \\
    \text{s.t.} \quad & \xbf_m^{(i)} = \argmin_{\xbf}\ \{ f(\xbf; \ybf_m^{(i)}) + \|\hbf_i(\gbf(\xbf))\|_{2,1} \},\quad \forall\, m \in [M], \ i \in [I], \label{eq:solve-g-lower}
\end{align}
\end{subequations}
where $\alpha > 0$ is a manually chosen weight parameter.
In \eqref{eq:solve-g-upper}, $S$ stands for some image similarity measure. In this work, we set $S$ to the Structural Similarity Index Measure (SSIM) \cite{wang2004image}, which is a means to evaluate the similarity between two images. SSIM has a range $[0,1]$, and the more two images are similar in local and global structures, the larger their SSIM value is. 
Notice that an adapter $\hbf_i$ is also learned for $D_i$.
Their role is to justify that $\gbf$ is a capable feature-extractor to extract basic features from the diverse data sets in $D$.

\paragraph{Step 2: Train the adapters $\{\hat{\hbf}_j: j \in [J]\}$}

We use a similar bi-level optimization model to learn the adapters $\hat{\hbf}_j$'s for the small data sets in $\Dhat$ as follows:
\begin{subequations}
\label{eq:solve-h}    
\begin{align}
    \min_{\hat{\hbf}_j} \quad & \frac{1}{N} \sum_{n = 1}^{N} \Big\{ \|\xbf_n^{(j)}-\hat{\xbf}_n^{(j)}\|^2
    - \alpha\, S(\xbf_n^{(j)},\hat{\xbf}_n^{(j)}) \Big\} \label{eq:solve-h-upper}
   \\
    \text{s.t.} \quad & \xbf_n^{(j)} = \argmin_{\xbf}\ \{ f(\xbf; \ybf_n^{(j)}) + \|\hat{\hbf}_j( \gbf (\xbf))\|_{2,1} \},\quad \forall\, n \in [N], \ j \in [J]. \label{eq:solve-h-lower}
\end{align}
\end{subequations}	
Notice that only these adapters $\hat{\hbf}_j$'s are learned while the feature-extractor $\gbf$ is fixed in \eqref{eq:solve-h}. These adapters have the same small architecture as the previous adapters $\hbf_i$'s, and the compositions of these adapters and $\gbf$ become effective feature mappings for the small data sets in $\Dhat$. 
The reason of using small networks for these adapters is that they can still be properly trained even with small data.

\paragraph{Testing}
Once we complete Steps 1 and 2 and obtain the feature-extractor $\gbf$ and all the adapters $\hat{\hbf}_j$'s, we can apply $\hat{\hbf}_j \circ \gbf$ in \eqref{eq:solve-h-lower} and reconstruct the image $\xbf$ from any new measurement $\ybf$ following the same distribution as those in $\Dhat_j$. 
The quality of this $\xbf$ will be much better than those produced by methods which only use the small data $\Dhat_j$ for training, due to the ability of $\gbf$ in learning important features trained by large and heterogeneous data set $D$ and the adaptation by $\hat{\hbf}_j$ in the new task on $\Dhat_j$.

As we can see, the key to accomplishing Steps 1 and 2 is an effective scheme for solving the bi-level optimization problems \eqref{eq:solve-g} and \eqref{eq:solve-h}, whose lower-level problems \eqref{eq:solve-g-lower} and \eqref{eq:solve-h-lower} are non-smooth and non-convex. In this work, we modify the efficient learnable descent algorithm (ELDA) \cite{zhang2024provably} to improve its computation complexity and apply it to solve these problems. Similar as ELDA, our method naturally induces unrolling networks such that $\gbf$, $\hbf_i$'s and $\hat{\hbf}_j$'s can be trained by the given data and the problems \eqref{eq:solve-g} and \eqref{eq:solve-h} are solved.

\section{Algorithm}
\label{sec:elda}

We introduce a modified version of ELDA and provide a comprehensive convergence analysis including its computation complexity to achieve an $\epstol$-optimal solution for any prescribed tolerance $\epstol>0$. 
This new ELDA version can be applied to solve general non-convex non-smooth optimization problems, but we restrict its usage in solving the lower-level optimization problems \eqref{eq:solve-g-lower} and \eqref{eq:solve-h-lower} with known $\{\gbf,\hbf_i,\hat{\hbf}_j\}$ in this work.
Compared to the original version of ELDA \cite{zhang2024provably}, the new version has an improved computation complexity by using a more properly designed descent condition \eqref{eq:v-condition} below. 
This condition yields a different range of step size for line search and reduce the theoretical computation complexity from $O(\epstol^{-4})$ to $O(\epstol^{-3})$, as shown in Theorem \ref{convergence-thm}.

In this work, we set $\gbf: \mathbb{C}^{n} \to \mathbb{C}^{m_1 \times m_2}$ to be a convolutional neural network (CNN) such that $\xbf$ is mapped to a 2D matrix (tensor) $\gbf(\xbf)$. Here $m_1$ is the number of pixels in $\gbf(\xbf)$, which depends on the choice of stride and padding of the CNN, and $m_2$ is the dimension of the feature vector at each pixel. 
For example, if $\gbf$ is set to be a CNN with stride 1, zero-padding and feature dimension 16, then $m_1=n$ and $m_2=16$. Similarly, all the adapters are also parameterized as a CNN with the same architecture but different parameters, and they yields mappings from $\mathbb{C}^{m_1\times m_2}$ to $\mathbb{C}^{d_1\times d_2}$. 
We use smooth functions with globally Lipschitz continuous gradients as the activation functions of these CNNs throughout this work.
Therefore the composition $\qbf := \hbf \circ \gbf: \mathbb{C}^{n} \to \mathbb{C}^{d_1 \times d_2}$ is smooth and $\nabla \qbf_k$ is Lipschitz continuous, where $\hbf$ can be any of the adapters, i.e., $\hbf_i$ or $\hat{\hbf}_j$, and $\qbf_k(\xbf) \in \mathbb{C}^{d_2}$ is the $k$th row of $\qbf(\xbf) \in \mathbb{C}^{d_1 \times d_2}$ for any $\xbf \in \mathbb{C}^{n}$.
Since the measurement data $\ybf$ is given in these problems, we drop $\ybf$ in this section for notation simplicity. 

Now both \eqref{eq:solve-g-lower} and \eqref{eq:solve-h-lower} can be written in a general form as:
\begin{equation}
\label{eq:formulation}
    \min_{\xbf}\ \{\ \phi(\xbf):=f(\xbf; \ybf)+r(\xbf) \ \},
\end{equation}
where the regularization is defined by
\begin{equation}
    \label{eq:r}
    r(\xbf) = \|\qbf(\xbf)\|_{2,1} = \sum_{k=1}^{d_1} \|\qbf_k(\xbf) \|.
\end{equation}
Notice that $r$ is non-smooth due to the $2$-norm and non-convex due to the flexible network architectures of $\gbf$ and $\hbf$.

\subsection{Modified efficient learnable descent algorithm}
\label{subsec:elda}

To tackle the non-smooth and non-convex $r$, we follow the original ELDA to mollify $r$ such that it becomes smooth and non-convex: for any $\varepsilon>0$ and $\xbf$, define
\begin{equation}
    \label{eq:smoothedR}
        r_\varepsilon(\xbf) := \sum_{k\in K_0^{\varepsilon}(\xbf)}\frac{1}{2\varepsilon}\norm{\qbf_k(\xbf)}^2 + \sum_{k \in K_1^{\varepsilon}(\xbf)}\Big( \norm{\qbf_k(\xbf)}-\frac \varepsilon 2\Big)
\end{equation}
where 
\begin{equation}
\label{eq:def-K01}
    K_0^{\varepsilon}(\xbf):=\{k\in \left[d_1\right]:\norm{\qbf_k(\xbf)}\leq \varepsilon\},\quad K_1^{\varepsilon}(\xbf): = \left[d_1\right] \setminus K_0^{\varepsilon}(\xbf).
\end{equation}
It is easy to see that
\begin{equation}\label{eq:r_eps_bound}
\reps(\xbf)\leq r(\xbf) \leq \reps(\xbf) +\frac{d_1\varepsilon}{2},\quad \forall\, \xbf \in \mathbb{C}^n.
\end{equation}
The objective function with this modified regularization $\reps$ is denoted by
\begin{equation}
    \label{eq:phi-eps}
    \phi_{\varepsilon}(\xbf) := f(\xbf) + \reps(\xbf).
\end{equation}
This mollification changes the original objective function $\phi$ to $\phi_{\varepsilon}$. However, we can address this issue by letting our algorithm automatically decrease $\varepsilon$ during iterations and prove its desired convergence properties in solving \eqref{eq:formulation} with the original $\phi$.

Now we employ the following modified proximal gradient descent algorithm
to solve the smoothed minimization problem \eqref{eq:phi-eps}:
\begin{subequations}
\label{eq:elda-subproblems}
\begin{align}
    \zbf_{t+1} & = \xbf_t - \alpha_t \nabla f(\xbf_t), \\
    \ubf_{t+1} & = \argmin_{\ubf} \Big\{ \repst(\zbf_{t+1}) + \langle \nabla \repst(\zbf_{t+1}),\ubf-\zbf_{t+1} \rangle + \Big(\frac{1}{2\gamma_t} + \frac{1}{2\alpha_t} \Big) \| \ubf - \zbf_{t+1}\|^2\Big\},\label{eq:elda-u-subproblem} 
\end{align}
\end{subequations}
where $t$ is the iteration number, $\alpha_t,\gamma_t>0$ are the step sizes, and $\varepsilon_t>0$ is the smoothing level in iteration $t$. The modification to the proximal gradient algorithm is  to replace the $\repst(\xbf)$ by its linear approximation at $\zbf_{t+1}$ plus a quadratic penalty $\frac{1}{2\gamma_t}\| \ubf - \zbf_{t+1}\|^2$ to match the architecture of residual leaning for better training.
We can easily see that \eqref{eq:elda-subproblems} reduces to the following updates with $\beta_t:=\frac{\alpha_t \gamma_t}{\alpha_t + \gamma_t}$:
\begin{subequations}
\label{eq:elda-update}
\begin{align}
    \zbf_{t+1} & = \xbf_t - \alpha_t \nabla f(\xbf_t), \label{eq:elda-z} \\
    \ubf_{t+1} & = \zbf_{t+1} - \beta_t \nabla \repst(\zbf_{t+1}). \label{eq:elda-u} 
\end{align}

\end{subequations}

Let $\eta_1,\eta_2,\eta_3>0$ be any three prescribed parameters for our algorithm, and we first check the following two conditions for $\ubf_{t+1}$:
\begin{subequations}
\label{eq:elda-condition}
\begin{align}
    \| \nabla \phiepst (\xbf_t) \| \  & \le \ \frac{1}{\eta_1} \| \ubf_{t+1} - \xbf_t \|, \label{eq:elda-condition1}\\
    \phiepst(\ubf_{t+1}) - \phiepst(\xbf_t) \  & \le - \frac{\eta_2}{2} \| \ubf_{t+1} - \xbf_t \|^2. \label{eq:elda-condition2}
\end{align}
\end{subequations}

If $\ubf_{t+1}$ satisfies the two conditions in \eqref{eq:elda-condition}, then we set $\xbf_{t+1}=\ubf_{t+1}$ and directly continue to the next iteration and compute \eqref{eq:elda-update}.
Otherwise, we compute another variable $\vbf$ as follows,
\begin{align}
    \label{eq:v}
    \vbf_{t+1}& =\argmin_v \Big\{ \phiepst(\xbf_t) + \inprd{\nabla \phiepst(\xbf_t)}{v-\xbf_t}+\frac{1}{2\bar{\alpha}_t}\norm{v-\xbf_t}^2 \Big\} \nonumber \\
    & = \xbf_t - \bar{\alpha}_t \nabla \phiepst(\xbf_t)
\end{align}
where $\bar{\alpha}_t = \bar{\alpha}$, and $\bar{\alpha}>0$ is a pre-selected and fixed parameter.
Then we check whether the following condition is satisfied:
\begin{equation}
    \label{eq:v-condition}
    \phiepst(\vbf_{t+1}) - \phiepst(\xbf_t) \le - \frac{\eta_3}{\varepsilon_{t}} \| \vbf_{t+1} - \xbf_t \|^2.
\end{equation}
If \eqref{eq:v-condition} is met, then we set $\xbf_{t+1}=\vbf_{t+1}$ and move on to the next iteration to compute \eqref{eq:elda-update}; Otherwise we will adopt the standard line search technique by reducing the step size $\bar{\alpha}_t$ in \eqref{eq:v} to $\rho \bar{\alpha}_t$ with some pre-selected parameter $\rho \in (0,1)$ until \eqref{eq:v-condition} is satisfied. 
It is known that such line search will always stop in finitely many steps to make \eqref{eq:v-condition} hold because $\nabla \phieps$ is Lipschitz continuous (with Lipschitz constant depending on $\varepsilon_t$ as shown below) at any iteration $t$. 
The strategy to automatically decrease $\varepsilon_t$ is also integrated when $\|\nabla \phiepst(\xbf_{t+1})\|$ is sufficiently small. The modified ELDA for solving \eqref{eq:formulation} is given in Algorithm \ref{alg:elda}. 
A flowchart of the $t$th iteration of this algorithm, which correspond to the $t$th phase of the proposed network, is plotted in Figure \ref{fig:flowchart}.
The details about the convergence properties of Algorithm \ref{alg:elda} will be given in Section \ref{subsec:elda-convergence}.

\begin{algorithm}[htb]
\caption{Modified ELDA for solving \eqref{eq:formulation}}
\textbf{Input:} Parameters $\bar{\alpha},\eta_1,\eta_2,\eta_3,\gamma,\varepsilon_0,\varepsilon_{\text{tol}}>0$, $\rho,\sigma \in (0,1)$ and $T \in \mathbb{N} \cup \{\infty\}$. Initial guess $\xbf_0$.
\textbf{Output:} $\xbf_{t+1}$
\begin{algorithmic}[1]
    \FOR{$t=0,\,1,\,2,\,\dots,T$}
        \STATE $\zbf_{t+1}=\xbf_t-\alpha_t\nabla f(\xbf_t)$
        \STATE $\ubf_{t+1}=\zbf_{t+1} - \beta_t\nabla \repst(\zbf_{t+1})$
        \IF{\eqref{eq:elda-condition} is satisfied}
            \STATE $\xbf_{t+1}=\ubf_{t+1}$
        \ELSE
            \STATE $\bar\alpha_t = \bar\alpha$
            \STATE $\vbf_{t+1}=\xbf_t - \bar{\alpha}_t\nabla\phiepst(\xbf_t)$ \label{alg:v}
            \IF{\eqref{eq:v-condition} is satisfied}
            \STATE{$\xbf_{t+1}=\vbf_{t+1}$}
            \ELSE
            \STATE $\bar{\alpha}_t \leftarrow \rho \bar{\alpha}_t$ and {go to Line} \ref{alg:v}
            \ENDIF
        \ENDIF
        \IF{$\| \nabla \phiepst(\xbf_{t+1}) \| < \sigma\gamma\varepsilon_t$}
            \STATE $\varepsilon_{t+1}=\gamma\varepsilon_t$ and otherwise $\varepsilon_{t+1}=\varepsilon_t$
        \ENDIF
        \IF{$\sigma \varepsilon_t < \varepsilon_{\text{tol}}$} 
            \STATE Terminate
        \ENDIF
    \ENDFOR
    \RETURN $\xbf_{t+1}$
\end{algorithmic}
\label{alg:elda}
\end{algorithm}

\begin{figure}[h]
\centering
\begin{tikzpicture}[scale=1, transform shape, >=Latex]
\Vertex[x=0, y=0, label=$\mathbf{x}_{t}$, color=none, size=0.6]{xt} 
\node at (2,0) (umodule) [draw, thick] {$\mathbf{u}$-module};
\Vertex[x=4, y=0, label=$\mathbf{u}_{t+1}$, color=none, size=0.8]{ut1}
\node at (6.5,0) (ucond) [draw, thick] {\eqref{eq:elda-condition} holds?};
\node at (8,1.5) (vmodule) [draw, thick] {$\mathbf{v}$-module};
\Vertex[x=9.5, y=0, label=$\mathbf{x}_{t+1}$, color=none, size=0.8]{xt1} 
\node at (9.5,-1.5) (epsilon) [draw, align=left, thick] {If $\| \nabla \varphi_{\varepsilon_{t}}(\mathbf{x}_{t+1})\| < \sigma \gamma \varepsilon_{t}$, \\ then $\varepsilon_{t+1} = \gamma \varepsilon_{t}$, otherwise $\varepsilon_{t+1}=\varepsilon_{t}$ };

\Edge[Direct, lw=1pt](xt)(umodule)
\Edge[Direct, lw=1pt](umodule)(ut1)
\Edge[Direct, lw=1pt](ut1)(ucond)
\Edge[Direct, lw=1pt](vmodule)(xt1)
\Edge[Direct, lw=1pt](xt1)(epsilon)

\draw[->,-{Latex[length=2mm]},thick] (ucond) -- node[anchor=east] {no} (vmodule);
\draw[->,-{Latex[length=2mm]},thick] (ucond) -- node[anchor=north] {yes} (xt1);

\Vertex[x=0, y=-3, label=$\mathbf{x}_{t}$, color=none, size=0.6]{xt} 
\node at (2.5,-3) (df) [draw, thick] {$\mathrm{Id}-\alpha_{t}\nabla f$};
\Vertex[x=5, y=-3, label=$\mathbf{z}_{t+1}$, color=none, size=0.8]{zt1}
\node at (7.5,-3) (dr) [draw, thick] {$\mathrm{Id}-\beta_{t}\nabla r_{\varepsilon_{t}}$};
\Vertex[x=10, y=-3, label=$\mathbf{u}_{t+1}$, color=none, size=0.8]{ut1} 

\Edge[Direct, lw=1pt](xt)(df)
\Edge[Direct, lw=1pt](df)(zt1)
\Edge[Direct, lw=1pt](zt1)(dr)
\Edge[Direct, lw=1pt](dr)(ut1)

\node at (-1.5,0) {Flowchart:};
\node at (-1.5,-3) {$\mathbf{u}$-module:};
\node at (-1.5,-5) {$\mathbf{v}$-module:};

\Vertex[x=0, y=-5, label=$\mathbf{x}_{t}$, color=none, size=0.6]{xtv} 
\node at (2.5,-5) (dphi) [draw, thick] {$\mathrm{Id}-\bar{\alpha}_{t}\nabla \phi_{\varepsilon_{t}}$};
\Vertex[x=5, y=-5, label=$\mathbf{v}_{t+1}$, color=none, size=0.8]{vt1tmp} 
\node at (7.5,-5) (vcond) [draw, thick] {\eqref{eq:v-condition} holds?};
\Vertex[x=10.5, y=-5, label=$\mathbf{x}_{t+1}$, color=none, size=0.8]{vt1}

\Edge[Direct, lw=1pt](xtv)(dphi)
\Edge[Direct, lw=1pt](dphi)(vt1tmp)
\Edge[Direct, lw=1pt](vt1tmp)(vcond)
\Edge[Direct, lw=1pt](vcond)(vt1)
\Edge[Direct, lw=1pt, position=below, bend=25](vcond)(xtv)

\draw[->,-{Latex[length=2mm]},thick] (vcond) -- node[anchor=north] {yes} (vt1);
\draw (4,-6) node[below]{no, then set $\bar{\alpha}_{t} \leftarrow \rho \bar{\alpha}_{t}$};

\end{tikzpicture}
\caption{The flowchart (top) of the $t$th iteration of Algorithm \ref{alg:elda}, the $\mathbf{u}$-module (middle), and the $\mathbf{v}$-module (bottom). Each round node represents a variable. Each rectangular node represents a module, mapping, or condition check. The identity mapping is denoted by Id.}
\label{fig:flowchart}
\end{figure}
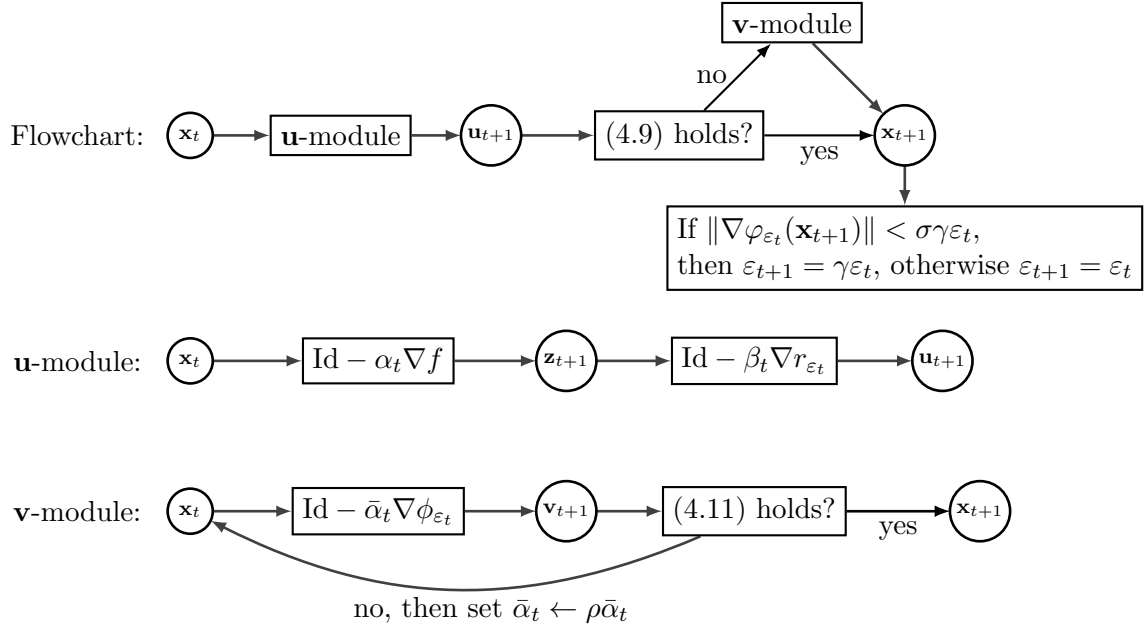

\subsection{Convergence analysis} 
\label{subsec:elda-convergence}

In this section, we provide several convergence properties, including the iteration complexity, of Algorithm \ref{alg:elda} for solving \eqref{eq:formulation}.
Since the objective function $\phi$ in \eqref{eq:formulation} is non-smooth and non-convex, we adopt the notion of \emph{Clarke subdifferential} to characterize the optimality of solutions.
\begin{definition}[Clarke subdifferential]
\label{def:clarke_subdiff}
Let $\psi: \mathbb{C}^{n} \rightarrow(-\infty,+\infty]$  be a locally Lipschitz function. The Clarke subdifferential of $\psi$  at $\xbf$  is defined as 
\[
\partial \psi(\xbf) := \Big\{\wbf \in \mathbb{C}^{n}\ \Big\vert \ \langle \wbf, \vbf \rangle \leq \limsup_{\zbf \rightarrow \xbf,\, t \downarrow 0} \frac{\psi(\zbf+t \vbf)-\psi(\zbf)}{t}, \ \ \forall\, \vbf \in \mathbb{C}^{n} \Big\}.
\]
\end{definition}

\begin{definition}[Clarke stationary point]
\label{def:clarke_cp}
For a locally Lipschitz function $\psi$, a point $\xbf \in \mathbb{C}^n$ is called a Clarke stationary point of $\psi$ if $\mathbf{0} \in \partial \psi(\xbf)$.
\end{definition}

Following Definition \ref{def:clarke_subdiff}, we find the Clarke subdifferential of $r$ at $\xbf$ to be
\begin{align}
\label{eq:r_subdiff}
    \partial r(\xbf) = \Big\{\sum_{k\in K_0(\xbf)}\nabla \qbf_{k}(\xbf)^{\top}  \wbf_{k} + \sum_{k \in K_1(\xbf)}\nabla \qbf_{k}(\xbf)^{\top}\frac{\qbf_{k}(\xbf)}{\|\qbf_{k}(\xbf)\|}
    \Big\vert \ \substack{\wbf_k \in \mathbb{C}^{d_2}, \\ \|\Pi(\wbf_{k}; \Ccal(\nabla \qbf_k(\xbf)))\|\leq 1}\Big\} ,  
\end{align}
where $K_0(\xbf):=K_0^0(\xbf)=\{k \in [d_1] \ | \ \|\qbf_k(\xbf) \|= 0 \}$, $K_1(\xbf):=[d_1] \setminus K_0(\xbf)$, and 
$\Pi(\wbf;\Ccal(\Abf))$ is the orthogonal projection of a vector $\wbf$ onto $\Ccal(\Abf)$ which stands for the column space of $\Abf$. The details to obtain \eqref{eq:r_subdiff} can be found in Lemma 3.1 in \cite{chen2021learnable}.

To prove the convergence properties and iteration complexity of Algorithm \ref{alg:elda}, the following assumptions are needed.

\begin{assumption}
We make the following assumptions:
\begin{itemize}
    \item[(A1)] The function $f$ is differentiable and $\nabla f$ is $L_f$-Lipschitz continuous on $\mathbb{C}^n$.
    \item[(A2)] For every $k \in [d_1]$, $\qbf_k$ is differentiable, $\nabla \qbf_k$ is $L_\qbf$-Lipschitz continuous; and there exists $M>0$ such that $\| \nabla \qbf_k (\xbf) \| \le M$ for all $\xbf$ and $k$.
    \item[(A3)] The objective function $\phi$ in \eqref{eq:formulation} is coercive, and $\phi^* := \min_{\xbf} \phi(\xbf) > -\infty$.
\end{itemize}
\end{assumption}
We remark that, by the design of $\phi$, $\gbf$ and $\hbf$, it is easy to have (A2) and (A3). (A1) can be weakened to $f$ being proper and locally Lipschitz on  $\mathbb{C}^n$. In this case, we can apply the smoothing procedure for both $f$ and $r$ to obtain the same convergence result as (A1) holds.

\begin{lemma}[Lemma 3.2 \cite{chen2021learnable}]\label{lemma1}
The gradient $\nabla \reps$ is $(\sqrt{d_1} L_\qbf+\frac{M^2}{\varepsilon})$-Lipschitz continuous.
Hence, $\nabla \phi_\varepsilon(\xbf)$ is $L_{\varepsilon}$-Lipschitz continuous with $L_{\varepsilon}:= L_f + \sqrt{d_1}L_\qbf + \frac{M^2}{\varepsilon} = O(\varepsilon^{-1})$.
\end{lemma}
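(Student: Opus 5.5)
We will show that $\nabla \reps$ is Lipschitz by writing it as a sum over rows $k\in[d_1]$ of the chain-rule terms $\nabla \qbf_k(\xbf)^{\top}\, \psi_\varepsilon(\qbf_k(\xbf))$, with one scalar-profile map applied to every row. The map $\psi_\varepsilon:\mathbb{C}^{d_2}\to\mathbb{C}^{d_2}$ is
\[
\psi_\varepsilon(\wbf) = \begin{cases} \wbf/\varepsilon, & \|\wbf\|\le \varepsilon,\\ \wbf/\|\wbf\|, & \|\wbf\|>\varepsilon,\end{cases}
\]
which is the gradient of the Huber-type function $H_\varepsilon(\wbf)=\frac{1}{2\varepsilon}\|\wbf\|^2$ for $\|\wbf\|\le\varepsilon$ and $H_\varepsilon(\wbf)=\|\wbf\|-\frac{\varepsilon}{2}$ otherwise. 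Thus $\reps(\xbf)=\sum_k H_\varepsilon(\qbf_k(\xbf))$ as in \eqref{eq:smoothedR}. Since $H_\varepsilon$ is $C^1$ (the two pieces and their gradients match on $\|\wbf\|=\varepsilon$), the chain rule gives the formula above.

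The first step is to record two facts about $\psi_\varepsilon$. First, $\|\psi_\varepsilon(\wbf)\|\le 1$ for all $\wbf$. Second, $\psi_\varepsilon$ is $\frac1\varepsilon$-Lipschitz, because it is exactly the projection onto the ball of radius $\varepsilon$ divided by $\varepsilon$, and projections onto convex sets are nonexpansive. This observation is the cleanest route and avoids case analysis across the two regions. We view $\mathbb{C}^{d_2}$ as $\mathbb{R}^{2d_2}$, so this holds in the complex setting too.

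The second step is to estimate, for each $k$, the difference of the chain-rule terms at $\xbf$ and $\ybf$ by the add-and-subtract split
\[
\|\nabla\qbf_k(\xbf)^{\top}\psi_\varepsilon(\qbf_k(\xbf))-\nabla\qbf_k(\ybf)^{\top}\psi_\varepsilon(\qbf_k(\ybf))\| \le \|\nabla\qbf_k(\xbf)-\nabla\qbf_k(\ybf)\|\,\|\psi_\varepsilon(\qbf_k(\xbf))\| + \|\nabla\qbf_k(\ybf)\|\,\|\psi_\varepsilon(\qbf_k(\xbf))-\psi_\varepsilon(\qbf_k(\ybf))\|.
\]
By (A2) and the two facts from the first step, the first term is at most $L_\qbf\|\xbf-\ybf\|$. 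The second term is at most $M\cdot\frac1\varepsilon\|\qbf_k(\xbf)-\qbf_k(\ybf)\|\le \frac{M^2}{\varepsilon}\|\xbf-\ybf\|$, using that $\qbf_k$ is $M$-Lipschitz by the mean value inequality and the bound $\|\nabla\qbf_k\|\le M$.

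The main obstacle is the summation over $k$: obtaining $\sqrt{d_1}L_\qbf+\frac{M^2}{\varepsilon}$ rather than $d_1(L_\qbf+\frac{M^2}{\varepsilon})$. For the $M^2/\varepsilon$ term, we will interpret $M$ as a bound on the full Jacobian of $\qbf$ (as in \cite{chen2021learnable}). Then the sum $\sum_k \nabla\qbf_k(\ybf)^{\top}(\psi_\varepsilon(\qbf_k(\xbf))-\psi_\varepsilon(\qbf_k(\ybf)))$ is a single Jacobian-transpose applied to a stacked vector, of norm at most $M\cdot\frac1\varepsilon\|\qbf(\xbf)-\qbf(\ybf)\|_F\le\frac{M^2}{\varepsilon}\|\xbf-\ybf\|$. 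For the $L_\qbf$ term, we use Cauchy–Schwarz over $k$ with the stacked vector $(\psi_\varepsilon(\qbf_k(\xbf)))_k$, whose norm is at most $\sqrt{d_1}$, which yields $\sqrt{d_1}L_\qbf$. If the row-wise reading of (A2) is intended, the same argument goes through with constants absorbed, as in the cited lemma.

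Finally, since $\nabla f$ is $L_f$-Lipschitz by (A1), adding the two Lipschitz constants gives $L_\varepsilon=L_f+\sqrt{d_1}L_\qbf+\frac{M^2}{\varepsilon}=O(\varepsilon^{-1})$.
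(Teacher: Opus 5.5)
Your route matches the argument behind the cited Lemma 3.2; the paper gives no proof of its own. You write $\nabla r_\varepsilon(\mathbf{x})=\sum_k\nabla\mathbf{q}_k(\mathbf{x})^{\top}\psi_\varepsilon(\mathbf{q}_k(\mathbf{x}))$, split by adding and subtracting, and use $\|\psi_\varepsilon\|\le 1$ plus the $\frac{1}{\varepsilon}$-Lipschitz property of $\psi_\varepsilon$. Your identification $\psi_\varepsilon=\varepsilon^{-1}\Pi_{\{\|\mathbf{w}\|\le\varepsilon\}}$ is a clean replacement for the usual three-case check.

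The argument is correct only if \emph{both} constants in (A2) bound the whole stacked Jacobian $\nabla\mathbf{q}$. That means $\sup_{\mathbf{x}}\|\nabla\mathbf{q}(\mathbf{x})\|\le M$ and $\|\nabla\mathbf{q}(\mathbf{x})-\nabla\mathbf{q}(\mathbf{x}')\|\le L_{\mathbf{q}}\|\mathbf{x}-\mathbf{x}'\|$. You make this switch for $M$ but not for $L_{\mathbf{q}}$. With the row-wise constant, Cauchy--Schwarz over $k$ only gives $\sum_k\|\nabla\mathbf{q}_k(\mathbf{x})-\nabla\mathbf{q}_k(\mathbf{x}')\|\,\|\psi_\varepsilon(\mathbf{q}_k(\mathbf{x}))\|\le(\sqrt{d_1}L_{\mathbf{q}}\|\mathbf{x}-\mathbf{x}'\|)\cdot\sqrt{d_1}$, i.e.\ $d_1L_{\mathbf{q}}$, not $\sqrt{d_1}L_{\mathbf{q}}$. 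The factor $\sqrt{d_1}$ comes from the same single-Jacobian-transpose bound you used for the $M^2/\varepsilon$ term: $\|(\nabla\mathbf{q}(\mathbf{x})-\nabla\mathbf{q}(\mathbf{x}'))^{\top}\Psi\|\le L_{\mathbf{q}}\|\mathbf{x}-\mathbf{x}'\|\,\|\Psi\|$, with $\|\Psi\|\le\sqrt{d_1}$ for the stacked vector $\Psi=(\psi_\varepsilon(\mathbf{q}_k(\mathbf{x})))_k$.

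Your closing sentence about the row-wise reading is wrong. Under (A2) as literally written in this paper, the stated constant is false, not merely ``absorbed''. Take $n=d_2=1$ and $\mathbf{q}_k(x)=Mx$ for every $k\in[d_1]$. Each $\nabla\mathbf{q}_k\equiv M$, so the row-wise hypotheses hold with any $L_{\mathbf{q}}>0$. Meanwhile $r_\varepsilon(x)=d_1M^2x^2/(2\varepsilon)$ for $|x|\le\varepsilon/M$, so $\nabla r_\varepsilon$ has Lipschitz constant at least $d_1M^2/\varepsilon$. This exceeds $\sqrt{d_1}L_{\mathbf{q}}+M^2/\varepsilon$ once $d_1\ge 2$ and $L_{\mathbf{q}}$ is small.

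The row-wise hypotheses imply full-Jacobian bounds $\sqrt{d_1}M$ and $\sqrt{d_1}L_{\mathbf{q}}$, so what they do give is the constant $d_1(L_{\mathbf{q}}+M^2/\varepsilon)$. That is still $O(\varepsilon^{-1})$, which is all Lemma~\ref{lemma2} and Theorem~\ref{convergence-thm} need, so the downstream results survive. You should therefore either state the full-Jacobian assumption explicitly or settle for this weaker constant, rather than claim the stated constant under both readings.
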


We first consider the case where the smoothing parameter $\varepsilon>0$ is fixed and never reduced in Algorithm \ref{alg:elda}. This provides some important insights on the convergence properties of the algorithm in minimizing $\phi_{\varepsilon}$ with a given smoothing level $\varepsilon$.
\begin{lemma}\label{lemma2}
For any $\varepsilon_0>0$ and initial guess $\xbf_0$, let $\{\xbf_t\}$ be the sequence generated by Algorithm \ref{alg:elda} with $\sigma = \epstol = 0$ and $T=\infty$. For simplicity we denote $\varepsilon:=\varepsilon_0$. 
Then we have 
\begin{enumerate}
    \item The maximum number of required line searches $\ell_{\max}$ is 
    \begin{equation} \label{eq:lmax}
     \ell_{\max}=\Bigg\lfloor \frac{ \log \big[\bar{\alpha}(\frac{L_{\varepsilon}}{2}+ \frac{\eta_3}{\varepsilon})\big] }{ \log(1/\rho) }\Bigg\rfloor+1 \ .
    \end{equation}
    
    \item $\|\nabla \phi_{\varepsilon}(\xbf_t)\|\to 0$ as $t\to\infty$.
    
    \item For any $\eta>0$, there is 
    \begin{equation} \label{mink}
    \min\{t\in \mathbb N \ | \ \|\nabla \phi_{\varepsilon}(\xbf_{t+1})\|<\eta \}\le \max \Big\{\frac{\varepsilon (\frac{L_{\varepsilon}}{2}+\frac{\eta_3}{\varepsilon})^2}{\eta_3 \rho^2}, \frac{2}{\eta_1^2\eta_2} \Big\}
    \frac{\phi_{\varepsilon}(\xbf_0)-\phi^*+\frac{d_1\varepsilon}{2}}{\eta^2}.
    \end{equation}
\end{enumerate} 
 
\end{lemma}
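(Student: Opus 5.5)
Throughout, $\varepsilon=\varepsilon_0$ is fixed, so $\phi_\varepsilon$ has an $L_\varepsilon$-Lipschitz gradient by Lemma \ref{lemma1}. The three parts are proved in order: first bound the line search, then derive a per-iteration sufficient decrease expressed through $\|\nabla\phi_\varepsilon(\xbf_t)\|^2$, and finally telescope.

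\emph{Part 1.} By the descent lemma,
\[
\phi_\varepsilon(\vbf)\le \phi_\varepsilon(\xbf_t)+\inprd{\nabla\phi_\varepsilon(\xbf_t)}{\vbf-\xbf_t}+\tfrac{L_\varepsilon}{2}\|\vbf-\xbf_t\|^2 .
\]
Take $\vbf=\xbf_t-\bar\alpha_t\nabla\phi_\varepsilon(\xbf_t)$, so that $\nabla\phi_\varepsilon(\xbf_t)=-(\vbf-\xbf_t)/\bar\alpha_t$. This gives
\[
\phi_\varepsilon(\vbf)-\phi_\varepsilon(\xbf_t)\le -\Big(\tfrac{1}{\bar\alpha_t}-\tfrac{L_\varepsilon}{2}\Big)\|\vbf-\xbf_t\|^2 .
\]
Hence \eqref{eq:v-condition} holds as soon as $1/\bar\alpha_t\ge L_\varepsilon/2+\eta_3/\varepsilon$. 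Since $\bar\alpha_t=\bar\alpha\rho^{\ell}$, it suffices that $\rho^{\ell}\le 1/[\bar\alpha(L_\varepsilon/2+\eta_3/\varepsilon)]$. Taking logarithms yields \eqref{eq:lmax}.

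The same bound shows that the accepted step size satisfies $\bar\alpha_t\ge \rho/(L_\varepsilon/2+\eta_3/\varepsilon)$. Either $\bar\alpha$ is accepted without backtracking, or the previous trial $\bar\alpha_t/\rho$ failed and therefore exceeds the threshold. The former case is fine provided $\bar\alpha$ is at least this lower bound, which we may assume or absorb.

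\emph{Sufficient decrease.} Split according to which branch produced $\xbf_{t+1}$.

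If $\xbf_{t+1}=\ubf_{t+1}$, combine \eqref{eq:elda-condition2} with \eqref{eq:elda-condition1}:
\[
\phi_\varepsilon(\xbf_{t+1})-\phi_\varepsilon(\xbf_t)\le -\tfrac{\eta_2}{2}\|\ubf_{t+1}-\xbf_t\|^2\le -\tfrac{\eta_1^2\eta_2}{2}\|\nabla\phi_\varepsilon(\xbf_t)\|^2 .
\]

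If $\xbf_{t+1}=\vbf_{t+1}$, then $\|\vbf_{t+1}-\xbf_t\|=\bar\alpha_t\|\nabla\phi_\varepsilon(\xbf_t)\|$. With \eqref{eq:v-condition} and the lower bound on $\bar\alpha_t$,
\[
\phi_\varepsilon(\xbf_{t+1})-\phi_\varepsilon(\xbf_t)\le -\tfrac{\eta_3}{\varepsilon}\bar\alpha_t^2\|\nabla\phi_\varepsilon(\xbf_t)\|^2\le -\frac{\eta_3\rho^2}{\varepsilon(L_\varepsilon/2+\eta_3/\varepsilon)^2}\|\nabla\phi_\varepsilon(\xbf_t)\|^2 .
\]

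Define $C:=\max\{\varepsilon(L_\varepsilon/2+\eta_3/\varepsilon)^2/(\eta_3\rho^2),\,2/(\eta_1^2\eta_2)\}$. In both cases
\[
\|\nabla\phi_\varepsilon(\xbf_t)\|^2\le C\,[\phi_\varepsilon(\xbf_t)-\phi_\varepsilon(\xbf_{t+1})].
\]

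\emph{Parts 2 and 3.} Sum this inequality over $t=0,\dots,T$. By \eqref{eq:r_eps_bound},
\[
\phi_\varepsilon(\xbf)\ge\phi(\xbf)-\tfrac{d_1\varepsilon}{2}\ge\phi^*-\tfrac{d_1\varepsilon}{2},
\]
so
\[
\sum_{t=0}^{T}\|\nabla\phi_\varepsilon(\xbf_t)\|^2\le C\Big(\phi_\varepsilon(\xbf_0)-\phi^*+\tfrac{d_1\varepsilon}{2}\Big)<\infty .
\]
Summability forces $\|\nabla\phi_\varepsilon(\xbf_t)\|\to0$, which is Part 2.

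For Part 3, suppose $\|\nabla\phi_\varepsilon(\xbf_{t+1})\|\ge\eta$ for all $t<\bar T$. Then the sum over those $\bar T$ terms is at least $\bar T\eta^2$, so $\bar T\le C(\cdots)/\eta^2$, which is \eqref{mink}. One must track the index shift between $\xbf_t$ and $\xbf_{t+1}$ here; it costs at most one extra iteration, which can be absorbed by summing from $t=1$.

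\emph{Main obstacle.} The delicate step is the lower bound on the accepted $\bar\alpha_t$ in the $\vbf$-branch, specifically the case where no backtracking occurs and $\bar\alpha$ itself is small. I would handle it by noting that the stated constant implicitly assumes the backtracking regime; otherwise I would use $\min\{\bar\alpha,\rho/(L_\varepsilon/2+\eta_3/\varepsilon)\}$ as the lower bound.
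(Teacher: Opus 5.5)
Your proof is correct and follows the paper's argument essentially step for step: the descent-lemma bound for the backtracking, the two-branch sufficient decrease with the same constant $C_\varepsilon$, and telescoping against $\phi^*-\frac{d_1\varepsilon}{2}$. The index shift in Part 3 in fact costs nothing, since the nonnegative terms summed over $t=1,\dots,\bar{T}$ are dominated by the full sum from $t=0$, which is exactly what the paper does.

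The one difference is how you lower-bound the accepted step. You note that the previous trial $\bar{\alpha}_t/\rho$ failed, whereas the paper writes $\rho^{\ell_t}\ge\rho^{\ell_{\max}}$ and uses the floor in \eqref{eq:lmax}. The edge case you flag is genuine: the paper's inequality $\ell_t\le\ell_{\max}$ tacitly requires $\ell_{\max}\ge 0$, i.e.\ $\bar{\alpha}\ge\rho/(\frac{L_\varepsilon}{2}+\frac{\eta_3}{\varepsilon})$, which is precisely the condition you identified.
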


\begin{proof}
For any iteration $t$, we know $\xbf_{t+1}=\ubf_{t+1}$ if $\ubf_{t+1}$ satisfies the conditions \eqref{eq:elda-condition1} and \eqref{eq:elda-condition2}, which implies
\begin{equation}\label{x=u}
\|\nabla \phi_{\varepsilon}(\ubf_{t+1})\|^2\le \frac{2(\phi_{\varepsilon}(\xbf_{t})-\phi_{\varepsilon}(\ubf_{t+1}))}{\eta_1^2\eta_2}.
\end{equation}
If either one of \eqref{eq:elda-condition1} and \eqref{eq:elda-condition2} is not met, then Algorithm \ref{alg:elda} computes $\vbf_{t+1}$ by the gradient descent method with the standard line search strategy (lines 7--13 in Algorithm \ref{alg:elda}). 
Let $\ell_t$ be the number of line searches required to meet condition \eqref{eq:v-condition} in the $t$th iteration, we have
\begin{equation}\label{v-12c}
\vbf_{t+1}=\xbf_t-\bar{\alpha} \rho^{\ell_t}\nabla \phi_{\varepsilon}(\xbf_{t}).
\end{equation}

Since $\nabla \phi_{\varepsilon}$ is $L_{\varepsilon}$-Lipschitz continuous, we have
\begin{align}
\phi_{\varepsilon}(\vbf_{t+1})&
\le \phi_{\varepsilon}(\xbf_t)+ \langle \nabla\phi_{\varepsilon}(\xbf_t)
, \, \vbf_{t+1}-\xbf_t \rangle+\frac{L_{\varepsilon}}{2}\|\vbf_{t+1}-\xbf_t\|^2\label{compare-phi-3}.
\end{align}
%
The combination of \eqref{v-12c} and \eqref{compare-phi-3} yields
\begin{equation}\label{v-decay}
\phi_{\varepsilon}(\vbf_{t+1}) \leq  \phi_{\varepsilon}(\xbf_{t})+ \Big(-\frac{1}{\bar{\alpha}\rho^{\ell_t}}+\frac{L_{\varepsilon}}{2} \Big) \|\vbf_{t+1}-\xbf_t\|^2. 
\end{equation}
Now it is easy to see that, if 
\begin{equation}\label{add-1}
-\frac{1}{\bar{\alpha}\rho^{\ell_t}}+\frac{L_{\varepsilon}}{2}\le -\frac{\eta_3}{\varepsilon},
\end{equation}
then the condition (\ref{eq:v-condition}) holds.
Hence, it is easy to check that the maximum number of line searches $\ell_{\max}$ required to have \eqref{eq:v-condition} is the $\ell_{\max}$ defined in \eqref{eq:lmax},
since this guarantees \eqref{add-1}.
This proves the first claim of this lemma. 

Moreover, from $\rho \in (0,1)$ and $\ell_t\leq \ell_{\max}$ for all $t$, we notice that
\begin{equation}\label{stepsize bound}
    \rho^{\ell_{t}}\geq  \rho^{\ell_{\max}} \ge
    \frac{\rho}{ \bar{\alpha}( \frac{L_{\varepsilon}}{2} + \frac{\eta_3}{\varepsilon} ) }.
    \end{equation}
where the second inequality is due to $\ell_{\max} -1 \le \frac{\log [\bar{\alpha}(\frac{L_{\varepsilon}}{2}+\frac{\eta_3}{\varepsilon})]}{\log (1/\rho)}$ from \eqref{eq:lmax}.

Now we prove the second claim. Due to (\ref{v-12c}), the condition (\ref{eq:v-condition}) can be rewritten as: 
\begin{align}\label{line-up-2}
\phi_{\varepsilon}(\vbf_{t+1})\le \phi_{\varepsilon}(\xbf_t)
-\frac{\eta_3}{\varepsilon}\bar{\alpha}^2\rho^{2\ell_t}\|\nabla \phi_{\varepsilon}(\xbf_t)\|^2.
\end{align}
Combining \eqref{stepsize bound} and \eqref{line-up-2}, we get
\begin{equation}\label{x=v}
 \|  \nabla \phi_{\varepsilon} (\xbf_t)\|^2 
 \leq \frac{\varepsilon}{\eta_3( \bar{\alpha} \rho^{\ell_{\max}} )^2} (\phi_{\varepsilon}(\xbf_{t}) - \phi_{\varepsilon}(\vbf_{t+1})).
\end{equation}

Hence, either $\xbf_{t+1}=\ubf_{t+1}$ or $\xbf_{t+1}=\vbf_{t+1}$ in the $t$th iteration, we have from \eqref{eq:elda-condition2} and \eqref{eq:v-condition} that
\begin{equation} \label{phi-decay}
    \phi_{\varepsilon}(\xbf_{t+1}) - \phi_{\varepsilon}(\xbf_t) \le - \min\{\eta_2/2, \eta_3/\varepsilon\} \| \xbf_{t+1} - \xbf_t \|^2
\end{equation}
Moreover, from \eqref{x=u} and \eqref{x=v}, we have
\begin{equation}\label{eq:diff3}
 \|  \nabla \phi_{\varepsilon} (\xbf_t)\|^2 \leq C_{\varepsilon} (\phi_{\varepsilon}(\xbf_{t}) - \phi_{\varepsilon}(\xbf_{t+1})),
\end{equation}
since
\begin{equation} \label{C}
0 < \max \Big\{ \frac{\varepsilon}{\eta_3 (\rho^{\ell_{\max}} \bar{\alpha})^ 2}, \frac{2}{\eta_1^2 \eta_2} \Big\} \le  \max  \Big\{\frac{\varepsilon (\frac{L_{\varepsilon}}{2}+\frac{\eta_3}{\varepsilon})^2}{\eta_3 \rho^2}, \frac{2}{\eta_1^2 \eta_2} \Big\} =: C_{\varepsilon},
\end{equation}
where we used \eqref{stepsize bound} to obtain the second inequality and the definition of $C_{\varepsilon}$ as above.

Summing up \eqref{eq:diff3} from $t=0$ to $t=T$ and using \eqref{eq:r_eps_bound}, we get
\begin{equation} \label{add-e-2}
    \sum_{t=0}^{T} \|  \nabla \phi_{\varepsilon} (\xbf_t) \| ^2 \leq C_{\varepsilon}(\phi_{\varepsilon}(\xbf_0) - \phi_{\varepsilon}(\xbf_{T+1})) \leq C_{\varepsilon} \Big(\phi_{\varepsilon}(\xbf_0) - \phi^* + \frac{d_1\varepsilon}{2} \Big).
\end{equation}
because $\phi_{\varepsilon}(\xbf_{T+1})) \ge \phi(\xbf_{T+1})) - \frac{d_1 \varepsilon}{2} \ge \phi^* - \frac{d_1 \varepsilon}{2}$.
Letting $T\to\infty$, we conclude that $\| \nabla \phi_{\varepsilon} (\xbf_t) \| \to 0$. The second claim is proved.

Now we prove the third claim. Let $\tau:=\min\{t\in\mathbb N\,|\, \|\nabla \phi_{\varepsilon}(\xbf_{t+1})\|< \eta\}$, then we know that $\|\phi_{\varepsilon}(\xbf_{t+1})\|\ge \eta$ for all $t\le \tau-1$. Thus from \eqref{add-e-2} we have
\[
\tau \eta^2\le \sum_{k=0}^{\tau-1}\|\nabla \phi_{\varepsilon}(\xbf_{t+1})\|^2=\sum_{k=1}^{\tau}\|\nabla \phi_{\varepsilon}(\xbf_t)\|^2\le C_{\varepsilon} \Big(\phi_{\varepsilon}(\xbf_0) - \phi^* + \frac{d_1\varepsilon}{2} \Big).
\]
This verifies the third claim. 
\end{proof}
\medskip

Now we are ready to show the convergence and iteration complexity of Algorithm 1 in solving \eqref{eq:formulation}.

\begin{theorem}\label{convergence-thm}  
Let $\{\xbf_t\}$ be the sequence generated by Algorithm \ref{alg:elda} with arbitrary initial guess $\xbf_0$. 
Denote $\{\tilde \xbf_l\} := \{\xbf_{t_l+1}\}$, which is the subsequence of $\{\xbf_t\}$ where the reduction criterion for $\varepsilon_t$ (line 15 in Algorithm \ref{alg:elda}) is met for $t=t_l$ and $l=1,2,\dots$. 
Then we have
\begin{enumerate}
    \item If $\epstol=0$ and $T=\infty$, then the sequence $\{\tilde{\xbf}_l\}$ has at least one accumulation point, and every accumulation point of $\{\tilde{\xbf}_l\}$ is a Clarke stationary point of \eqref{eq:formulation}. 

    \item For any $\epstol > 0$, there exist two constants $C_1>0$ and $C_2>0$ depending on $\eta_1$, $\eta_2$, $\eta_3$, $\rho$, $d_1$, $\sigma$, $\varepsilon_0$, and $\phi(\xbf_0)-\phi^*$, such that the number of iterations for the $l$th segment, i.e., $t_{l+1}-t_l$, has the following bound:
    \begin{equation}\label{eq:inner-length}
        t_{l+1}-t_l \le  C_1 \gamma^{-3l-2}+C_2\gamma^{-2l-2}.
    \end{equation}

    \item For any $\epstol>0$ and $T = \infty$, Algorithm \ref{alg:elda} terminates with $\hat{\ell}$ of $\varepsilon$-reductions (i.e., line 15 of Algorithm \ref{alg:elda} is met for $\hat{\ell}$ times and then the algorithm terminates), where $\hat{\ell}$ satisfies $\gamma^{-(\hat{\ell}-1)}\le \sigma \varepsilon_0 \epstol^{-1} = O(\epstol^{-1})$, and the total number of required iterations is upper bounded by 
    \begin{equation}\label{total iterations}
    \frac{C_1(\gamma^{-3(\hat{\ell}-1)}- \gamma^3)}{(1-\gamma^3)\gamma^2}+\frac{C_2(\gamma^{-2(\hat{\ell}-1)} - \gamma^2)}{(1-\gamma^2)\gamma^2}=O(\epstol^{-3}).
    \end{equation}
    
\end{enumerate}
\end{theorem}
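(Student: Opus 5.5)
Within each segment where $\varepsilon$ is held fixed, Algorithm \ref{alg:elda} behaves exactly like the fixed-$\varepsilon$ algorithm analysed in Lemma \ref{lemma2}, so I will apply that lemma segment by segment. The $l$th segment uses $\varepsilon = \varepsilon_l := \gamma^{l}\varepsilon_0$; here $\gamma\in(0,1)$ is the reduction factor, which the statement implicitly requires. The segment ends as soon as $\|\nabla\phi_{\varepsilon_l}(\xbf_{t+1})\| < \sigma\gamma\varepsilon_l$.

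\textbf{Part 2 (segment length).} Apply claim 3 of Lemma \ref{lemma2} with $\eta = \sigma\gamma\varepsilon_l$, starting point $\xbf_{t_l+1}$ and smoothing level $\varepsilon_l$. Since $L_{\varepsilon_l} = O(\varepsilon_l^{-1})$, the constant is
\[
C_{\varepsilon_l} \le \max\Big\{ \frac{\varepsilon_l (\frac{L_{\varepsilon_l}}{2} + \frac{\eta_3}{\varepsilon_l})^2}{\eta_3\rho^2},\ \frac{2}{\eta_1^2\eta_2} \Big\} \le \frac{c_1}{\varepsilon_l} + c_2 .
\]
The numerator is $\phi_{\varepsilon_l}(\xbf_{t_l+1}) - \phi^* + d_1\varepsilon_l/2$. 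To bound it uniformly in $l$, I chain \eqref{eq:r_eps_bound} with monotone decrease inside each segment:
\[
\phi_{\varepsilon_{l}}(\xbf_{t_l+1}) \le \phi(\xbf_{t_l+1}) \le \phi_{\varepsilon_{l-1}}(\xbf_{t_l+1}) + \tfrac{d_1\varepsilon_{l-1}}{2} \le \phi_{\varepsilon_{l-1}}(\xbf_{t_{l-1}+1}) + \tfrac{d_1\varepsilon_{l-1}}{2}.
\]
Telescoping gives a bound $\phi(\xbf_0) - \phi^* + d_1\varepsilon_0/(2(1-\gamma)) + d_1\varepsilon_0$, which is independent of $l$.

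Dividing by $\eta^2 = \sigma^2\gamma^{2}\varepsilon_0^2\gamma^{2l}$ then gives
\[
t_{l+1} - t_l \le \frac{c_1\gamma^{-l}/\varepsilon_0 + c_2}{\sigma^2\varepsilon_0^2\gamma^{2l+2}}\cdot \text{const} = C_1\gamma^{-3l-2} + C_2\gamma^{-2l-2}.
\]
The main bookkeeping obstacle is this uniform control of the objective gap across segments; Part 2 is otherwise just a substitution.

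\textbf{Part 3 (total complexity).} Termination occurs once $\sigma\varepsilon_t < \epstol$, i.e. once $\gamma^{\hat\ell}\sigma\varepsilon_0 < \epstol$. This forces $\gamma^{-(\hat\ell-1)} \le \sigma\varepsilon_0\epstol^{-1}$. Summing the bound of Part 2 for $l=1,\dots,\hat\ell-1$ (plus the initial segment) gives geometric sums of $\gamma^{-3l}$ and $\gamma^{-2l}$, which produce the displayed expression \eqref{total iterations}. This is $O(\gamma^{-3\hat\ell}) = O(\epstol^{-3})$.

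\textbf{Part 1 (stationarity).} Part 2 shows every segment is finite. Hence with $\epstol=0$ we have infinitely many reductions and $\varepsilon_l\to0$. By the telescoping bound above, $\phi(\tilde\xbf_l) \le \phi(\xbf_0) + \text{const}$, so coercivity (A3) places $\{\tilde\xbf_l\}$ in a bounded level set, and an accumulation point $\bar\xbf$ exists. Take a subsequence $\tilde\xbf_l\to\bar\xbf$. We have $\nabla\phi_{\varepsilon_l}(\tilde\xbf_l) = \nabla f(\tilde\xbf_l) + \sum_k \nabla\qbf_k(\tilde\xbf_l)^\top \wbf_k^{l}$, where
\begin{itemize}
\item $\wbf_k^l = \qbf_k/\varepsilon_l$ on $K_0^{\varepsilon_l}$, and $\wbf_k^l = \qbf_k/\|\qbf_k\|$ otherwise;
\item all $\|\wbf_k^l\|\le1$.
\end{itemize}
Passing to a further subsequence, $\wbf_k^l\to\wbf_k$. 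For $k\in K_1(\bar\xbf)$ we have $\|\qbf_k(\tilde\xbf_l)\| > \varepsilon_l$ eventually, so $\wbf_k = \qbf_k(\bar\xbf)/\|\qbf_k(\bar\xbf)\|$. For $k\in K_0(\bar\xbf)$ we have $\|\wbf_k\|\le 1$, hence its projection onto $\Ccal(\nabla\qbf_k(\bar\xbf))$ has norm at most $1$. By continuity of $\nabla f$ and $\nabla\qbf_k$, and since $\|\nabla\phi_{\varepsilon_l}(\tilde\xbf_l)\| < \sigma\gamma\varepsilon_l \to 0$, the limit $0 = \nabla f(\bar\xbf) + \sum_k \nabla\qbf_k(\bar\xbf)^\top \wbf_k$ lies in $\nabla f(\bar\xbf) + \partial r(\bar\xbf)$ by \eqref{eq:r_subdiff}. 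So $\bar\xbf$ is Clarke stationary.
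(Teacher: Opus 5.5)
Your proof is correct and has the same structure as the paper's proof. It applies Lemma~\ref{lemma2} segment by segment with a uniform bound on the objective gap, sums the resulting geometric series, and uses coercivity followed by a limit passage in the explicit formula for $\nabla r_{\varepsilon}$, matched against \eqref{eq:phi_subdiff}. The differences are only in sub-steps: you get the uniform gap bound from the sandwich \eqref{eq:r_eps_bound} plus a geometric series in $\varepsilon_l$, whereas the paper observes that $\phi_{\varepsilon}(\mathbf{x})+\frac{d_1\varepsilon}{2}$ is nondecreasing in $\varepsilon$ (costing you only a harmless extra $\frac{d_1\varepsilon_0}{2(1-\gamma)}$ in the constant), and you pass to the limit by extracting convergent multipliers $\mathbf{w}_k^l$ rather than using the sets $B_{\delta}(\bar{\mathbf{x}})$, which is if anything tidier (with your labelling, the reduction test passed by $\tilde{\mathbf{x}}_l$ involves $\varepsilon_{l-1}$ rather than $\varepsilon_l$, a harmless index shift).
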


\begin{proof}
    
By the definition of $r_{\varepsilon}(\xbf)$ in \eqref{eq:smoothedR}, we have
\begin{equation}
    r_{\varepsilon}(\xbf) + \frac{d_1\varepsilon}{2} = 
    \sum_{k \in K_0^{\varepsilon}(\xbf)} \Big(\frac{\| \qbf_k(\xbf) \|^2}{2\varepsilon} + \frac{\varepsilon}{2} \Big) + \sum_{k \in K_1^{\varepsilon}(\xbf)} \| \qbf_k(\xbf) \|.
\end{equation}   
Notice that, for any fixed $\xbf$, we have that $\frac{\| \qbf_k(\xbf) \|^2}{2\varepsilon} 
+ \frac{\varepsilon}{2}$ as a function of $\varepsilon$ is non-decreasing in $\varepsilon$ for any $k \in K_0^{\varepsilon}(\xbf)$ where $\|\qbf_k(\xbf)\| \le \varepsilon $. 
Therefore, $\phi_{\varepsilon}(\xbf)+\frac{d_1 \varepsilon}{2}=f(\xbf) + r_{\varepsilon}(\xbf) + \frac{d_1\varepsilon}{2}$ is non-decreasing in $\varepsilon$ for any fixed $\xbf\in \mathbb{C}^n$. 
This leads to the following monotonicity estimate:
\begin{equation}\label{eq:phi_decay}
    \phi_{\varepsilon_{t+1}}(\xbf_{t+1}) + \frac{d_1\varepsilon_{t+1}}{2} \le \phi_{\varepsilon_{t}}(\xbf_{t+1}) + \frac{d_1 \varepsilon_{t}}{2} \le \phi_{\varepsilon_{t}}(\xbf_{t}) + \frac{d_1 \varepsilon_{t}}{2},
\end{equation}
where the second inequality is due to $\phi_{\varepsilon_{t}}(\xbf_{t+1}) \le \phi_{\varepsilon_{t}}(\xbf_{t})$ from \eqref{phi-decay}. 
Therefore, \eqref{eq:phi_decay} implies a chain of inequalities
\begin{equation*}
\phi(\xbf_t) \le \phi_{\varepsilon_t}(\xbf_t) + \frac{d_1\varepsilon_t}{2} \le \cdots \le \phi_{\varepsilon_0}(\xbf_0) + \frac{d_1\varepsilon_0}{2} < \infty.
\end{equation*}
for all $t$, where the first inequality is due to \eqref{eq:r_eps_bound}.
Since $\phi$ is coercive, we know the sequence $\{\xbf_t\}$ is in a bounded domain, and so does $\{\xbf_{t_l+1}\}$. 
Hence, $\{\xbf_{t_l+1}\}$ has at least one accumulation point. 
Now, let $\{\xbf_{t_{l_j}+1}\}$ be any one of the convergent subsequences of $\{\xbf_{t_l+1}\}$ with limit $\bar{\xbf}$.
Since $\xbf_{t_l+1}$ satisfies the reduction criterion in line 15 of Algorithm \ref{alg:elda}, for the subsequence $\{\xbf_{t_{l_j}+1}\}$, we have that, as $j \to \infty$,
\begin{equation} \label{gradphi}
    \| \nabla \phi_{\varepsilon_{t_{l_j}}} (\xbf_{t_{l_j}+1}) \| \le \sigma \gamma \varepsilon_{t_{l_j}+1} = \sigma \varepsilon_0 \gamma^{{l_j}+1} \to 0.
\end{equation}

Next, we need to show that
\begin{equation} \label{lim grad smoothPhi}
    \lim_{j \rightarrow \infty}\nabla \phi_{\varepsilon_{t_{l_j}}} (\xbf_{t_{l_j}+1})\in \partial \phi (\bar{\xbf}).
\end{equation}
To this end, from \eqref{eq:r_subdiff}, we know the Clarke subdifferential of $\phi$ at $\bar{\xbf}$ is given as
\begin{align}
\partial \phi(\bar \xbf) = \bigg\{\nabla f(\bar \xbf) +\sum_{k\in K_0(\bar{\xbf})}\nabla \qbf_k(\bar\xbf)^{\top}  \wbf_k + \sum_{k \in K_1(\bar{\xbf})}\nabla \qbf_k(\bar\xbf)^{\top}\frac{\qbf_k(\bar\xbf)}{\|\qbf_k(\bar\xbf)\|} \ \bigg\vert \ \substack{\wbf_k \in \mathbb{C}^{d_2}, \\ \|\Pi(\wbf_k; \Ccal(\nabla \qbf_k(\bar\xbf)))\|\leq 1} \bigg\}.  \label{eq:phi_subdiff}
\end{align} 
On the other hand, we also know that $\nabla r_{\varepsilon_{t_{l_j}}} (\xbf_{t_{l_j}+1})$ is given by
\begin{equation}
\label{eq:d_r_epsj}
\nabla r_{\varepsilon_{t_{l_j}}} (\xbf_{t_{l_j}+1}) =
\sum_{k \in Q_0^j} \nabla \qbf_k(\xbf_{t_{l_j}+1})^{\top}\frac{\qbf_k(\xbf_{t_{l_j}+1})}{\varepsilon_{t_{l_j}}} + \sum_{k \in Q_1^j} \nabla \qbf_k(\xbf_{t_{l_j}+1})^{\top} \frac{\qbf_k(\xbf_{t_{l_j}+1})}{\| \qbf_k(\xbf_{t_{l_j}+1}) \|}. 
\end{equation}
where we introduce the following simpler notations for every $j$:
\begin{equation}
    Q_0^j := K_0^{\varepsilon_{t_{l_j}}}(\xbf_{t_{l_j}+1}), \qquad Q_1^j := [d_1] \setminus Q_0^j,
\end{equation}
and the definitions of $K_0^{\varepsilon}$ and $K_1^{\varepsilon}$ are given in \eqref{eq:def-K01}.
Furthermore, we define
\begin{equation}
    \label{eq:epshat-xbar}
    \hat{\varepsilon}(\bar{\xbf}) := \min \{ \|\qbf_k(\bar{\xbf})\| \ | \ \|\qbf_k(\bar{\xbf})\| > 0, \ k \in [d_1] \}.
\end{equation}
Notice that $\hat{\varepsilon}(\bar{\xbf}) > 0$ since the set minimized on the right-hand side of \eqref{eq:epshat-xbar} is a finite set of positive numbers.
Since $\varepsilon_{t_{l_j}} \to 0$ and $\xbf_{t_{l_j}+1}\rightarrow \bar{\xbf} $ as $j \to \infty$, and $\qbf_k$ is continuous for every $k \in [d_1]$, we know there exists $\hat{j} \in \mathbb{N}$, such that for all $j \ge \hat{j}$ there are
\begin{align}
    \|\qbf_k(\xbf_{t_{l_j}+1}) - \qbf_k(\bar{\xbf}) \| & \le \frac{\hat{\varepsilon}(\bar{\xbf})}{3} , \quad \forall\, k \in [d_1] , \label{eq:qkxlj-qkx-bound} \\
    0 < \varepsilon_{t_{l_j}} & \le \frac{\hat{\varepsilon}(\bar{\xbf})}{3} .\label{eq:epstlj}
\end{align}
Therefore, for any $j \ge \hat{j}$ and $k \in K_1(\bar{\xbf})$, there is
\begin{equation}
\label{eq:qj-epsj}
\|\qbf_k(\xbf_{t_{l_j}+1})\| \ge \|\qbf_k(\bar{\xbf})\| - \frac{\hat{\varepsilon}(\bar{\xbf})}{3} \ge \frac{2\hat{\varepsilon}(\bar{\xbf})}{3} > \frac{\hat{\varepsilon}(\bar{\xbf})}{3} \ge \varepsilon_{t_{l_j}}
\end{equation}
where the first inequality is due to \eqref{eq:qkxlj-qkx-bound}, the second due to \eqref{eq:epshat-xbar}, and the third due to \eqref{eq:epstlj}. Therefore \eqref{eq:qj-epsj} implies that $k \in Q_1^j$. Hence $K_1(\bar{\xbf}) \subset Q_1^j$ when $j \ge \hat{j}$.
In this case, we have 
\[
[d_1] = (K_0(\bar{\xbf}) \cap Q_0^j)\ \cup\ (K_0(\bar{\xbf}) \cap Q_1^j)\ \cup\ K_1(\bar{\xbf}),
\]
which means that $[d_1]$ is a disjoint union of these three sets.
Therefore, for all $j \ge \hat{j}$, we can rewrite \eqref{eq:d_r_epsj} as
\begin{align}
    \nabla r_{\varepsilon_{t_{l_j}}} (\xbf_{t_{l_j}+1}) & =
\sum_{k \in K_0(\bar{\xbf}) \cap Q_0^j} \nabla \qbf_k(\xbf_{t_{l_j}+1})^{\top}\frac{\qbf_k(\xbf_{t_{l_j}+1})}{\varepsilon_{t_{l_j}}} + \sum_{k \in K_0(\bar{\xbf}) \cap Q_1^j} \nabla \qbf_k(\xbf_{t_{l_j}+1})^{\top}\frac{\qbf_k(\xbf_{t_{l_j}+1})}{\| \qbf_k(\xbf_{t_{l_j}+1}) \|} \nonumber \\
& \qquad + \sum_{k \in K_1(\bar{\xbf})} \nabla \qbf_k(\xbf_{t_{l_j}+1})^{\top} \frac{\qbf_k(\xbf_{t_{l_j}+1})}{\| \qbf_k(\xbf_{t_{l_j}+1}) \|}. \label{eq:d_r_epsj_new}
\end{align}
Since $\qbf_k$ and $\nabla \qbf_k$ are continuous for all $k$, and $\xbf_{t_{l_j}+1} \to \bar{\xbf}$ as $j \to \infty$, we know the third term on the right-hand side of \eqref{eq:d_r_epsj_new} converges as $j\to \infty$:
\begin{equation}
    \label{eq:d_r_term3}
    \sum_{k \in K_1(\bar{\xbf})} \nabla \qbf_k(\xbf_{t_{l_j}+1})^{\top} \frac{\qbf_k(\xbf_{t_{l_j}+1})}{\| \qbf_k(\xbf_{t_{l_j}+1}) \|} \quad \to \quad \sum_{k \in K_1(\bar{\xbf})}\nabla \qbf_k(\bar\xbf)^{\top}\frac{\qbf_k(\bar\xbf)}{\|\qbf_k(\bar\xbf)\|}.
\end{equation}
Now we check the convergence property of the sum of the first two terms on the right-hand side of \eqref{eq:d_r_epsj_new}, which we denote by $\sbf_j$, as $j \to \infty$. 
We notice that 
\begin{equation}
\label{eq:multiple-bound1}
\frac{\|\qbf_k(\xbf_{t_{l_j}+1})\|}{\varepsilon_{t_{l_j}}} \le 1, \qquad \Bigg\|\frac{\qbf_k(\xbf_{t_{l_j}+1})}{\| \qbf_k(\xbf_{t_{l_j}+1}) \|} \Bigg\| = 1    
\end{equation}
for all $k \in [d_1]$ and $j \in \mathbb{N}$.
Since $\nabla \qbf_k$ is continuous, we know $\nabla \qbf_k (\xbf_{t_{l_j}+1}) \to \nabla \qbf_k(\bar{\xbf})$ as $j\to \infty$. 
Moreover, for any $\delta \ge 0$, we denote
\begin{equation}
\label{eq:B-delta}
    B_{\delta}(\bar{\xbf}):= \Big\{ \sum_{k \in K_0(\bar{\xbf})} \nabla \qbf_k(\bar{\xbf})^{\top} \wbf_k \ \Big\vert \ \|\wbf_k\| \le 1+\delta, \ \wbf_k \in \mathbb{C}^{d_2}, \ \forall\, k \in K_0(\bar{\xbf}) \Big\} \subset \mathbb{C}^{n}.
\end{equation}
Then for any $\delta>0$, we know $\sbf_j$ will fall into $B_{\delta}(\bar{\xbf})$ for all sufficiently large $j$ due to \eqref{eq:multiple-bound1}.
%
%
Therefore, $\sbf_j$ must converge to the set $B_0(\bar{\xbf}) = \cap_{\delta>0} B_{\delta}(\bar{\xbf})$.
%
%
We also notice that $\| \Pi(\wbf_k; \Ccal(\nabla \qbf_k(\xbf))) \| \le \|\wbf_k\| \le 1$ for any admissible $\wbf_k$ in $B_0(\bar{\xbf})$, since $\Pi$ is an orthogonal projection of $\wbf_k$ onto $\Ccal(\nabla \qbf_k(\xbf))$ which is a linear subspace of $\mathbb{C}^{d_2}$.
Therefore, we have
\[
B_0(\bar{\xbf}) \subset \Big\{ \sum_{k \in K_0(\bar{\xbf})} \nabla \qbf_k(\bar{\xbf})^{\top} \wbf_k \ \Big\vert \ \|  \Pi(\wbf_k; \Ccal(\nabla \qbf_k(\xbf))) \| \le 1, \ \wbf_k \in \mathbb{C}^{d_2}, \ \forall k \in K_0(\bar{\xbf}) \Big\}.
\]
Notice that $B_0(\bar{\xbf})$ is closed and bounded, and hence compact in $\mathbb{C}^{n}$.
Combining with \eqref{eq:d_r_epsj_new}, \eqref{eq:d_r_term3} and $\nabla f(\xbf_{t_{l_j}+1}) \to \nabla f(\bar{\xbf})$ as $\nabla f$ is continuous, we know $\nabla \phi_{\varepsilon_{t_{l_j}}} (\xbf_{t_{l_j}+1}) = \nabla f(\xbf_{t_{l_j}+1}) + \nabla r_{\varepsilon_{t_{l_j}}} (\xbf_{t_{l_j}+1})$ converges to a point in
\[
\nabla f(\bar{\xbf}) + B_0(\bar{\xbf}) + \sum_{k \in K_1(\bar{\xbf})}\nabla \qbf_k(\bar\xbf)^{\top}\frac{\qbf_k(\bar\xbf)}{\|\qbf_k(\bar\xbf)\|} \subset \partial \phi(\bar{\xbf})
\]
as $j \to \infty$, which verifies \eqref{lim grad smoothPhi}.
As $\partial \phi(\bar{\xbf})$ is closed, from \eqref{gradphi} and \eqref{lim grad smoothPhi}, we conclude that $\mathbf{0} \in \partial \phi(\bar{\xbf})$. 
Hence $\bar{\xbf}$ is a  Clarke stationary point of $\phi (\xbf)$. The first claim is proved.

Next, given $\epstol> 0$, we need to estimate $t_{l+1}-t_l$. 
By \eqref{mink} in Lemma \ref{lemma2} with its $\varepsilon$, $\eta$ and initial set to $\varepsilon_{t_l}=\varepsilon_0 \gamma^l$, $\sigma \gamma \varepsilon_{t_l+1}=\sigma\varepsilon_0\gamma^{l+1}$, and $\xbf_{t_l+1}$, respectively, we have
\begin{equation}
\label{eq:tl1-tl}
    t_{l+1}-t_l\le \max \Big\{\frac{\varepsilon_0 \gamma^l(\frac{L_{\varepsilon_0 \gamma^l}}{2}+\frac{\eta_3}{\varepsilon_0\gamma^l})^2}{\eta_3 \rho^2}, \frac{2}{\eta_1^2\eta_2} \Big\}
    \frac{\phi_{\varepsilon_0 \gamma^l}(\xbf_0)-\phi^*+\frac{d_1\varepsilon_0 \gamma^l}{2}}{(\sigma\varepsilon_0\gamma^{l+1})^2}.
\end{equation}
Recall that $L_{\varepsilon_{t_l}} = L_{\varepsilon_0 \gamma^l}=O((\varepsilon_0 \gamma^l)^{-1})$ from Lemma \ref{lemma1}, and $\phi_{\varepsilon_0 \gamma^l}(\xbf_0)\le \phi(\xbf_0)$ from \eqref{eq:r_eps_bound}, we know there exist constants $C_1, C_2>0$ depending on $\eta_1$, $\eta_2$, $\eta_3$, $\rho$, $d_1$, $\sigma$, $\varepsilon_0$ and $\phi(\xbf_0)-\phi^*$ only, such that
\begin{equation}
 t_{l+1}-t_l\le   \gamma^{-l}(C_1+C_2 \gamma^l) \gamma^{-2(l+1)}=C_1 \gamma^{-3l-2}+C_2\gamma^{-2l-2},
\end{equation}
because the max term in \eqref{eq:tl1-tl} is of $O(\gamma^{-l})$. This verifies the second claim.

To prove the last claim, we need to compute the total number of $\varepsilon$-reductions (i.e., line 15 is met in Algorithm \ref{alg:elda}) required to reduce $\varepsilon_0$ to $\epstol$.
Let $\hat{\ell}$ be the number of such reductions, then $\sigma \varepsilon_0\gamma^{\hat{\ell}-1}\ge \epstol$. Thus, 
\begin{equation}\label{est-l-0}
\gamma^{-(\hat{\ell}-1)}\le \sigma \varepsilon_0 \epstol^{-1}.
\end{equation}
From \eqref{eq:inner-length}, we have
\begin{align*}
\sum_{l=0}^{\hat{\ell}-1}(t_{l+1}-t_l) \le \sum_{l=1}^{\hat{\ell}-1} (C_1 \gamma^{-3l-2}+C_2\gamma^{-2l-2}) \le \frac{C_1(\gamma^{-3(\hat{\ell}-1)}- \gamma^3)}{(1-\gamma^3)\gamma^2}+\frac{C_2(\gamma^{-2(\hat{\ell}-1)} - \gamma^2)}{(1-\gamma^2)\gamma^2},
\end{align*}
which is $O(\epstol^{-3})$ due to \eqref{est-l-0}.
Therefore, the last claim is proved.
\end{proof}

\subsection{Unrolling network of modified ELDA for TL}
While the modified ELDA Algorithm \ref{alg:elda} is to solve general non-smooth non-convex optimization problems of form \eqref{eq:formulation}, we need to adopt it in our TL approach to learn the feature-extractor and adapters described in \eqref{eq:solve-g} and \eqref{eq:solve-h}. 
To this end, we use the idea of unrolling networks by replacing the lower-level optimization problem solver with a $T$-phase network, where the $t$th phase is exactly the $t$th iteration of Algorithm \ref{alg:elda}. Here $T$ is a manually set phase number, which is usually between 10 and 20, and we will discuss how to determine it in Section \ref{subsec:ablation}. 
This network has measurement $\ybf$ as input and reconstruction $\xbf$ as output, where $\xbf$ is a function of the parameters of $\gbf$ and $\hbf_i/\hat{\hbf}_j$. 
Then $\xbf$ is plugged into the upper-level problem, i.e., \eqref{eq:solve-g-upper} and \eqref{eq:solve-h-upper}, and the problems reduce to minimizing the loss function of the parameters of $\gbf$ and $\hbf_i/\hat{\hbf}_j$. We solve the minimization problem using ADAM \cite{kingma2015adam} in our experiments on the data sets to be specified later.
This $T$-phase reconstruction unrolling network following the universal learnable descent algorithm \ref{alg:elda} is called U-LDA in our experiments below.

\section{Experimental results}\label{sec:experiment}

\subsection{Implementation details}
\label{subsec:implementation}

We provide some details of the U-LDA implementation, including: the network architectures of the feature-extractor and adapters, an effective method to initialize these networks, and a data augmentation method to improve data exploitation.

\paragraph{Network architecture} U-LDA is a $T$-phase unrolling network based on Algorithm \ref{alg:elda}, i.e., the $t$th phase of U-LDA exactly follows the computation of one iteration in Algorithm \ref{alg:elda}, for $t=0,1,\dots,T$. The differences when training the U-LDA in the two steps described in Section \ref{sec:method} are that both $\gbf$ and $\hbf_i$ are trained in Step 1, while $\gbf$ is fixed and only $\hat{\hbf}_j$'s are trained in Step 2. 
In our U-LDA implementation, the universal feature-extractor $\gbf$ is a CNN consisting of four layers of complex-valued convolutions, each followed by a smoothed ReLU activation \cite{chen2021learnable}. Each layer has complex convolution and uses a $3\times3$ kernel with 16 channels. All the adapters $\hbf_i$'s and $\hat{\hbf}_j$'s have the same architecture, which is a CNN with one complex convolutional layer with $3\times3$ kernel and 16 channels.

\paragraph{Initialization of the universal feature-extractor} 
In our numerical tests, we found that a proper initialization of the feature-extractor $\gbf$ can significantly improve training and yield better reconstruction quality. 
Specifically, we train $\gbf$ without the adapters $\hbf_i$'s in the problem \eqref{eq:solve-g} using the LDA method \cite{chen2021learnable} for each data set $D_i$. This results in a $\gbf_i$ for every $i \in [I]$. Since $\gbf$ and $\gbf_i$'s have the same architecture, we set the initial parameters of $\gbf$ in the training in Step 1 as the average of the counterpart parameters of $\gbf_i$'s. 
Empirical results demonstrate that solution quality becomes noticeably better than using typical random parameter initialization.

\paragraph{Artificial undersampling for data augmentation}
We propose an artificial undersampling strategy to enhance data exploitation in handling small data set $\Dhat$. 
Specifically, we further undersample the already under-sampled $k$-space data $\ybf$. In the case of MR image reconstruction, for instance, we keep most of the low frequency $k$-space data while significantly undersample the remaining high frequency data of $\ybf$, such that the total amount of data is half of the original under-sampled data. 
Then we train U-LDA on one of the two data sets, use the trained parameters as the initial to train U-LDA on the other data set, and finally, the newly trained parameters as the initial to train U-LDA on the original data. This strategy has a data augmentation effect similar to bootstrapping to exploit more information from small data, and as a consequence improves the training quality in practice.

\subsection{Numerical results}
\label{subsec:results}
To evaluate the performance of the proposed method, we conduct numerical experiments in three different TL settings in MRI reconstruction. 
\emph{Cross-anatomy}: transferring knowledge learned from large image datasets of some anatomy regions to reconstruct images of different anatomies with limited data; 
\emph{Cross-sampling-rate}: transferring knowledge learned from large datasets with several given sampling ratios to reconstruct images of different sampling ratios; 
\emph{Cross-modality}: transferring knowledge learned from large natural image datasets to reconstruct MR images. 

{We use Cartesian sampling masks to artificially undersample the $k$-space measurements in all experiments. All $k$-space data are measured in single-coil setting. 
In all but the cross-sampling-ratio experiment, we use a fixed Cartesian mask of 20\% sampling ratio.
In the cross-sampling-ratio experiment, we use several Cartesian masks with different sampling ratios, i.e., 10\%, 15\% 20\%, 25\%, and 30\%.
In either case, the Cartesian mask acquires all the lowest 5\% frequencies.
The remainder is acquired by following a Gaussian weighted random sampling, such that low frequency coefficients are more likely to be sampled while the total amount of acquired coefficients is equal to the preset ratio. 
}

{In our experiments, we also use several state-of-the-art TL methods for comparison. Specifically, we compare with recent TL methods Universal MRI (U-MRI)~\cite{liu2021universal} and the meta-learning (Meta) \cite{bian2021optimization-based}.
In addition, we include DnCn~\cite{schlemper2018deep} and LDA~\cite{chen2021learnable} which are non-TL methods for baseline comparison. DnCn and LDA are trained and tested on each dataset separately, and then further trained on the testing datasets using a smaller learning rate and fewer epochs. 
For DnCn, we adopt the same network architecture as the independent models used in~\cite{liu2021universal}. The model is trained for 200 epochs 
using Adam optimizer with learning rate $10^{-4}$ and other parameters as default. 
For U-MRI, we follow the training and transfer procedures reported in~\cite{liu2021universal}. Since \cite{liu2021universal} does not specify the number of epochs for regular and transfer training, we perform regular training for 200 epochs and apply early stopping during the transfer phase. We use the Adam optimizer with learning rate $10^{-4}$ for regular training and $10^{-5}$ for transfer training. 
For Meta, we apply the warm-up training strategy adopted in both~\cite{bian2021optimization-based,chen2021learnable}. Specifically, we perform warm-up training for 100 epochs every two phases, using a decayed learning rate starting from $10^{-4}$ with a decay factor of $0.8$ applied every two phases. In Meta, a validation step of 5 inner-loop updates is performed for each batch.
For LDA, we also employ the same warm-up training strategy, with 100 epochs of training every two phases.}

{We further fine-tune the UNet~\cite{arshad2021transfer} and the HUMUS-Net~\cite{fabian2022humus-net} as alternative TL methods.
By fine-tuning, we mean that both networks are pre-trained on all available data and then fine-tuned on the target data set $\Dhat$. Fine-tuning is performed using Adam with learning rate $10^{-5}$, batch size of 4, and 100 epochs for UNet and 50 epochs for HUMUS-Net. These settings are selected as they yield near-optimal PSNR values, and further training does not show noticeable improvements. }

{For reference purpose, images obtained by full $k$-space and undersampled $k$-space measurements, referred to as ``True" and ``Zero-filling", respectively, are included in visual comparisons below.}

{Our experiments are implemented in PyTorch package and conducted in the same computer with one NVIDIA A100 GPU. The code will be released to the public upon acceptance of this work.}

\subsubsection{Cross-anatomy transfer}

\label{subsubsec:cross-anatomy}
MRI reconstruction often suffer the issue of data limitation in certain anatomical regions. The experiments here are motivated by the potential of our method in learning knowledge from large MRI datasets including MR images of anatomical regions different from the target ones, and transferring to the latter which have limited data.

We first evaluate the proposed method on the fastMRI dataset \cite{knoll2020fastmri,zbontar2018fastmri}.
We retrieve 400 2D brain image slices to form $D_1$, and 400 knee images to form $D_2$. All images are grayscale with resolution $256\times 256$.
We also retrieve another 200 images for each of these two anatomies for testing purpose, i.e., to evaluate the performance of the trained $\hbf_i \circ \gbf$ on the reconstruction of these images.
We retrieve 100 grayscale cardiac images of resolution $256\times 256$ to form $\hat{D}_1$ and another 100 prostate images to form $\hat{D}_2$.

We use the same Cartesian sampling mask with 20\% sampling ratio on the $k$-spaces of all these images. After training the adapters $\hat{\hbf}_j$, we also randomly sample 200 images from the cardiac and prostate data excluding those in $\hat{D}_1$ and $\hat{D}_2$ to test the transfer ability of $\hat{\hbf}_j \circ \gbf$.
The average PSNR and SSIM of reconstructions by all compared methods on these four anatomies are given in Table~\ref{tab:MRI-anatomy}, where anatomies marked by $\dagger$ are those with limited image data for training and we aim at improving the reconstruction by transferring knowledge from other anatomies. 
In Table~\ref{tab:MRI-anatomy}, we see that the proposed U-LDA method demonstrates significant improvements in reconstruction quality by transferring knowledge learned from large brain and knee datasets to reconstruct cardiac and prostate images with small training datasets. By comparing LDA (a non-TL reconstruction method) and U-LDA, we also find that the knowledge transfer ability of U-LDA helps to improve the reconstruction quality across different anatomies. This further suggests the efficacy of knowledge transfer in deep learning based image reconstruction.
We also randomly select one image from each of the four test anatomies and show the images obtained by different compared methods in Figure~\ref{fig:cross-anatomy}, which also indicate improved reconstruction quality using the proposed U-LDA. 
\begin{table}[t]
\caption{Average PSNR and SSIM of reconstructed images on four different anatomies in Section \ref{subsubsec:cross-anatomy}. The anatomies marked by $\dagger$ (cardiac and prostate) have small training datasets. A fixed Cartesian mask of 20\% sampling ratio is used to undersample the $k$-spaces of all images.}
    \centering
    \scalebox{0.9}{
    \begin{tabular}{ccccccccc}
    \toprule
    \multirow{3}{*}{Method} & \multicolumn{4}{c}{PSNR (dB)} & \multicolumn{4}{c}{SSIM (\%)}\\
    \cmidrule(l{1em}r{1em}){2-5} \cmidrule(l{1em}r{1em}){6-9}
    & Brain & Knee & $\text{Cardiac}^\dagger$ & $\text{Prostate}^\dagger$ & Brain & Knee & $\text{Cardiac}^\dagger$ & $\text{Prostate}^\dagger$\\
    \addlinespace
    \cmidrule{1-9}
    DnCn
    ~\cite{schlemper2018deep} & 36.02 & 32.17 & 33.91 & 31.32 & 94.36 & 80.50 & 91.66 & 85.59\\
    LDA
    ~\cite{chen2021learnable} & 37.38 & 32.38 & 34.78 & 32.92 & 95.42 & 80.63 & 93.16 & 89.10 \\
    UNet
    ~\cite{arshad2021transfer} & 34.05 & 31.78 & 32.05 & 30.74 & 92.33 & 79.72 & 88.50 & 84.49\\ 
    U-MRI \cite{liu2021universal} & 36.40 & 32.30 & 34.85 & 31.69 & 94.67 & 80.67 & 92.88 & 86.88 \\
    Meta \cite{bian2021optimization-based} & 36.32 & 32.11 & 35.61 & 32.23 & 94.67 & 79.81 & 93.95 & 87.77\\
    HUMUS-Net
    ~\cite{fabian2022humus-net} & 36.29 & 32.39 & 35.22 & 32.62 & 94.52 & 80.91 & 93.39 & 88.55\\
    U-LDA (Ours) & \textbf{37.88} & \textbf{32.47} & \textbf{37.26} & \textbf{33.31} & \textbf{95.81} & \textbf{80.84} & \textbf{95.59} & \textbf{89.90} \\
    \bottomrule
    \end{tabular}}
    \label{tab:MRI-anatomy}
\end{table}
\begin{figure}[h!]
    \centering
    \begin{subfigure}[b]{0.16\textwidth}
        \includegraphics[width=\linewidth]{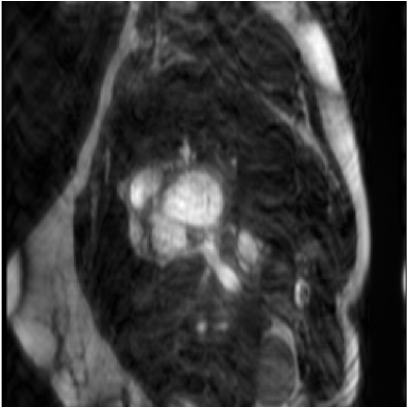}
    \end{subfigure}
    \hfill
    \begin{subfigure}[b]{0.16\textwidth}
        \includegraphics[width=\linewidth]{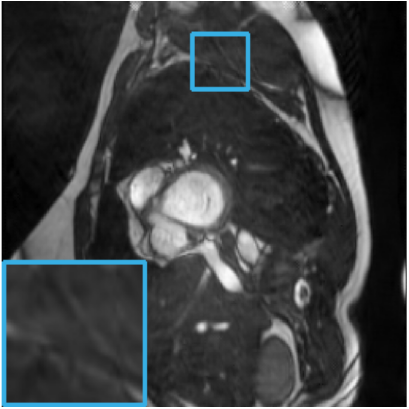}
    \end{subfigure}
    \hfill
    \begin{subfigure}[b]{0.16\textwidth}
        \includegraphics[width=\linewidth]{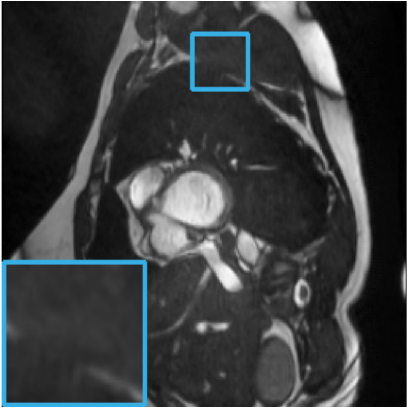}
    \end{subfigure}
    \hfill
    \begin{subfigure}[b]{0.16\textwidth}
        \includegraphics[width=\linewidth]{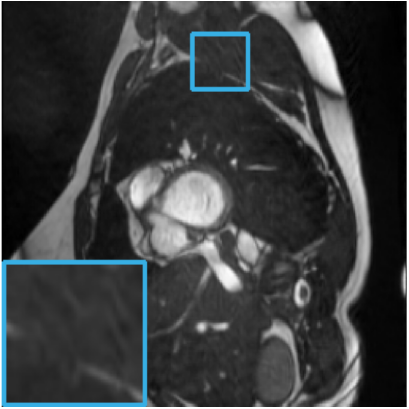}
    \end{subfigure}
    \hfill
    \begin{subfigure}[b]{0.16\textwidth}
        \includegraphics[width=\linewidth]{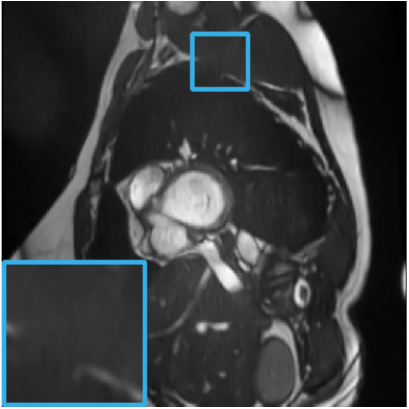}
    \end{subfigure}
    \hfill
    \begin{subfigure}[b]{0.16\textwidth}
        \includegraphics[width=\linewidth]{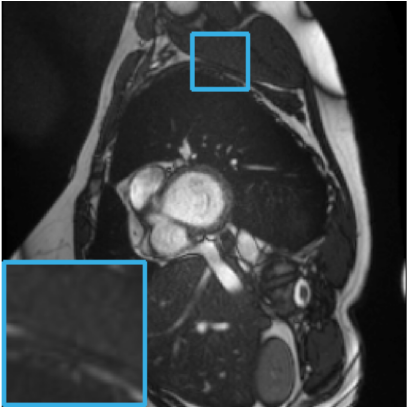}
    \end{subfigure}
    \begin{subfigure}[b]{0.16\textwidth}
        \includegraphics[width=\linewidth]{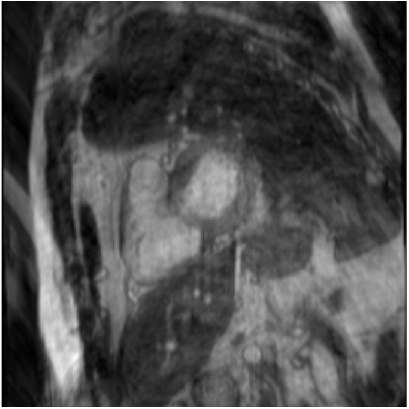}
    \end{subfigure}
    \hfill
    \begin{subfigure}[b]{0.16\textwidth}
        \includegraphics[width=\linewidth]{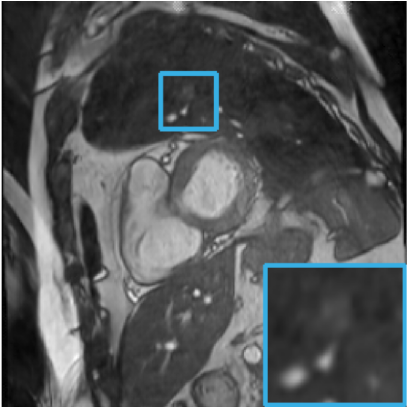}
    \end{subfigure}
    \hfill
    \begin{subfigure}[b]{0.16\textwidth}
        \includegraphics[width=\linewidth]{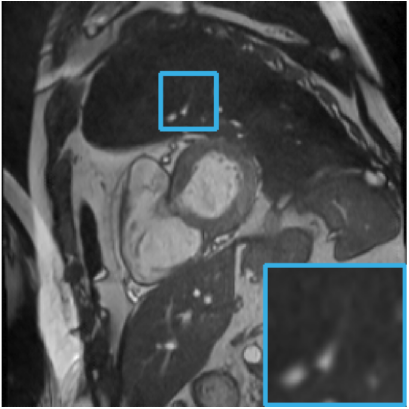}
    \end{subfigure}
    \hfill
    \begin{subfigure}[b]{0.16\textwidth}
        \includegraphics[width=\linewidth]{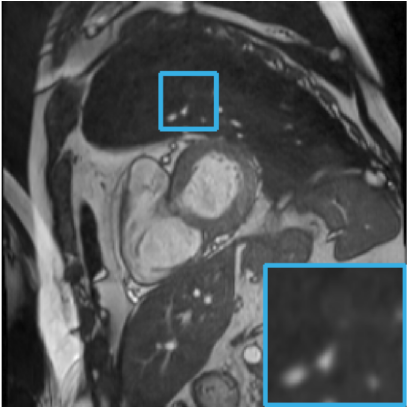}
    \end{subfigure}
    \hfill
    \begin{subfigure}[b]{0.16\textwidth}
        \includegraphics[width=\linewidth]{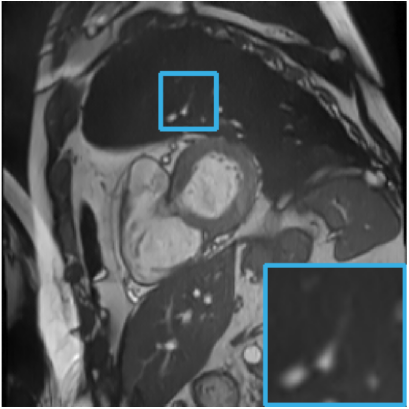}
    \end{subfigure}
    \hfill
    \begin{subfigure}[b]{0.16\textwidth}
        \includegraphics[width=\linewidth]{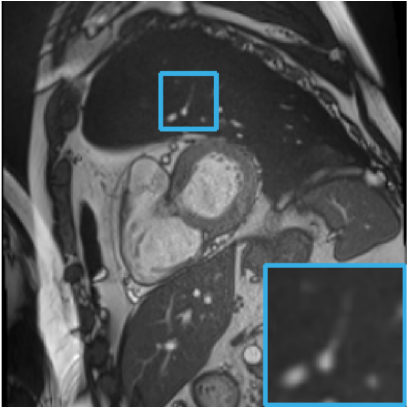}
    \end{subfigure}
    \begin{subfigure}[b]{0.16\textwidth}
        \includegraphics[width=\linewidth]{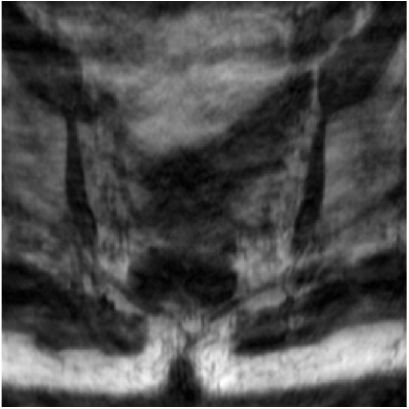}
    \end{subfigure}
    \hfill
    \begin{subfigure}[b]{0.16\textwidth}
    \includegraphics[width=\linewidth]{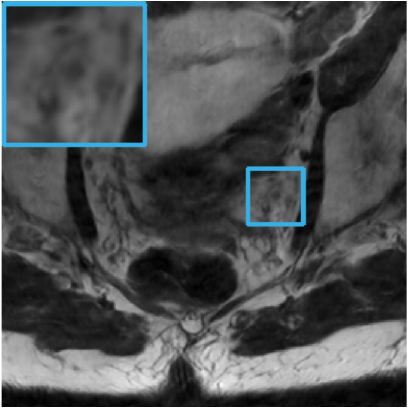}
    \end{subfigure}
    \hfill
    \begin{subfigure}[b]{0.16\textwidth}
    \includegraphics[width=\linewidth]{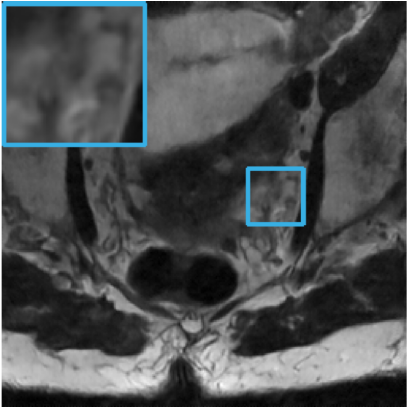}
    \end{subfigure}
    \hfill
    \begin{subfigure}[b]{0.16\textwidth}
    \includegraphics[width=\linewidth]{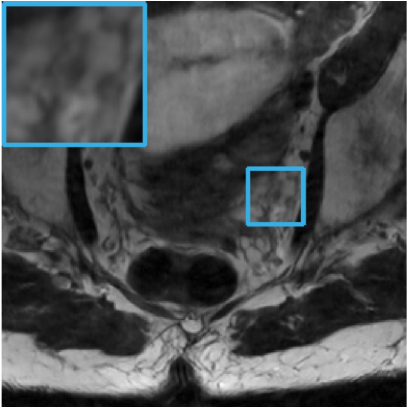}
    \end{subfigure}
    \hfill
    \begin{subfigure}[b]{0.16\textwidth}
    \includegraphics[width=\linewidth]{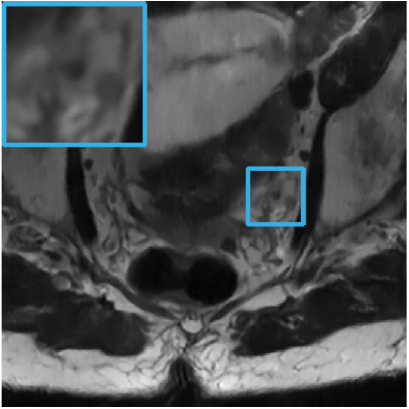}
    \end{subfigure}
    \hfill
    \begin{subfigure}[b]{0.16\textwidth}
    \includegraphics[width=\linewidth]{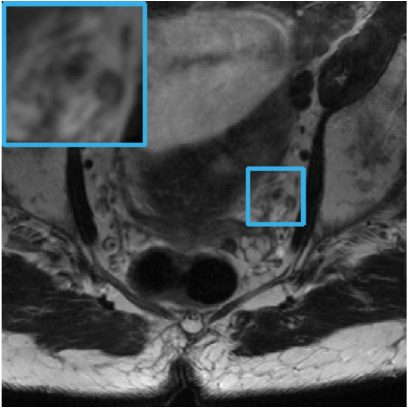}
    \end{subfigure}
    \begin{subfigure}[b]{0.16\textwidth}
        \includegraphics[width=\linewidth]{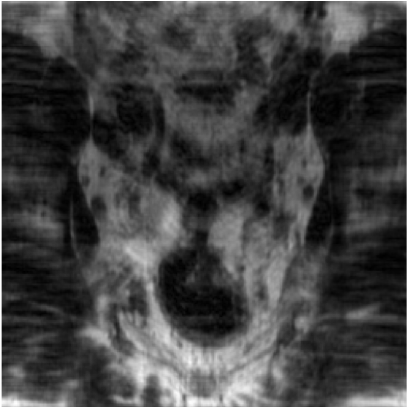}
        \caption{Zero-filling}
    \end{subfigure}
    \hfill
    \begin{subfigure}[b]{0.16\textwidth}
    \includegraphics[width=\linewidth]{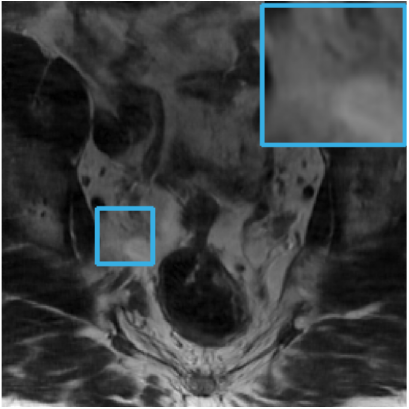}
    \caption{UNet}
    \end{subfigure}
    \hfill
    \begin{subfigure}[b]{0.16\textwidth}
    \includegraphics[width=\linewidth]{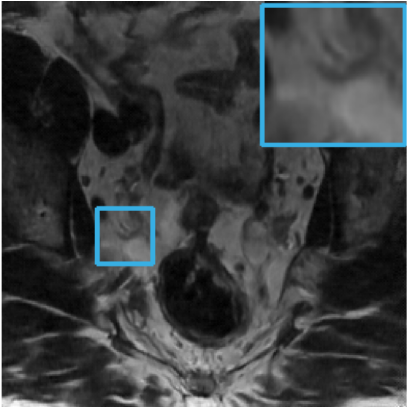}
    \caption{U-MRI}
    \end{subfigure}
    \hfill
    \begin{subfigure}[b]{0.16\textwidth}
    \includegraphics[width=\linewidth]{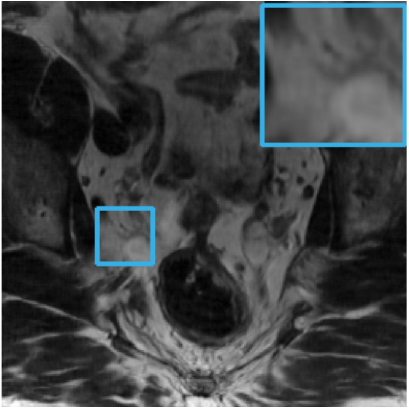}
    \caption{HUMUS}
    \end{subfigure}
    \hfill
    \begin{subfigure}[b]{0.16\textwidth}
    \includegraphics[width=\linewidth]{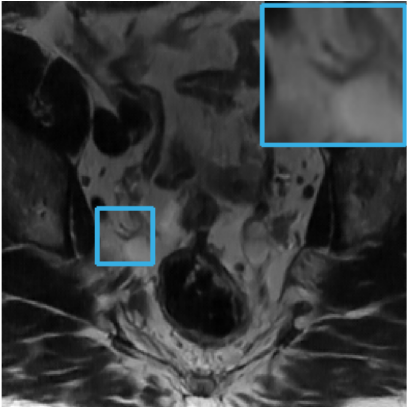}
    \caption{U-LDA}
    \end{subfigure}
    \hfill
    \begin{subfigure}[b]{0.16\textwidth}
    \includegraphics[width=\linewidth]{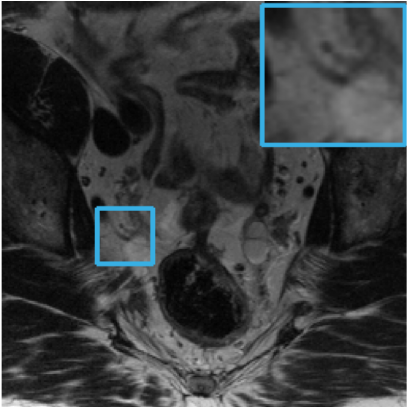}
    \caption{True}
    \end{subfigure}
    \caption{Comparison of cross-anatomy TL reconstruction results using 20\% Cartesian sampling ratio in Section \ref{subsubsec:cross-anatomy}. Top two rows show two instances of reconstructed cardiac images. Bottom rows show two instances of reconstructed prostate images. Blue squares zoom in the corresponding small squares in the images.}
    \label{fig:cross-anatomy}
\end{figure}

\subsubsection{Cross-sampling-ratio transfer}
\label{subsubsec:cross-sampling}

We now evaluate our method in TL across different sampling ratios.
Sampling ratios often vary substantially in different acquisition patterns. 
It is difficult to obtain sufficient training data for the reconstruction task with a new specific sampling ratio. In this test, we apply our method to learn knowledge from large data with various sampling ratios and transfer it to reconstruct images where data is at new unseen sampling ratios.

We have 300 $k$-space data obtained from the randomly selected brain images in the fastMRI dataset. For each of the sampling ratios 10\%, 20\%, and 30\%, we form $D_1$, $D_2$, and $D_3$. We train the feature-extractor $\gbf$ on them as in Step 1 and then the adapters $\hat{\hbf}_j$'s on small $k$-space data $\hat{D}_1$ and $\hat{D}_2$ at new sampling ratios at 15\% and 25\% respectively in Step 2. Both $\hat{D}_1$ and $\hat{D}_2$ have 100 images. 
The adapters $\hbf_i$'s trained in Step 1 and Step 2 are also used to test the performance of $\hbf_i \circ \gbf$ on the 200 randomly selected testing objects/images for sampling ratios 10\%, 20\%, and 30\%. 
In testing, we randomly sample 200 images from each of the datasets with 10\%, 20\%, 30\%, 15\% and 25\% sampling ratios (excluding those used to train $\gbf, \hbf_i ,\hat{\hbf}_j$'s), and apply $\hbf_i \circ \gbf$ and $\hat{\hbf}_j \circ \gbf$ to the corresponding datasets to check the reconstruction quality.

Similar to Table \ref{tab:MRI-anatomy}, we show the average PSNR and SSIM of reconstructed images with different sampling ratios in Table~\ref{tab:MRI-sampling}. 
We again observe that the proposed U-LDA method demonstrates significant improvements in reconstruction quality by transferring knowledge learned from reconstructions on sampling ratios of 10\%, 20\%, and 30\% to the cases of 15\% and 25\%. The proposed U-LDA continuously outperforms other non-TL and TL methods, except for the 30\% case where the LDA method performs slightly better even without TL because of the enriched $k$-space measurements. 
We also randomly select one image from each of the five sampling ratios and show the images obtained by different compared methods in Figure~\ref{fig:cross-sampling}, which demonstrates improved reconstruction quality using the proposed U-LDA. 
\begin{table}[t]
    \centering
    \caption{Average PSNR and SSIM of reconstructed images with sampling ratios 10\%, 15\%, 20\%, 25\%, and 30\% in Section \ref{subsubsec:cross-sampling}. The sampling ratios marked by $\dagger$ (15\% and 25\%) have small training datasets. For each sampling ratio, the same Cartesian mask is used to undersample the $k$-spaces of all images.}
    \scalebox{0.9}{
    \begin{tabular}{ccccccccccc}
    \toprule
    \multirow{3}{*}{Method} & \multicolumn{5}{c}{PSNR (dB)} & \multicolumn{5}{c}{SSIM (\%)}\\
    \cmidrule(l{1em}r{1em}){2-6} \cmidrule(l{1em}r{1em}){7-11}
    & $10\%$ & $20\%$ & $30\%$ & $15\%^\dagger$ & $25\%^\dagger$ & $10\%$ & $20\%$ & $30\%$ & $15\%^\dagger$ & $25\%^\dagger$\\
    \addlinespace
    \cmidrule{1-11}
    DnCn & 29.85 & 35.85 & 39.29 & 31.93 & 35.75 & 86.82 & 94.37 & 96.55 & 89.92 & 94.18\\
    LDA & 31.27 & 37.11 & \textbf{40.67} & 33.26 & 37.80 & 89.36 & 95.36 & \textbf{97.17} & 92.13 & 95.83\\
    UNet & 26.93 & 29.16 & 31.20 & 28.03 & 29.97 & 76.17 & 79.55 & 83.49 & 77.63 & 81.11\\
    U-MRI & 30.60 & 36.25 & 39.42 & 33.58 & 37.33 & 86.07 & 94.54 & 96.49 & 91.91 & 95.22\\
    Meta & 27.63 & 32.97 & 35.73 & 30.72 & 34.44 & 82.36 & 91.22 & 94.06 & 87.86 & 92.87\\
    HUMUS-Net & 28.85 & 32.21 & 34.95 & 30.17 & 33.21 & 85.55 & 90.92 & 94.00 & 87.77 & 92.26 \\
    U-LDA (Ours) & \textbf{32.13} & \textbf{37.54} & 40.48 & \textbf{34.90} & \textbf{38.80} & \textbf{90.72} & \textbf{95.62} & 97.13 & \textbf{93.70} & \textbf{96.37}\\
    \bottomrule
    \end{tabular}}
    \label{tab:MRI-sampling}
\end{table}
\begin{figure}[h!]
    \centering
    \begin{subfigure}[b]{0.16\textwidth}
        \includegraphics[width=\linewidth]{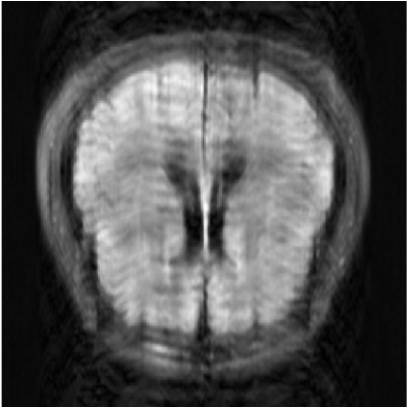}
    \end{subfigure}
    \hfill
    \begin{subfigure}[b]{0.16\textwidth}
        \includegraphics[width=\linewidth]{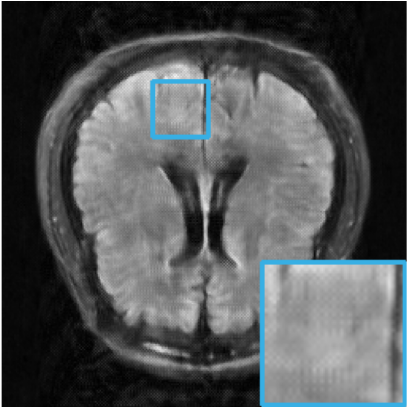}
    \end{subfigure}
    \hfill
    \begin{subfigure}[b]{0.16\textwidth}
        \includegraphics[width=\linewidth]{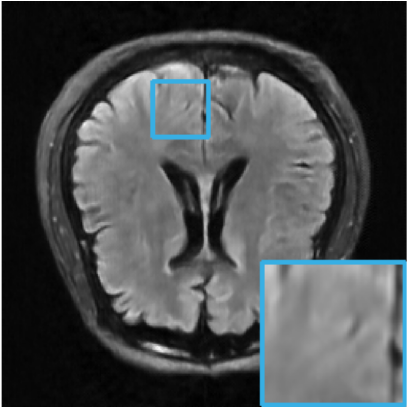}
    \end{subfigure}
    \hfill
    \begin{subfigure}[b]{0.16\textwidth}
        \includegraphics[width=\linewidth]{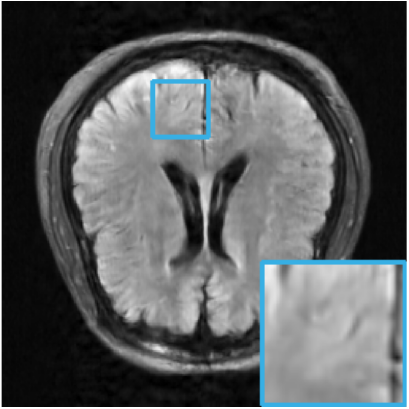}
    \end{subfigure}
    \hfill
    \begin{subfigure}[b]{0.16\textwidth}
        \includegraphics[width=\linewidth]{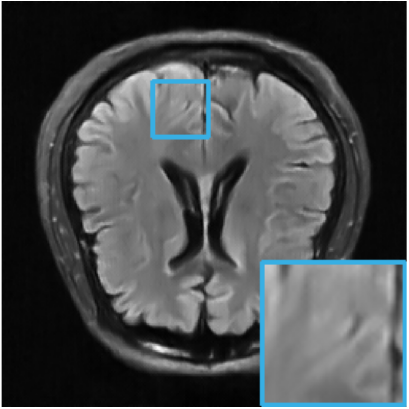}
    \end{subfigure}
    \hfill
    \begin{subfigure}[b]{0.16\textwidth}
        \includegraphics[width=\linewidth]{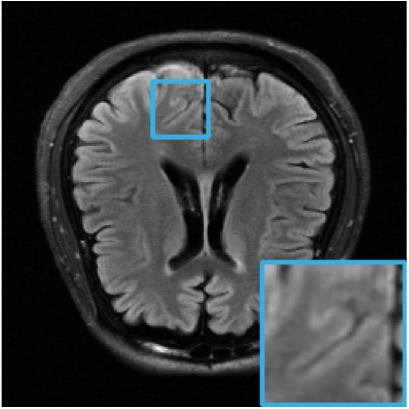}
    \end{subfigure}
    \vspace{3pt}
    \begin{subfigure}[b]{0.16\textwidth}
        \includegraphics[width=\linewidth]{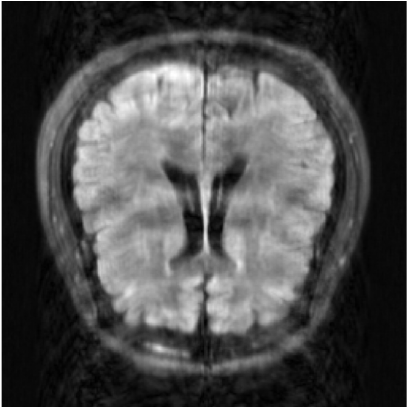}
    \end{subfigure}
    \hfill
    \begin{subfigure}[b]{0.16\textwidth}
        \includegraphics[width=\linewidth]{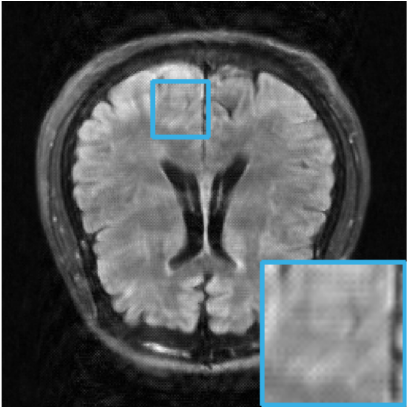}
    \end{subfigure}
    \hfill
    \begin{subfigure}[b]{0.16\textwidth}
        \includegraphics[width=\linewidth]{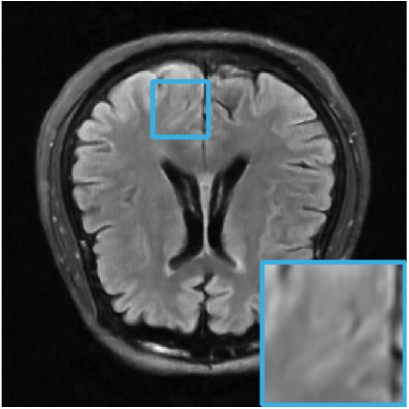}
    \end{subfigure}
    \hfill
    \begin{subfigure}[b]{0.16\textwidth}
        \includegraphics[width=\linewidth]{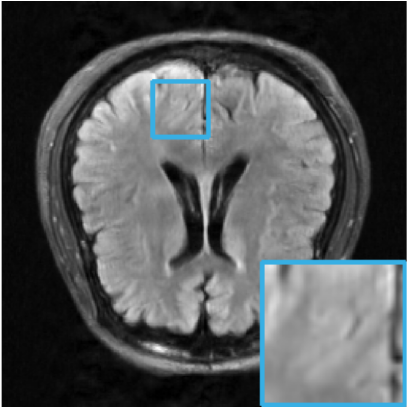}
    \end{subfigure}
    \hfill
    \begin{subfigure}[b]{0.16\textwidth}
        \includegraphics[width=\linewidth]{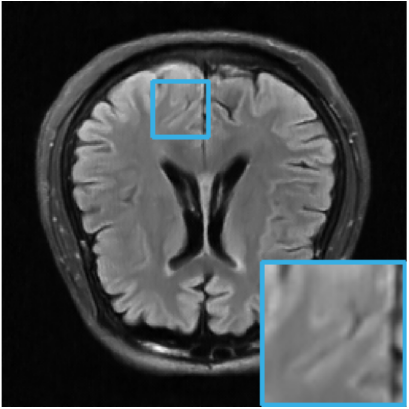}
    \end{subfigure}
    \hfill
    \begin{subfigure}[b]{0.16\textwidth}
        \includegraphics[width=\linewidth]{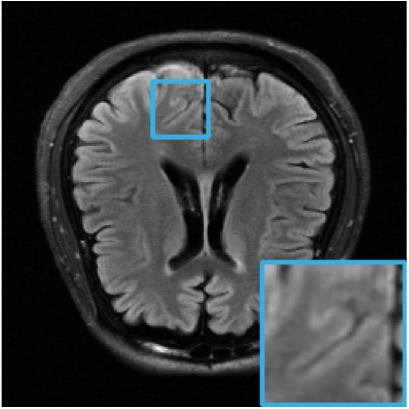}
    \end{subfigure}
    \begin{subfigure}[b]{0.16\textwidth}
        \includegraphics[width=\linewidth]{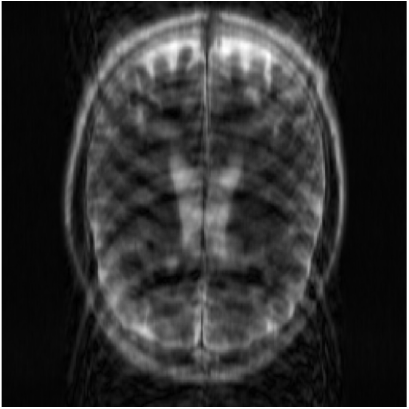}
    \end{subfigure}
    \hfill
    \begin{subfigure}[b]{0.16\textwidth}
        \includegraphics[width=\linewidth]{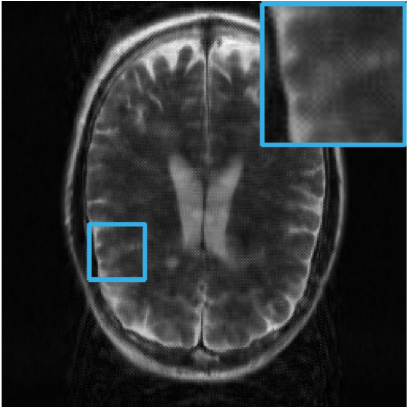}
    \end{subfigure}
    \hfill
    \begin{subfigure}[b]{0.16\textwidth}
        \includegraphics[width=\linewidth]{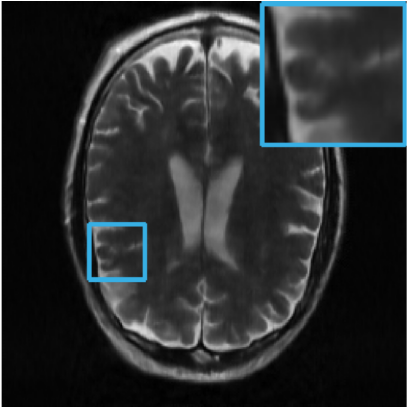}
    \end{subfigure}
    \hfill
    \begin{subfigure}[b]{0.16\textwidth}
        \includegraphics[width=\linewidth]{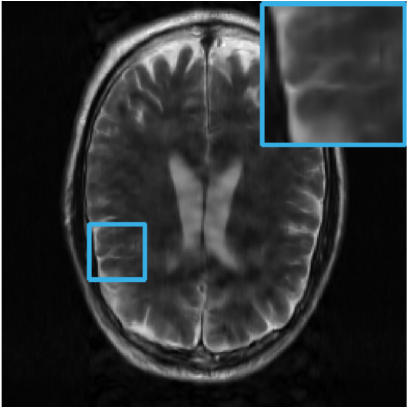}
    \end{subfigure}
    \hfill
    \begin{subfigure}[b]{0.16\textwidth}
        \includegraphics[width=\linewidth]{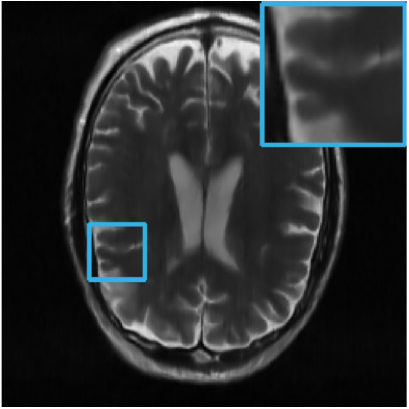}
    \end{subfigure}
    \hfill
    \begin{subfigure}[b]{0.16\textwidth}
        \includegraphics[width=\linewidth]{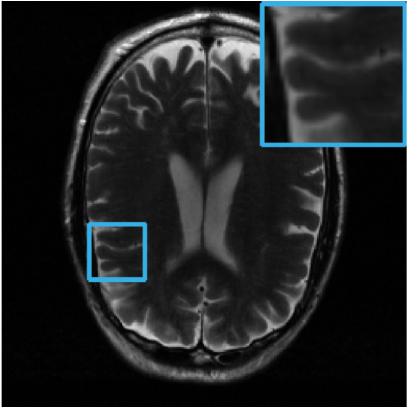}
    \end{subfigure}
    \begin{subfigure}[b]{0.16\textwidth}
        \includegraphics[width=\linewidth]{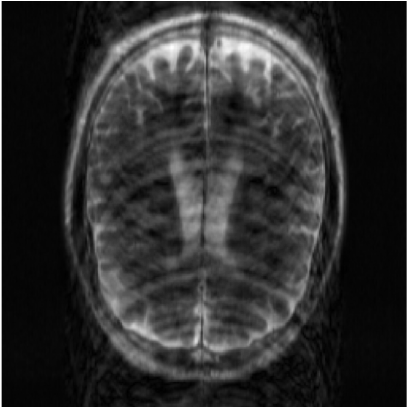}
        \caption{Zero-filling}
    \end{subfigure}
    \hfill
    \begin{subfigure}[b]{0.16\textwidth}
        \includegraphics[width=\linewidth]{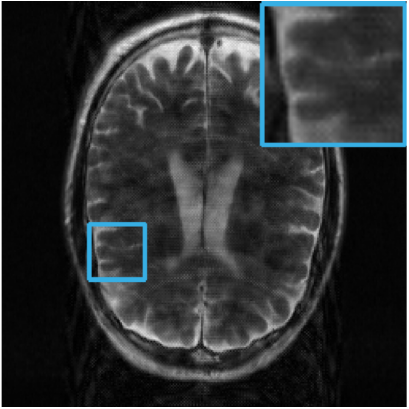}
        \caption{UNet}
    \end{subfigure}
    \hfill
    \begin{subfigure}[b]{0.16\textwidth}
        \includegraphics[width=\linewidth]{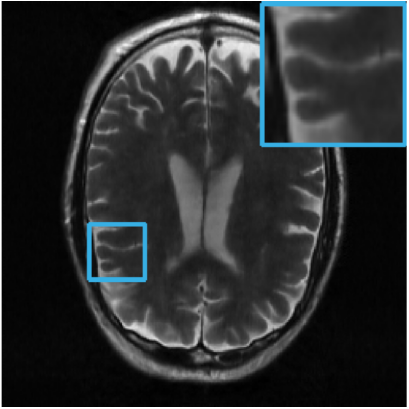}
        \caption{U-MRI}
    \end{subfigure}
    \hfill
    \begin{subfigure}[b]{0.16\textwidth}
        \includegraphics[width=\linewidth]{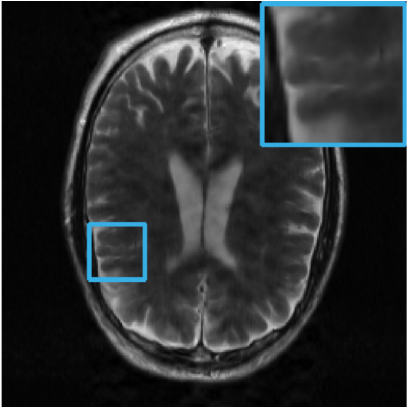}
        \caption{HUMUS}
    \end{subfigure}
    \hfill
    \begin{subfigure}[b]{0.16\textwidth}
        \includegraphics[width=\linewidth]{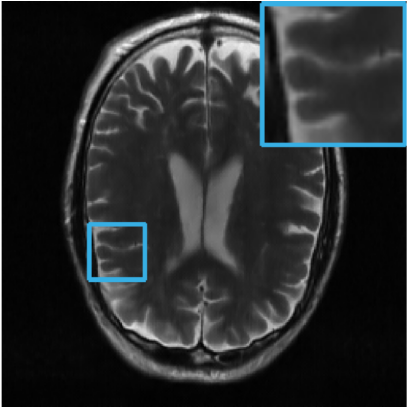}
        \caption{U-LDA}
    \end{subfigure}
    \hfill
    \begin{subfigure}[b]{0.16\textwidth}
        \includegraphics[width=\linewidth]{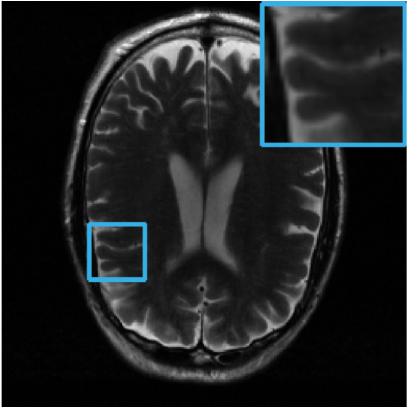}
        \caption{True}
    \end{subfigure}
    \caption{Comparison of cross-sampling-ratio TL reconstruction results in Section \ref{subsubsec:cross-sampling}. Top two rows show two instances of reconstructed images under 15\% sampling ratio. Bottom rows show two instances of reconstructed images under 25\% sampling ratio. Blue squares zoom in the corresponding small squares in the images.}
    \label{fig:cross-sampling}
\end{figure}

\subsubsection{Cross-modality transfer}
\label{subsubsec:cross-modality}

In this experiment, we test a cross-modality TL problem by learning information from natural images and adapting to MRI reconstruction using the proposed two-step framework.
We use the ImageNet and CIFAR-10 data sets as the source domain $D=\{D_1, D_2\}$. Specifically, we randomly select 400 grayscale images from each of the ImageNet and the CIFAR-10 data sets, normalize their pixel intensities to $[0, 1]$, and apply the Fourier transform to obtain their corresponding $k$-space data. 
Then we apply a $k$-space cartesian mask with 20\% sampling ratio to these data. 
We train $\gbf$ and $\hbf_i$'s on $D$ of natural images in Step 1, and use it to train $\hat{\hbf}_j$'s in Step 2 using the two MRI datasets fastMRI and Stanford2D, where 100 images are randomly selected and they form $\hat{D}_1$ and $\hat{D}_2$. We evaluate the performance on the test sets of both the natural images and MR images. To test the performance of all the compared methods, we randomly sample 200 images from each of the ImageNet, CIFAR-10, fastMRI and Stanford2D data sets, excluding those already in $\{D_1, D_2, \hat{D}_1, \hat{D}_2\}$.

We show the average PSNR and SSIM of images reconstructed by all compared methods in Table \ref{tab:nature-to-mri}. 
In Table \ref{tab:nature-to-mri}, we see that the proposed U-LDA method demonstrates a strong ability of transferring knowledge learned from natural images to medical images, which are of completely different modalities. 
We also randomly select one image from each of the four datasets and show the images obtained by different compared methods in Figure~\ref{fig:cross-modality}, which again suggests improved reconstruction quality using the proposed U-LDA. 
\begin{table}[t]
    \centering
    \caption{Average PSNR and SSIM of reconstructed images on different modality domains in Section \ref{subsubsec:cross-modality}. The modalities marked by $\dagger$ (fastMRI and Stanford2D) have small training datasets.}
    \scalebox{0.75}{
    \begin{tabular}{ccccccccc}
    \toprule
    \multirow{3}{*}{Method} & \multicolumn{4}{c}{PSNR (dB)} & \multicolumn{4}{c}{SSIM (\%)}\\
    \cmidrule(l{1em}r{1em}){2-5} \cmidrule(l{1em}r{1em}){6-9}
    & ImageNet & CIFAR-10 &$\text{fastMRI}^\dagger$ & $\text{Stanford2D}^\dagger$ & ImageNet & CIFAR-10&$\text{fastMRI}^\dagger$ & $\text{Stanford2D}^\dagger$\\
    \addlinespace
    \cmidrule{1-9}
    DnCn & 33.90 & 38.68 & 31.90 &  36.29 & 92.87 & 97.09 & 84.97 & 92.33\\
    LDA & 35.37 & 41.33 & 33.42 & 37.85 & 94.26 & 97.99 & 87.30 & 94.23 \\
    UNet & 31.36 & 36.27 & 30.88 & 35.33 & 89.94 & 96.40 & 82.58 & 91.00 \\
    U-MRI & 34.33 & 39.72 & 30.50 & 34.48 & 93.37 & 97.66 & 80.96 & 89.08 \\
    Meta & 32.81 & 38.11 & 30.93 & 35.04 & 90.95 & 96. 26 & 82.35 & 90.47 \\
    HUMUS-Net & 34.52 & 42.25 & 32.38 & 36.94 & 93.97 & 98.73 & 85.49 & 93.04 \\
    U-LDA & \textbf{36.66} & \textbf{43.21} & \textbf{33.58} & \textbf{38.17} & \textbf{95.56} & \textbf{98.75} & \textbf{87.34} & \textbf{94.41} \\
    \bottomrule
    \end{tabular}}
    \label{tab:nature-to-mri}
\end{table}
\begin{figure}[h!]
    \centering
    \begin{subfigure}[b]{0.16\textwidth}
        \includegraphics[width=\linewidth]{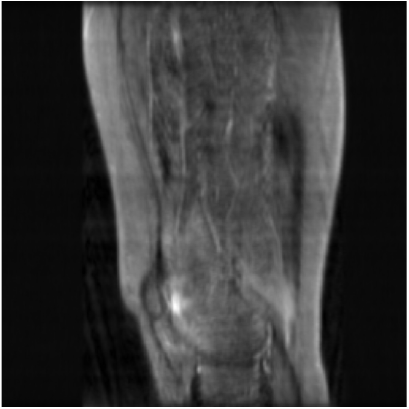}
    \end{subfigure}
    \hfill
    \begin{subfigure}[b]{0.16\textwidth}
        \includegraphics[width=\linewidth]{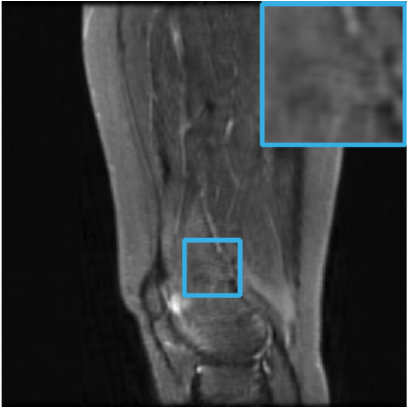}
    \end{subfigure}
    \hfill
    \begin{subfigure}[b]{0.16\textwidth}
        \includegraphics[width=\linewidth]{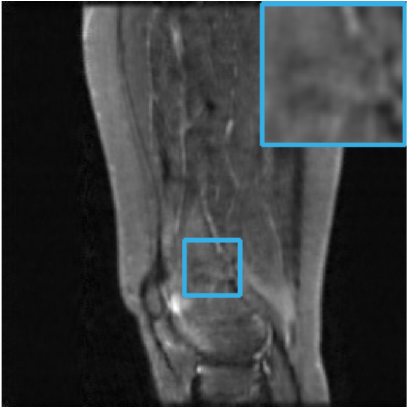}
    \end{subfigure}
    \hfill
    \begin{subfigure}[b]{0.16\textwidth}
        \includegraphics[width=\linewidth]{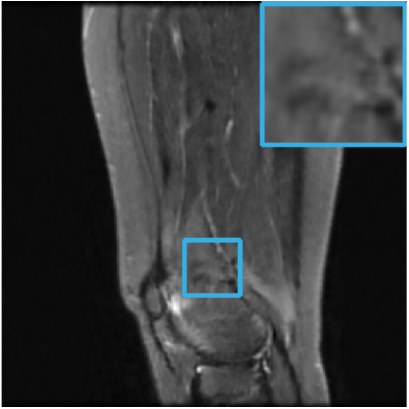}
    \end{subfigure}
    \hfill
    \begin{subfigure}[b]{0.16\textwidth}
        \includegraphics[width=\linewidth]{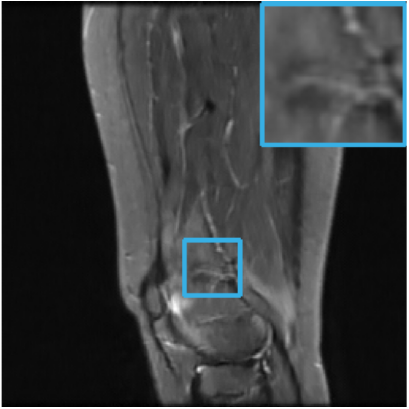}
    \end{subfigure}
    \hfill
    \begin{subfigure}[b]{0.16\textwidth}
        \includegraphics[width=\linewidth]{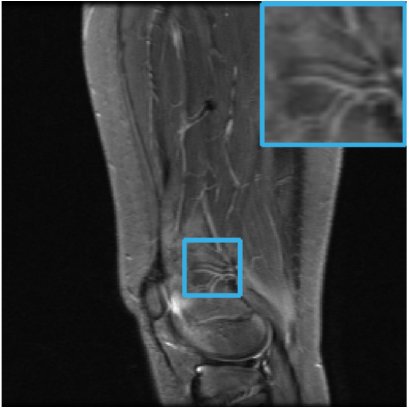}
    \end{subfigure}
    \begin{subfigure}[b]{0.16\textwidth}
        \includegraphics[width=\linewidth]{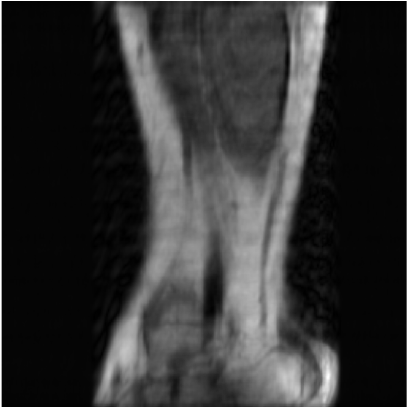}
    \end{subfigure}
    \hfill
    \begin{subfigure}[b]{0.16\textwidth}
        \includegraphics[width=\linewidth]{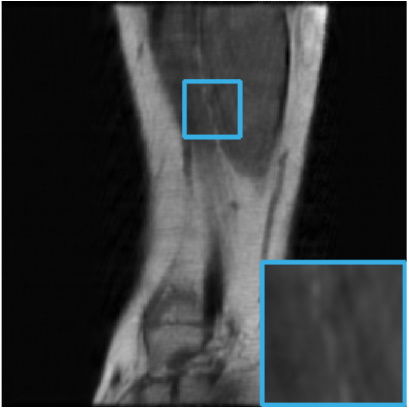}
    \end{subfigure}
    \hfill
    \begin{subfigure}[b]{0.16\textwidth}
        \includegraphics[width=\linewidth]{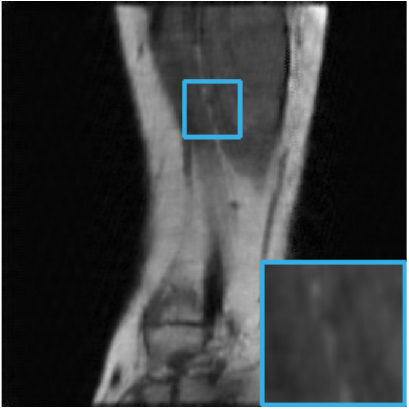}
    \end{subfigure}
    \hfill
    \begin{subfigure}[b]{0.16\textwidth}
        \includegraphics[width=\linewidth]{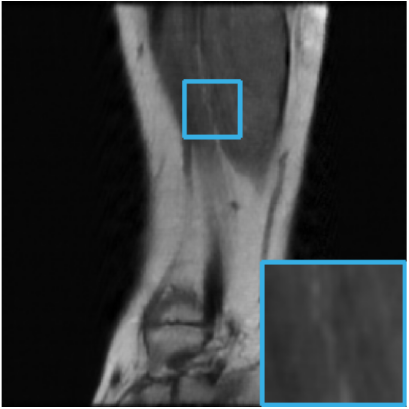}
    \end{subfigure}
    \hfill
    \begin{subfigure}[b]{0.16\textwidth}
        \includegraphics[width=\linewidth]{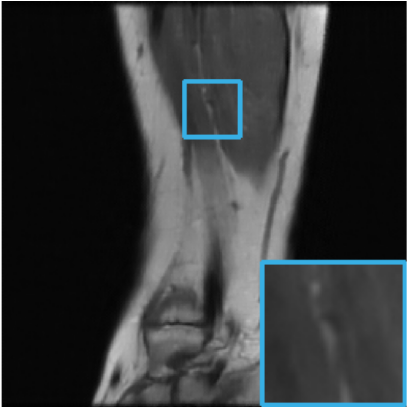}
    \end{subfigure}
    \hfill
    \begin{subfigure}[b]{0.16\textwidth}
        \includegraphics[width=\linewidth]{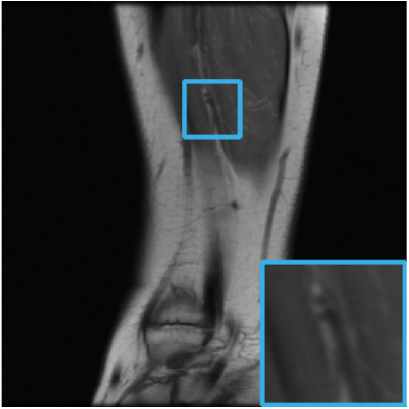}
    \end{subfigure}
    \begin{subfigure}[b]{0.16\textwidth}
        \includegraphics[width=\linewidth]{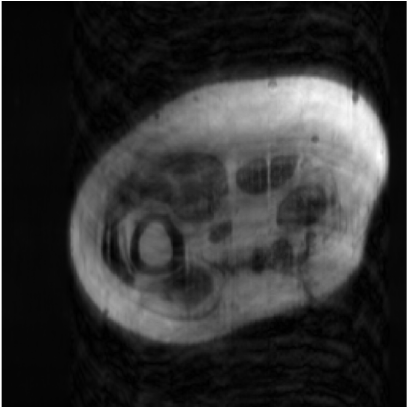}
    \end{subfigure}
    \hfill
    \begin{subfigure}[b]{0.16\textwidth}
        \includegraphics[width=\linewidth]{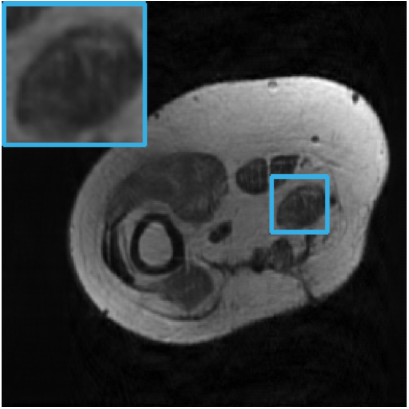}
    \end{subfigure}
    \hfill
    \begin{subfigure}[b]{0.16\textwidth}
        \includegraphics[width=\linewidth]{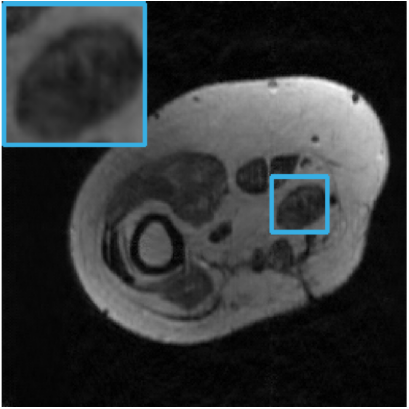}

    \end{subfigure}
    \hfill
    \begin{subfigure}[b]{0.16\textwidth}
        \includegraphics[width=\linewidth]{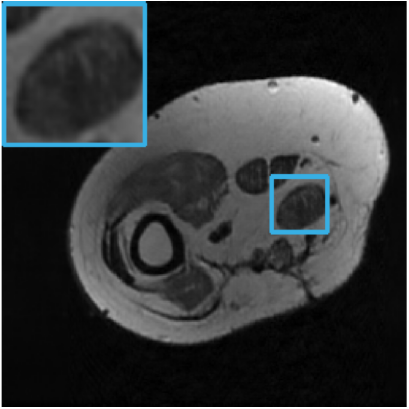}
    \end{subfigure}
    \hfill
    \begin{subfigure}[b]{0.16\textwidth}
        \includegraphics[width=\linewidth]{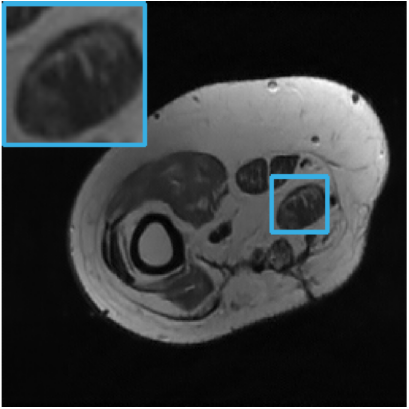}
    \end{subfigure}
    \hfill
    \begin{subfigure}[b]{0.16\textwidth}
        \includegraphics[width=\linewidth]{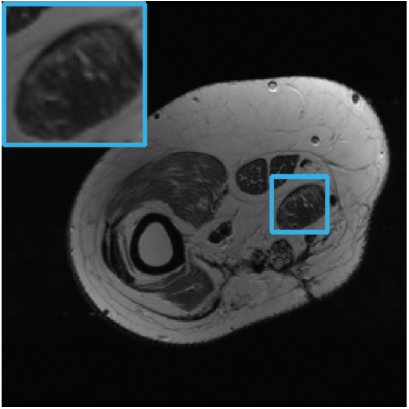}
    \end{subfigure}
    \begin{subfigure}[b]{0.16\textwidth}
        \includegraphics[width=\linewidth]{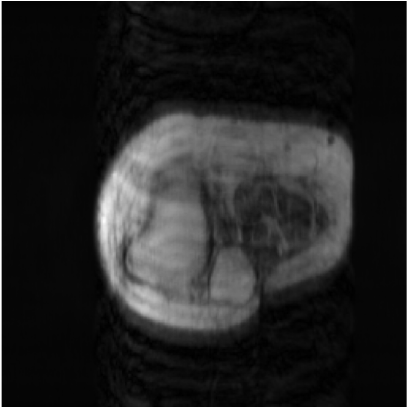}
        \caption{Zero-filling}
    \end{subfigure}
    \hfill
    \begin{subfigure}[b]{0.16\textwidth}
        \includegraphics[width=\linewidth]{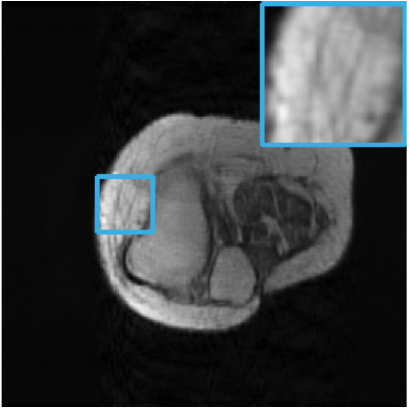}
        \caption{UNet}
    \end{subfigure}
    \hfill
    \begin{subfigure}[b]{0.16\textwidth}
        \includegraphics[width=\linewidth]{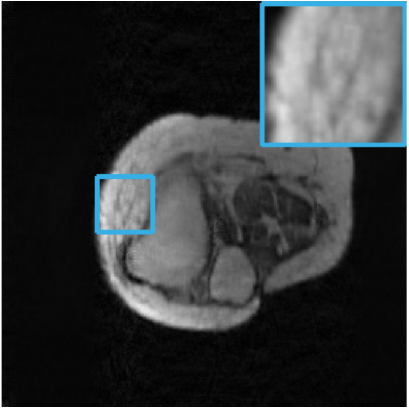}
        \caption{U-MRI}
    \end{subfigure}
    \hfill
    \begin{subfigure}[b]{0.16\textwidth}
        \includegraphics[width=\linewidth]{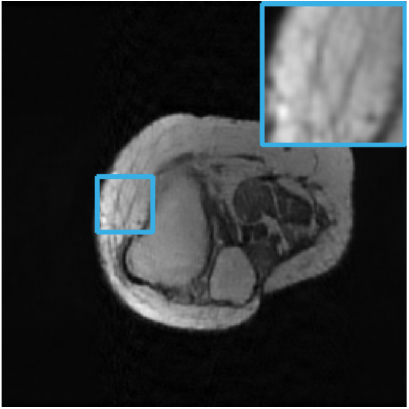}
        \caption{HUMUS}
    \end{subfigure}
    \hfill
    \begin{subfigure}[b]{0.16\textwidth}
        \includegraphics[width=\linewidth]{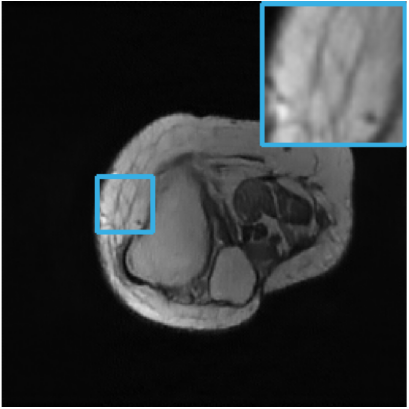}
        \caption{U-LDA}
    \end{subfigure}
    \hfill
    \begin{subfigure}[b]{0.16\textwidth}
        \includegraphics[width=\linewidth]{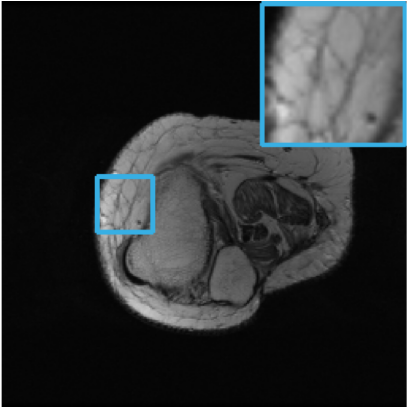}
        \caption{True}
    \end{subfigure}
    \caption{Comparison of cross-modality reconstruction results in Section \ref{subsubsec:cross-modality}. Knowledge is transferred from natural image datasets ImageNet and CIFAR-10 to medical image reconstruction. Top two rows show two instances of reconstructed images in the fastMRI dataset. Bottom rows show two instances of reconstructed images in the Stanford2D dataset. Blue squares zoom in the corresponding small squares in the images.}
    \label{fig:cross-modality}
\end{figure}

\subsection{Ablation study and discussions}
\label{subsec:ablation}

In this section, we conduct several ablation studies to further investigate the performance of the proposed method.

\paragraph{Pointwise absolute reconstruction error}
{We check the pointwise absolute errors of the reconstructed images in Sections \ref{subsubsec:cross-anatomy}, \ref{subsubsec:cross-sampling}, and \ref{subsubsec:cross-modality}.
In addition, we further zoom in the local region of the reconstructed images and see the reconstruction details. 
Specifically, we select one image from each of the experiments presented in Sections \ref{subsubsec:cross-anatomy}, \ref{subsubsec:cross-sampling}, and \ref{subsubsec:cross-modality}, corresponding to cross-anatomy, cross-sampling-ratio, and cross-modality reconstructions, respectively. 
We plot the zoom-in views and pointwise absolute errors of these images from their corresponding ground truth images Figure \ref{fig:zoom-absdiff-anatomy}. 
These plots indicate that the proposed U-LDA is capable of recovering fine structures without introducing notorious artifacts.
}
\begin{figure}[h!]
    \centering
    %
    \begin{subfigure}[b]{0.14\textwidth}
        \includegraphics[width=\linewidth]{figs/C_A_card/UNet_cardiac_1.pdf}
    \end{subfigure}
    \begin{subfigure}[b]{0.14\textwidth}
        \includegraphics[width=\linewidth]{figs/C_A_card/UMRI_cardiac_1.pdf}
    \end{subfigure}
    \begin{subfigure}[b]{0.14\textwidth}
        \includegraphics[width=\linewidth]{figs/C_A_card/HUMUS_cardiac_1.pdf}
    \end{subfigure}
    \begin{subfigure}[b]{0.14\textwidth}
        \includegraphics[width=\linewidth]{figs/C_A_card/ULDA_cardiac_1.pdf}
    \end{subfigure}
    \begin{subfigure}[b]{0.14\textwidth}
        \includegraphics[width=\linewidth]{figs/C_A_card/GT_cardiac_1.pdf}
    \end{subfigure}
    \\
    \begin{subfigure}[b]{0.14\textwidth}
        \includegraphics[width=\linewidth]{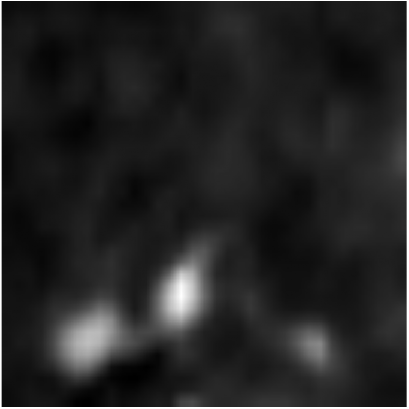}
    \end{subfigure}
    \begin{subfigure}[b]{0.14\textwidth}
        \includegraphics[width=\linewidth]{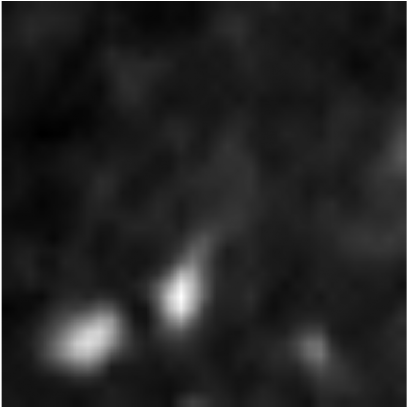}
    \end{subfigure}
    \begin{subfigure}[b]{0.14\textwidth}
        \includegraphics[width=\linewidth]{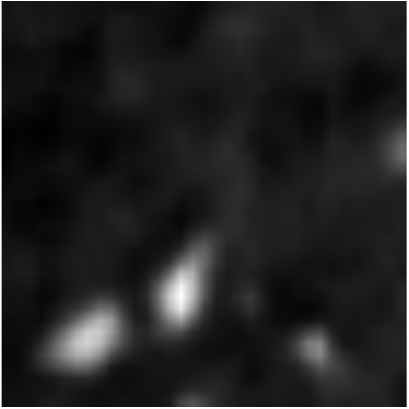}
    \end{subfigure}
    \begin{subfigure}[b]{0.14\textwidth}
        \includegraphics[width=\linewidth]{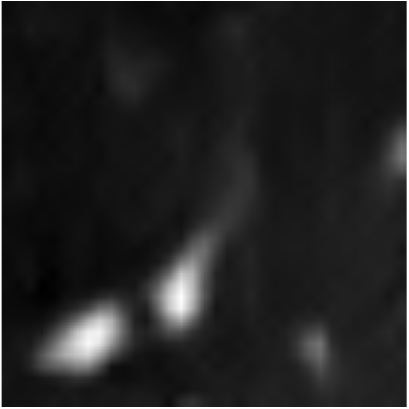}
    \end{subfigure}
    \begin{subfigure}[b]{0.14\textwidth}
        \includegraphics[width=\linewidth]{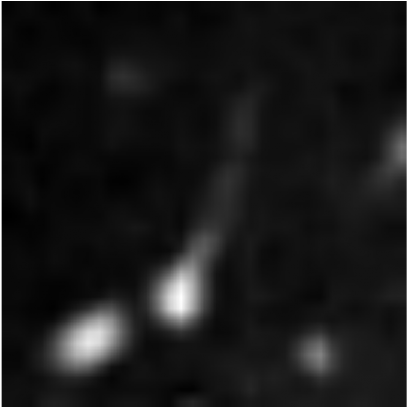}
    \end{subfigure}
    \\
    \begin{subfigure}[b]{0.14\textwidth}
        \includegraphics[width=\linewidth]{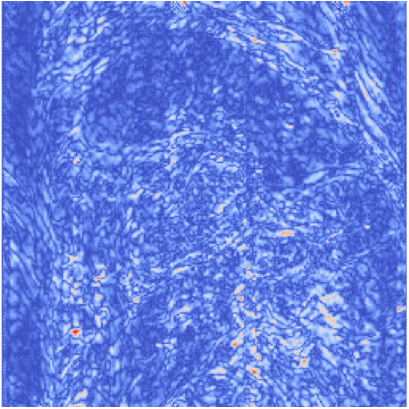}
    \end{subfigure}
    \begin{subfigure}[b]{0.14\textwidth}
        \includegraphics[width=\linewidth]{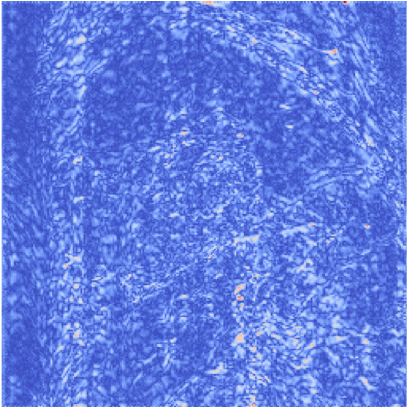}
    \end{subfigure}
    \begin{subfigure}[b]{0.14\textwidth}
        \includegraphics[width=\linewidth]{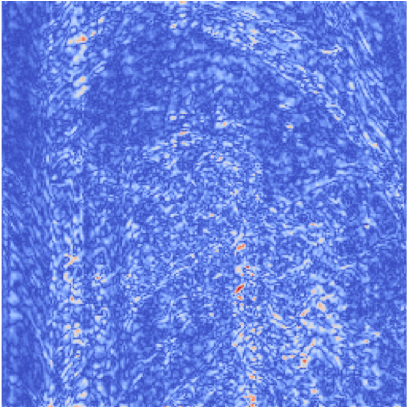}
    \end{subfigure}
    \begin{subfigure}[b]{0.14\textwidth}
        \includegraphics[width=\linewidth]{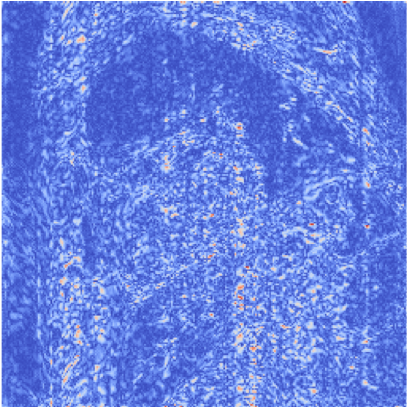}
    \end{subfigure}
    \begin{subfigure}[b]{0.14\textwidth}
        \phantom{\includegraphics[width=\linewidth]{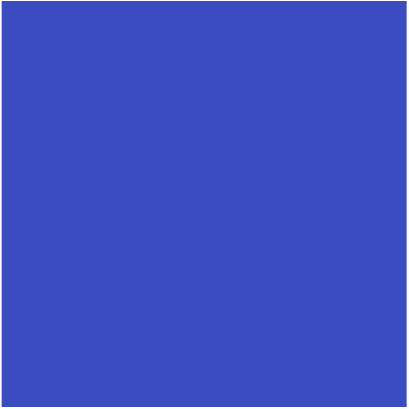}}
    \end{subfigure}
    \\
    \begin{subfigure}[b]{0.14\textwidth}
        \includegraphics[width=\linewidth]{figs/C_S_brain_6/UNet_brain_6.66_0.pdf}
    \end{subfigure}
    \begin{subfigure}[b]{0.14\textwidth}
        \includegraphics[width=\linewidth]{figs/C_S_brain_6/UMRI_brain_6.66_0.pdf}
    \end{subfigure}
    \begin{subfigure}[b]{0.14\textwidth}
        \includegraphics[width=\linewidth]{figs/C_S_brain_6/HUMUS_brain_6.66_0.pdf}
    \end{subfigure}
    \begin{subfigure}[b]{0.14\textwidth}
        \includegraphics[width=\linewidth]{figs/C_S_brain_6/ULDA_brain_6.66_0.pdf}
    \end{subfigure}
    \begin{subfigure}[b]{0.14\textwidth}
        \includegraphics[width=\linewidth]{figs/C_S_brain_6/GT_brain_6.66_0.pdf}
    \end{subfigure}
    \\
    \begin{subfigure}[b]{0.14\textwidth}
        \includegraphics[width=\linewidth]{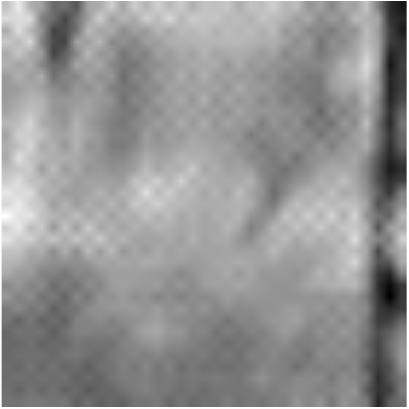}
    \end{subfigure}
    \begin{subfigure}[b]{0.14\textwidth}
        \includegraphics[width=\linewidth]{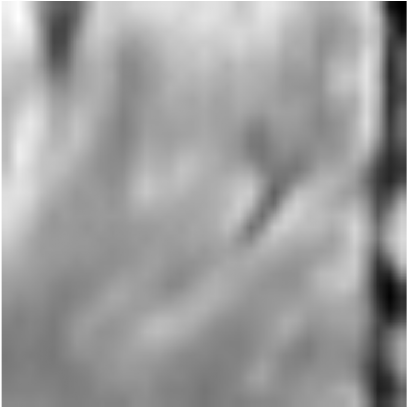}
    \end{subfigure}
    \begin{subfigure}[b]{0.14\textwidth}
        \includegraphics[width=\linewidth]{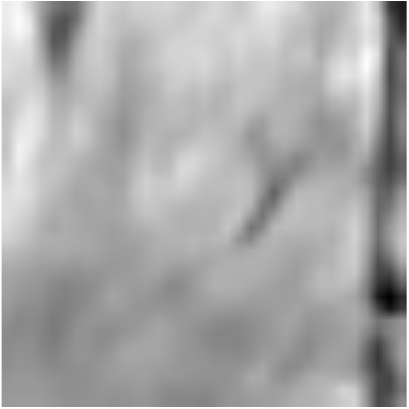}
    \end{subfigure}
    \begin{subfigure}[b]{0.14\textwidth}
        \includegraphics[width=\linewidth]{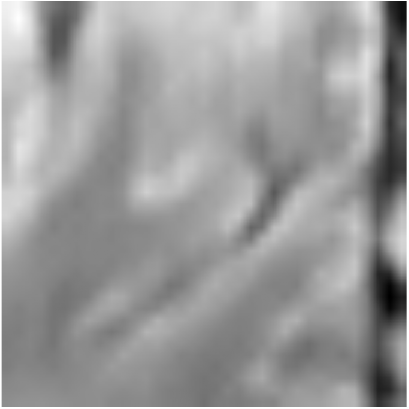}
    \end{subfigure}
    \begin{subfigure}[b]{0.14\textwidth}
        \includegraphics[width=\linewidth]{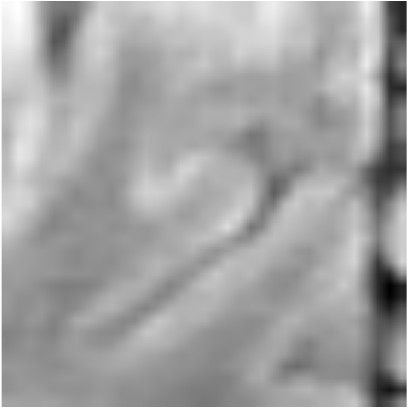}
    \end{subfigure}
    \\
    \begin{subfigure}[b]{0.14\textwidth}
        \includegraphics[width=\linewidth]{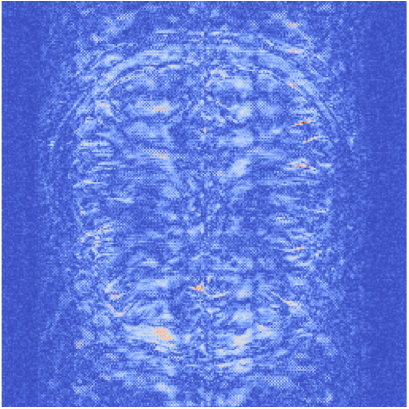}
    \end{subfigure}
    \begin{subfigure}[b]{0.14\textwidth}
        \includegraphics[width=\linewidth]{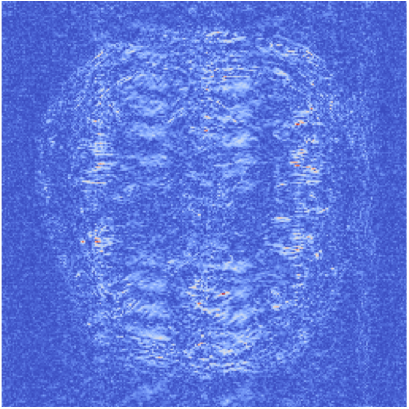}
    \end{subfigure}
    \begin{subfigure}[b]{0.14\textwidth}
        \includegraphics[width=\linewidth]{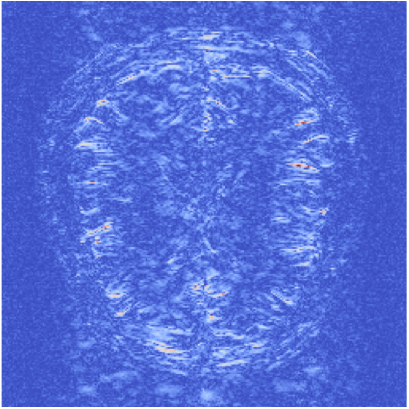}
    \end{subfigure}
    \begin{subfigure}[b]{0.14\textwidth}
        \includegraphics[width=\linewidth]{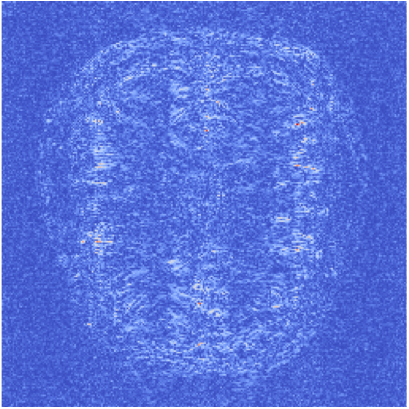}
    \end{subfigure}
    \begin{subfigure}[b]{0.14\textwidth}
        \phantom{\includegraphics[width=\linewidth]{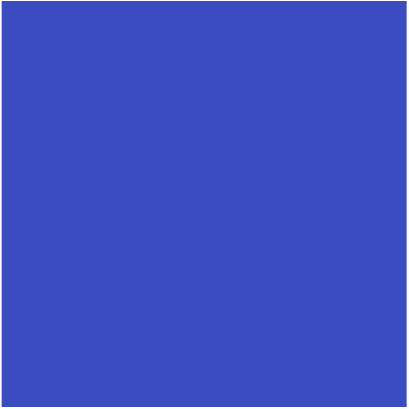}}
    \end{subfigure}
    \\
    \begin{subfigure}[b]{0.14\textwidth}
        \includegraphics[width=\linewidth]{figs/C_D_stanford/UNet_stanford_5_2.pdf}
    \end{subfigure}
    \begin{subfigure}[b]{0.14\textwidth}
        \includegraphics[width=\linewidth]{figs/C_D_stanford/UMRI_stanford_5_2.pdf}

    \end{subfigure}
    \begin{subfigure}[b]{0.14\textwidth}
        \includegraphics[width=\linewidth]{figs/C_D_stanford/HUMUS_stanford_5_2.pdf}
    \end{subfigure}
    \begin{subfigure}[b]{0.14\textwidth}
        \includegraphics[width=\linewidth]{figs/C_D_stanford/ULDA_stanford_5_2.pdf}
    \end{subfigure}
    \begin{subfigure}[b]{0.14\textwidth}
        \includegraphics[width=\linewidth]{figs/C_D_stanford/GT_stanford_5_2.pdf}
    \end{subfigure}
    \\
    \begin{subfigure}[b]{0.14\textwidth}
        \includegraphics[width=\linewidth]{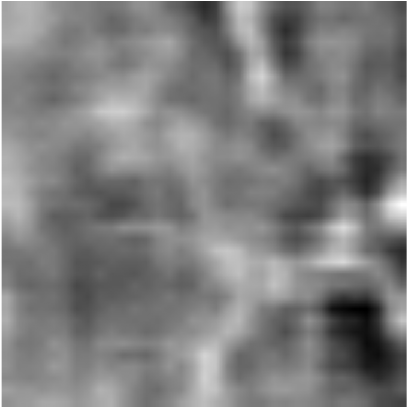}
    \end{subfigure}
    \begin{subfigure}[b]{0.14\textwidth}
        \includegraphics[width=\linewidth]{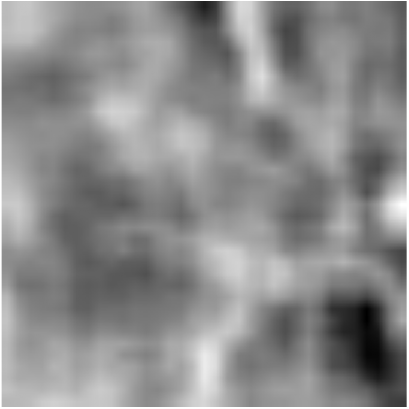}
    \end{subfigure}
    \begin{subfigure}[b]{0.14\textwidth}
        \includegraphics[width=\linewidth]{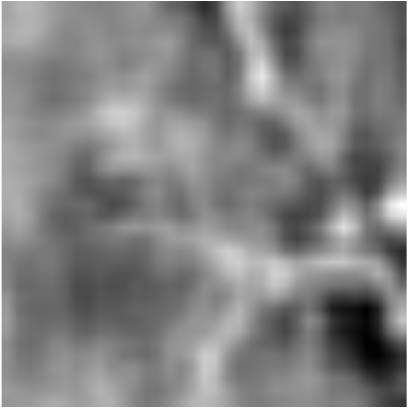}
    \end{subfigure}
    \begin{subfigure}[b]{0.14\textwidth}
        \includegraphics[width=\linewidth]{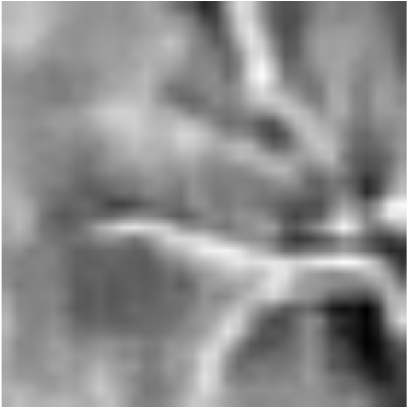}
    \end{subfigure}
    \begin{subfigure}[b]{0.14\textwidth}
        \includegraphics[width=\linewidth]{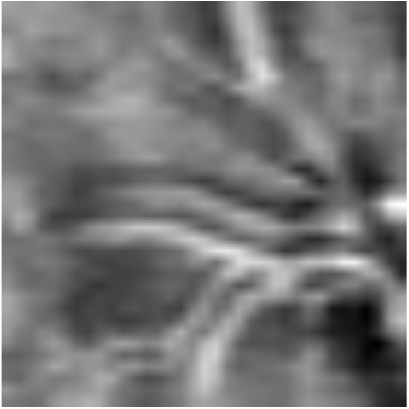}
    \end{subfigure}
    \\
    \begin{subfigure}[b]{0.14\textwidth}
        \includegraphics[width=\linewidth]{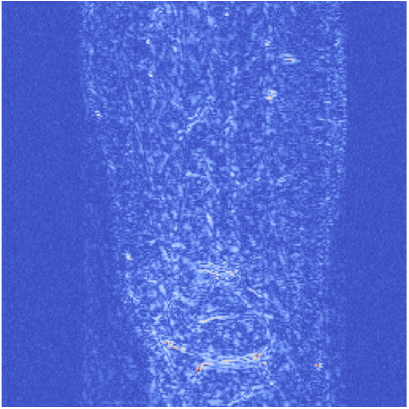}
        \caption{UNet}
    \end{subfigure}
    \begin{subfigure}[b]{0.14\textwidth}
        \includegraphics[width=\linewidth]{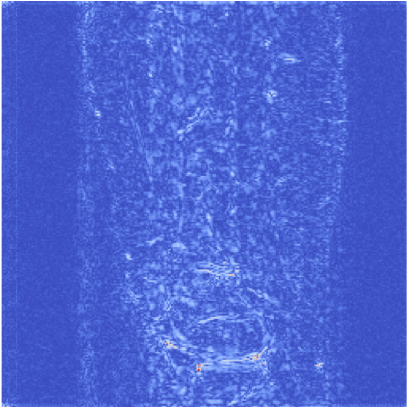}
        \caption{U-MRI}
    \end{subfigure}
    \begin{subfigure}[b]{0.14\textwidth}
        \includegraphics[width=\linewidth]{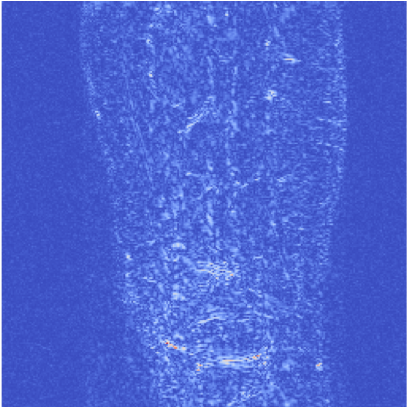}
        \caption{HUMUS}
    \end{subfigure}
    \begin{subfigure}[b]{0.14\textwidth}
        \includegraphics[width=\linewidth]{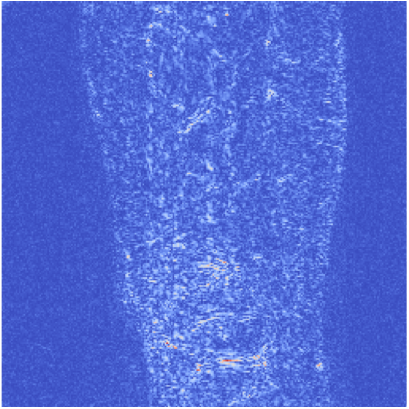}
        \caption{U-LDA}
    \end{subfigure}
    \begin{subfigure}[b]{0.14\textwidth}
        \phantom{\includegraphics[width=\linewidth]{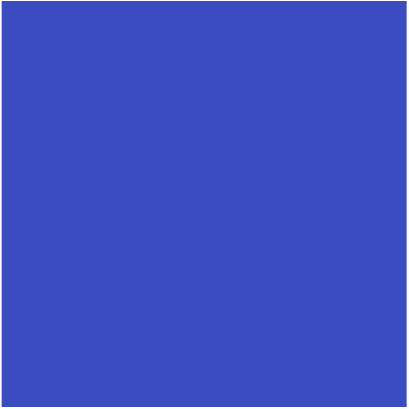}}
        \caption{True}
    \end{subfigure}
    \caption{Reconstructed images, zoom-in views, and pointwise absolute errors obtained by the compared methods on the cross-anatomy experiment (top three rows), cross-sampling-ratio experiment (middle three rows), and cross-modality experiment (bottom three rows).}
    \label{fig:zoom-absdiff-anatomy}
\end{figure}

\paragraph{Determining phase number}
To determine the phase number $T$ in U-LDA, we employ two statistical metrics: Normalized Cross-Correlation (NCC) \cite{sarvaiya2009image} and Central Kernel Alignment (CKA) \cite{kornblith2019similarity}. 
%
%
NCC measures the similarity between two signals, evaluating how well the features from the universal model correlate with those from independent models. CKA is a kernel-based method that quantifies the similarity between feature representations, capturing more complex relationships in high-dimensional spaces. 
Larger NCC and CKA imply stronger correlation and similarity.
%
%
To determine a proper $T$, we implement LDA which trains $\gbf_i$ for each $D_i$ and U-LDA with a universal feature-extractor $g$, and check the NCC and CKA between $\gbf_i$ and $\gbf$. 
When these metrics are large, we know that $\gbf_i$ and $\gbf$ have approximately reached agreement on the main image features. 
The results are shown in Figure \ref{fig:corr_anals}, where we plot the statistical metrics versus $T$. 
It appears that $T=15$ can serve as a proper phase number, which is the one we used in our other tests.
\begin{figure}[h!]
    \centering
    \begin{subfigure}[b]{\textwidth}
        \includegraphics[width=\linewidth]{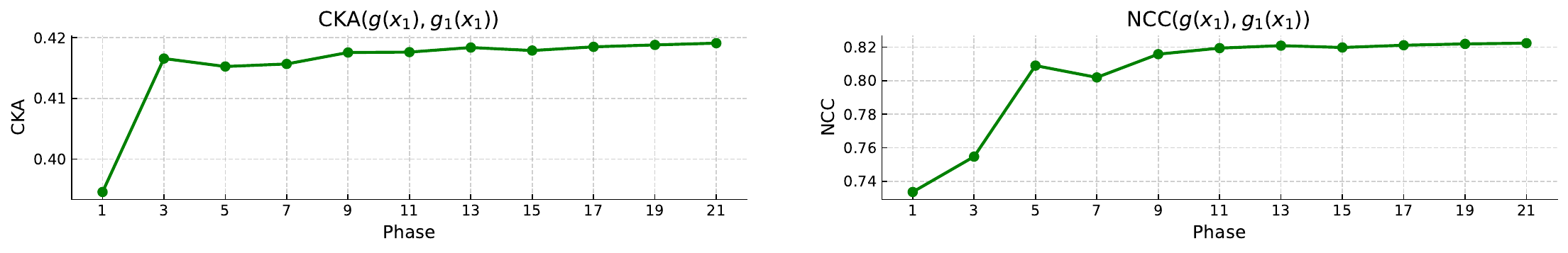}
    \end{subfigure}
    \begin{subfigure}[b]{\textwidth}
        \includegraphics[width=\linewidth]{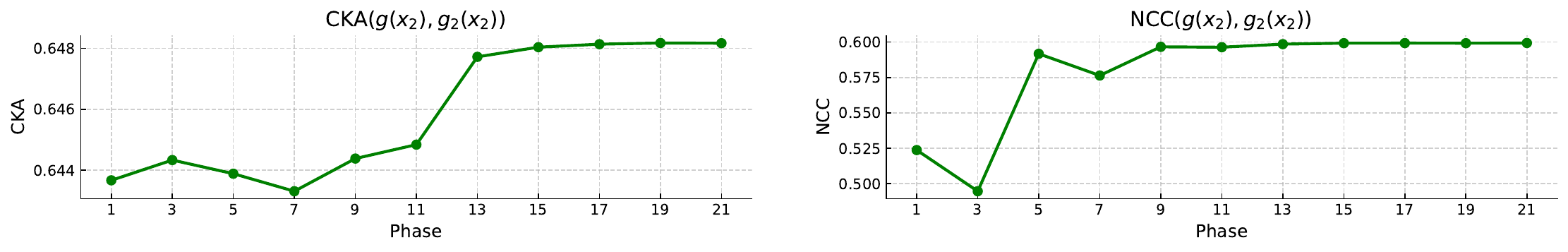}
    \end{subfigure}
    \caption{CKA (left column) and NCC (right column) between $\gbf_i$ trained by LDA on $D_i$ and $\gbf$ trained by U-LDA versus phase number $T$. Top and bottom rows show the results on brain ($\xbf_1$) and knee ($\xbf_2$) images, respectively.}
    \label{fig:corr_anals}
\end{figure}

\paragraph{Performance with various data sizes in target domains}

We also investigate how our proposed method performs with extremely limited data.
Specifically, we still consider the problems of cross-anatomy, cross-sampling ratio, and cross-modality as above, but test the performance of all methods compared with subsets of 5, 40, and 100 images randomly retrieved from $\hat{D}_1$ and $\hat{D}_2$. 
The results are presented in Figure \ref{fig:small-data}.
It is worth noting that our method (green bars) achieves the highest average reconstruction quality in PSNR in all tests. In particular, the PSNR obtained by U-LDA using only 5 images is higher than the PSNR achieved by the second-best method using 100 images.
\begin{figure}[t]
    \centering
    \begin{subfigure}[b]{0.85\textwidth}
    \includegraphics[width=\linewidth]{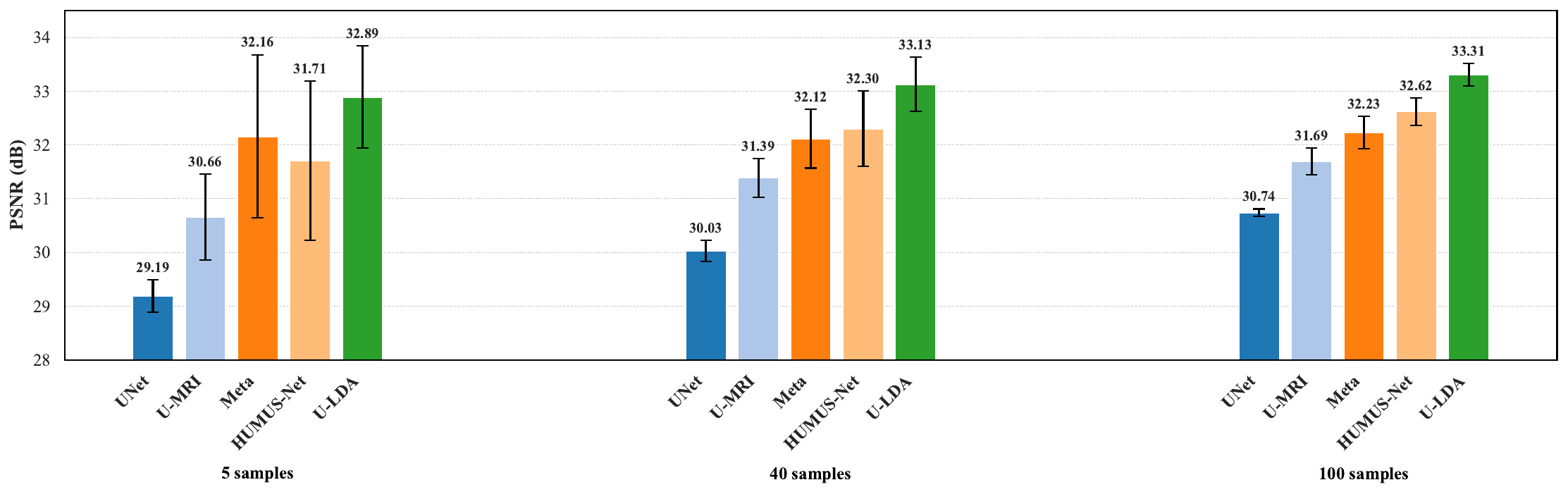}
    \caption{Cross-anatomy TL on target prostate images.}
    \end{subfigure}
    \begin{subfigure}[b]{0.85\textwidth}
        \includegraphics[width=\linewidth]{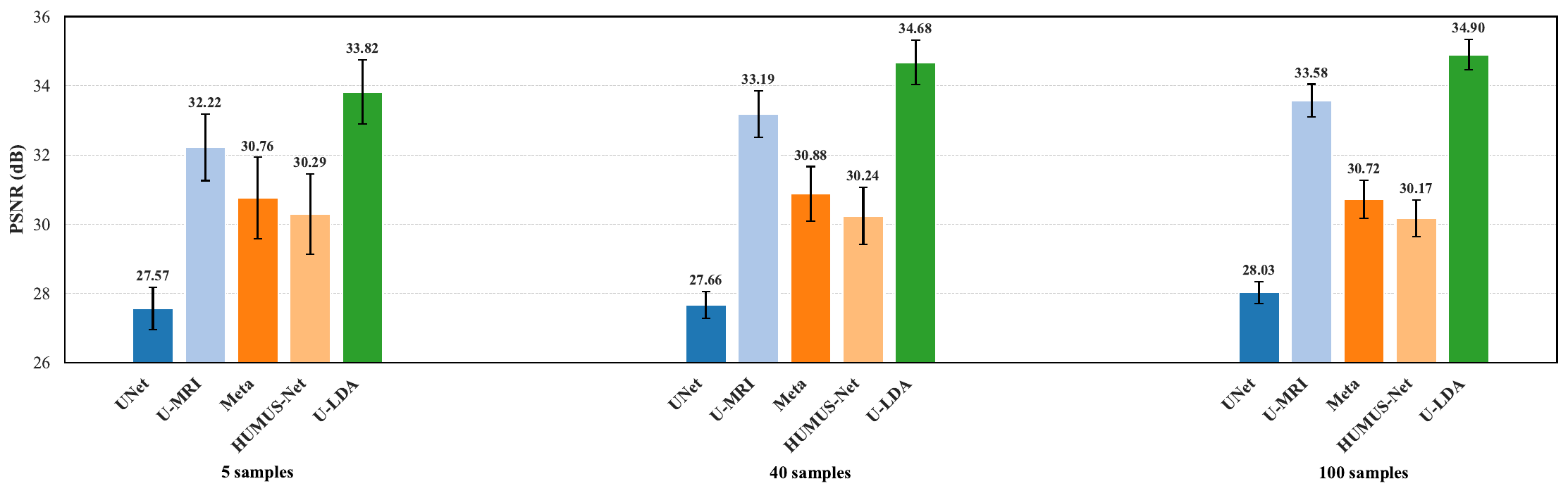}
        \caption{Cross-sampling-ratio on brain images of target 15\% sampling ratio.}
        \label{fig:smal-data-cross-sampling}
    \end{subfigure}
    \begin{subfigure}[b]{0.85\textwidth}
        \includegraphics[width=\linewidth]{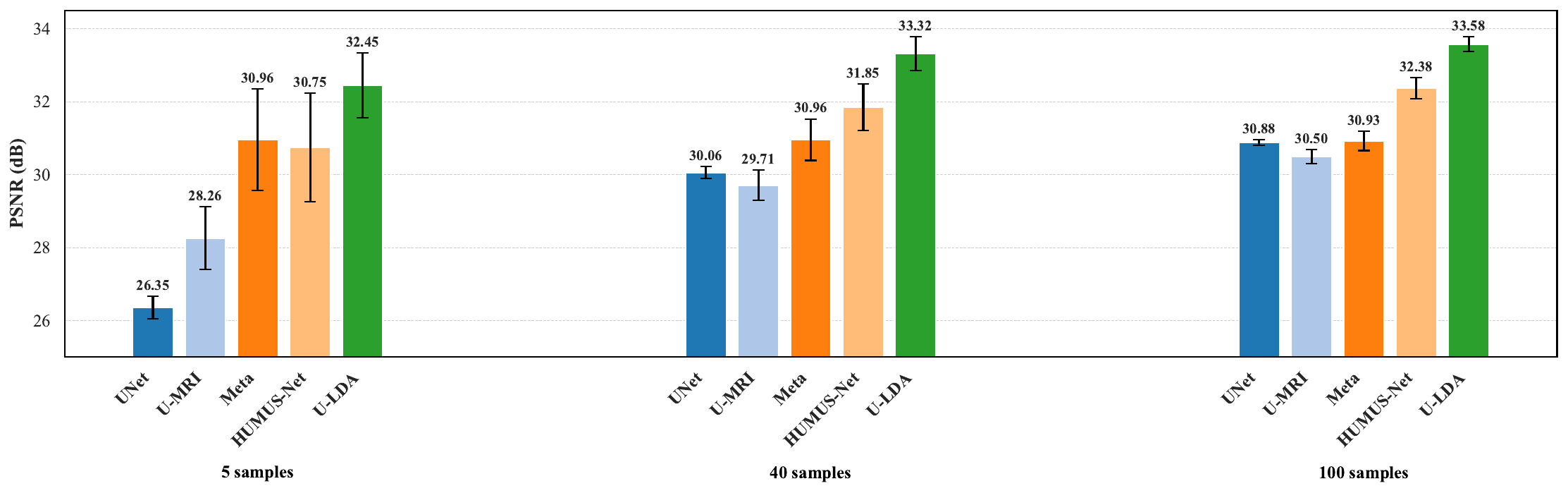}
        \caption{Cross-modality TL from natural images to target fastMRI images.}
        \label{fig:small-data-cross-modality}
    \end{subfigure}
    \caption{Average PSNR of reconstructed images with three different target domain training sample sizes (5, 40, and 100). Each bar represents the average PSNR and standard deviation (black interval) obtained by the corresponding method over five tests.}
    \label{fig:small-data}
\end{figure}
%

\paragraph{Technical component analysis}
We introduced three techniques to improve the reconstruction quality of our method, including using adapters (Adapter), initialization of $\gbf$ (Init), and data augmentation (AUG). 
We conduct several tests to check how they affect the empirical performance of U-LDA. The results are given in Table \ref{tab:ablation}. 
It appears that these techniques are indeed helpful to improve reconstruction quality in terms of higher PSNR values.
\begin{table}[h!]
\centering
\caption{PSNR of images reconstructed by U-LDA with ($\checkmark$) or without ($\times$) adapters (Adapter), initialization trick of $\gbf$ (Init), and data augmentation (AUG).}
\label{tab:ablation}
\centering
\begin{tabular}{cccccc}
\toprule
\multicolumn{3}{c}{Methods} & \multicolumn{3}{c}{Results (PSNR)} \\
\cmidrule(lr){1-3} \cmidrule(lr){4-6}
 {Adapter} & {Init} & {AUG} & {Cross-Anatomy} & {Cross-Sampling-Rate} & {Cross-Modality} \\
\midrule
$\times$ & $\times$ & $\times$ & 32.92 & 33.26 & 33.42 \\
$\checkmark$ & $\times$ & $\times$ & 32.84 & 33.48 &  33.45\\
$\checkmark$ & $\checkmark$ & $\times$ & 33.10 & 33.68 & 33.47 \\
$\checkmark$ & $\checkmark$ & $\checkmark$ & \textbf{33.31} & \textbf{34.90} & \textbf{33.58} \\
\bottomrule
\end{tabular}
\end{table}

\paragraph{Parameter and computation efficiency}
{We list the network sizes of the compared methods.
The standard UNet contains over $7\times10^6$ parameters.
DnCn has $2\times10^6$ to $5\times10^6$ parameters depending on the number of cascades.
HUMUS-Net has over $10^8$ parameters. 
By constrast, the universal feature extractor $\gbf$ in the proposed U-LDA has only 36{,}864 parameters, and each task-specific adapter $\hat{\hbf}_j$ has approximately 9,200 parameters.
The total number of parameters of U-LDA is typically smaller than $10^6$.}

{As all experiments are performed on the same hardware, we use wall clock hour to measure the computational costs of the compared methods. 
For each experiment, the dataset consists of 800--1200 images and is used by all compared methods.
The training time of UNet for 100 epochs requires two to three hours.
DnCn takes slightly more time due to its unrolled cascade structure. 
By contrast, HUMUS-Net is much more computationally expensive to train, and we limit its training to 50 epochs to maintain a comparable computational time. 
Meta requires 5 hours for 100 epochs of training.
The proposed U-LDA takes 0.5 hour for 100 epoch. 
It is worth pointing out that U-LDA only needs to train a small adapter for each transfer task and the training time is much smaller than the other methods in practice.
}

\paragraph{Hyperparameter analysis}
{There are two user-chosen hyperparameters in the proposed U-LDA method. The first one is the weight parameter $\alpha$ of the SSIM term in the training loss functions in \eqref{eq:solve-g-upper} and \eqref{eq:solve-h-upper}.
We repeat the cross-sampling experiment in Section \ref{subsubsec:cross-anatomy} with different values of $\alpha$, and present the average PSNR of images reconstructed by U-LDA in Table \ref{tab:alpha}.
}
\begin{table}[h!]
\centering
\caption{Average PSNR of images reconstructed by U-LDA using different values of SSIM weight parameter $\alpha$.}
\label{tab:alpha}
\begin{tabular}{cccc}
\toprule
$\alpha$ & 0.1 & 0.01 & 0.001\\
\midrule
PSNR & 36.45 & 37.88 & 37.69\\
\bottomrule
\end{tabular}
\end{table}

{The other hyperparameter is $\delta$ in the smoothed ReLU function. Larger $\delta$ yields more smoothed activation function (smaller Lipschitz constant of the gradient). We again repeat the cross-sampling experiment in Section \ref{subsubsec:cross-anatomy} with different values of $\delta$. The average PSNR of reconstructed images is shown in Table \ref{tab:delta}. 
}
\begin{table}[h!]
\centering
\caption{Average PSNR of images reconstructed by U-LDA using different values of smoothing level $\delta$ in the smoothed ReLU activation.}
\label{tab:delta}
\begin{tabular}{ccccc}
\toprule
$\delta$ & 0.0001 & 0.001 & 0.01 & 0.1 \\
\midrule
PSNR & 27.32 & 37.88 & 37.79 & 37.43  \\
\bottomrule
\end{tabular}
\end{table}

\paragraph{Current limitations}
{The proposed method is only tested on several public image datasets. It is unclear how it performs in real-world applications where additional restrictions, such as data availability and privacy, may present. From the algorithm perspective, we can only guarantee convergence to Clarke stationary points of the optimization problems. This does not ensure global optimality as the objective functions of consideration are non-convex. }

\section{Conclusions}
\label{sec:conclusion}

In this work, we proposed a unified transfer learning framework by integrating variational modeling, bi-level optimization, and unrolling network technique. Our method learns a powerful feature-extractor from large and heterogeneous data and adapters for problems with small data. The feature-extractor and adapters are learned through two specifically designed bi-level optimization problems, which are solved by a modified efficient learnable descent algorithm with provable convergence and iteration complexity. We tested our method, called U-LDA, to a variety of transfer learning problems related to MR image reconstruction. Numerical results demonstrate promising performance of U-LDA in solution quality, transfer ability, network efficiency in all the tests.

\section*{Acknowledgments}
This research is supported in part by National Science Foundation under grants DMS-2152960, DMS-2152961, DMS-2307466, DMS-2409868, and DMS-2510830.

\bibliographystyle{abbrv}
\bibliography{ref_tl}

\end{document}